\newcolumntype{L}[1]{>{\raggedright\let\newline\\\arraybackslash\hspace{0pt}}m{#1}}
\newcolumntype{C}[1]{>{\centering\let\newline\\\arraybackslash\hspace{0pt}}m{#1}}
\newcolumntype{R}[1]{>{\raggedleft\let\newline\\\arraybackslash\hspace{0pt}}m{#1}}
\newcolumntype{K}[1]{>{\centering\let\newline\\\arraybackslash\hspace{0pt}}p{#1}}
\DeclareMathAlphabet{\mathdcal}{U}{dutchcal}{m}{n}
\pgfplotsset{compat=1.18}
\pgfplotsset{%
  cycle list/Set1,
  cycle multiindex* list={
    mark list*\nextlist
    Set1\nextlist
  },
}
\pgfplotsset{%
  cycle list/PuRd-9,
  cycle multiindex* list={
    mark list*\nextlist
    PuRd-9\nextlist
  },
}
\pgfplotsset{%
  cycle list/YlGnBu-9,
  cycle multiindex* list={
    mark list*\nextlist
    YlGnBu-9\nextlist
  },
}
\pgfplotsset{%
  cycle list/Reds-9,
  cycle multiindex* list={
    mark list*\nextlist
    YlGnBu-9\nextlist
  },
}
\pgfplotsset{%
  cycle list/YlOrRd-9,
  cycle multiindex* list={
    mark list*\nextlist
    YlOrRd-9\nextlist
  },
}
\pgfplotsset{/pgfplots/colormap={hot2}{[1cm]rgb255(0cm)=(0,0,0)rgb255(3cm)=(255,0,0)rgb255(6cm)=(255,255,0)rgb255(8cm)=(255,255,255)}
}
\pgfplotsset{%
  /pgfplots/ybar legend new/.style={
    /pgfplots/legend image code/.code={
      \draw[##1,/tikz/.cd,bar width=3pt,yshift=-0.3em,bar shift=0pt]
        plot coordinates {(0cm,0.6em) (2*\pgfplotbarwidth,0.4em)};
    },
  },
}
\pgfplotsset{%
  /pgfplots/const legend/.style={
    /pgfplots/legend image code/.code={
      \draw[mark repeat=2,mark phase=2,line width=1pt,##1]
        plot coordinates {
          (0cm,-0.1cm)
          (0cm,0.0cm)
          (0.1cm,0.1cm)
          (0.2cm,0.1cm)
        };
    },
  },
}
\renewcommand{\tikzexternalenable}{\tikzexternaldisable}
\definecolor{Colors-A}{RGB}{230,159,0}  
\definecolor{Colors-C}{RGB}{86,180,233}  
\definecolor{Colors-B}{RGB}{0,158,115}  
\definecolor{Colors-D}{RGB}{213,94,0}  
\definecolor{Colors-E}{RGB}{204,121,167}  
\definecolor{Colors-F}{RGB}{0,114,178}  
\definecolor{Colors-G}{RGB}{240,228,66}  
\definecolor{Input-Color}{RGB}{229,158,197}
\definecolor{Param-Color}{RGB}{239,231,119}
\definecolor{Output-Color}{RGB}{141,176,252}
\colorlet{Class1}{Colors-A}
\colorlet{Class2}{Colors-B}
\definecolor{loss-color}{HTML}{045A8D}
\definecolor{alternative-loss-color}{HTML}{A50F15}
\definecolor{success-color}{HTML}{31A354}
\definecolor{failure-color}{HTML}{A50F15}
\definecolor{unknown-color}{HTML}{08519C}
\definecolor{markup-color}{HTML}{195CA8}
\definecolor{VersusComOurs}{RGB}{255,88,51}
\definecolor{VersusComTheirs}{RGB}{60,89,252}
\DeclareRobustCommand{\ms}{
  \mathrel{\mathpalette\short@leftarrow\relax}%
}
\newcommand{\short@leftarrow}[2]{%
  \mkern2mu
  \clipbox{0 0 {.5\width} 0}{$\m@th#1\vphantom{+}{\boldsymbol{\leftarrow}}$}%
} 
\newcommand\transp[1]{{
    #1^{\mkern-1.5mu\mathsf{T}}
}}
\newcommand{\Reals}{\mathbb{R}}
\newcommand{\RealsNonNeg}{\mathbb{R}_{\geq{} 0}}
\newcommand{\Nats}{\mathbb{N}}
\newcommand{\NatsZero}{\mathbb{N}_{0}}
\DeclareMathOperator*{\argmax}{arg\,max}
\DeclareMathOperator*{\vol}{vol}
\DeclareMathOperator*{\diag}{diag}
\renewcommand{\vec}[1]{\mathbf{#1}}
\newcommand{\varvec}[1]{\boldsymbol{#1}}
\newcommand{\mat}[1]{\mathbf{#1}}
\newcommand{\powerset}[1]{2^{#1}}
\newcommand{\ka}{\mathdcal{k}}
\NewDocumentCommand{\squarebracketed}{m}{%
  {%
    \mathchoice%
    {\left[#1\right]}%
    {[#1]}%
    {\left[#1\right]}%
    {\left[#1\right]}%
  }
}
\NewDocumentCommand{\inp}{o}{%
  \vec{x}^{\IfNoValueF{#1}{(#1)}}
}%
\NewDocumentCommand{\NN}{o}{%
  \mathsf{net}_{\IfNoValueF{#1}{#1}}%
}%
\NewDocumentCommand{\fSAT}{s o}{%
  \IfBooleanTF{#1}{g}{f}_{\mathrm{Sat}}^{\IfNoValueF{#2}{(#2)}}%
}%
\NewDocumentCommand{\fSATprime}{s}{%
  \IfBooleanTF{#1}{g}{f}'_{\mathrm{Sat}}%
}%
\NewDocumentCommand{\prob}{o o d()}{%
  \mathbb{P}_{\IfNoValueF{#1}{#1}}^{\IfNoValueF{#2}{(#2)}}\IfNoValueF{#3}{\squarebracketed{#3}}%
}%
\NewDocumentCommand{\probdensity}{o o d()}{%
  f_{\IfNoValueF{#1}{#1}}^{\IfNoValueF{#2}{(#2)}}\IfNoValueF{#3}{\squarebracketed{#3}}%
}%
\NewDocumentCommand{\pterm}{s o}{%
  \prob[\inp[#2]](\fSAT*[#2](\inp[#2], \NN(\inp[#2])) \IfBooleanTF{#1}{>}{\geq} 0)%
}
\NewDocumentCommand{\parg}{s o}{%
  \fSAT*[#2](\inp[#2], \NN(\inp[#2])) \IfBooleanTF{#1}{>}{\geq} 0%
}
\NewDocumentCommand{\pfun}{o}{%
  \fSAT*[#1](\inp[#1], \NN(\inp[#1]))%
}
\NewDocumentCommand{\Xsat}{s s o d<>}{%
  {\IfBooleanTF{#1}{\IfBooleanTF{#2}{\tilde{\mathcal{X}}}{\hat{\mathcal{X}}}}{\mathcal{X}}}_{sat}^{\IfNoValueF{#3}{(#3)}\IfNoValueF{#4}{#4}}%
}
\NewDocumentCommand{\Xsatp}{s}{%
  {\IfBooleanTF{#1}{\hat{\mathcal{X}}}{\mathcal{X}}}_{sat}'%
}
\NewDocumentCommand{\Xviol}{s s o d<>}{%
  {\IfBooleanTF{#1}{\IfBooleanTF{#2}{\tilde{\mathcal{X}}}{\hat{\mathcal{X}}}}{\mathcal{X}}}_{viol}^{\IfNoValueF{#3}{(#3)}\IfNoValueF{#4}{#4}}%
}
\NewDocumentCommand{\Xviolp}{s}{%
  {\IfBooleanTF{#1}{\hat{\mathcal{X}}}{\mathcal{X}}}_{viol}'%
}
\NewDocumentCommand{\branch}{o}{%
  \mathcal{B}_{\IfNoValueF{#1}{#1}}%
}
\NewDocumentCommand{\HR}{o m}{%
  [\underline{#2}\IfNoValueF{#1}{#1}, \overline{#2}\IfNoValueF{#1}{#1}]%
}
\NewDocumentCommand{\pospart}{m}{%
  \squarebracketed{#1}^+
}
\NewDocumentCommand{\negpart}{m}{%
  \squarebracketed{#1}^-
}
\NewDocumentCommand{\ReLU}{m}{\pospart{#1}}
\NewDocumentCommand{\ulvec}{m}{\underline{\vec{#1}}}
\NewDocumentCommand{\olvec}{m}{\overline{\vec{#1}}}
\NewDocumentCommand{\ulvarvec}{m}{\underline{\varvec{#1}}}
\NewDocumentCommand{\olvarvec}{m}{\overline{\varvec{#1}}}
\NewDocumentCommand{\ulmat}{m}{\underline{\mat{#1}}}
\NewDocumentCommand{\olmat}{m}{\overline{\mat{#1}}}
\NewDocumentCommand{\ulx}{}{\underline{\inp}}
\NewDocumentCommand{\olx}{}{\overline{\inp}}
\NewDocumentCommand{\uly}{}{\ulvec{y}}
\NewDocumentCommand{\oly}{}{\olvec{y}}
\NewDocumentCommand{\ulz}{}{\ulvec{z}}
\NewDocumentCommand{\olz}{}{\olvec{z}}
\NewDocumentCommand{\ulv}{}{\ulvec{v}}
\NewDocumentCommand{\olv}{}{\olvec{v}}
\NewDocumentCommand{\ulw}{}{\ulvec{w}}
\NewDocumentCommand{\olw}{}{\olvec{w}}
\NewDocumentCommand{\uld}{}{\ulvec{d}}
\NewDocumentCommand{\old}{}{\olvec{d}}
\NewDocumentCommand{\ulalpha}{}{\ulvarvec{\alpha}}
\NewDocumentCommand{\olalpha}{}{\olvarvec{\alpha}}
\NewDocumentCommand{\olbeta}{}{\olvarvec{\beta}}
\NewDocumentCommand{\ulA}{}{\ulmat{A}}
\NewDocumentCommand{\olA}{}{\olmat{A}}
\newcommand{\legendcolorbox}[1]{{%
  \begin{tikzpicture}%
    \draw[fill=#1,draw=black,very thin] (0cm,-0.1cm) rectangle (0.2cm,0.1cm);%
  \end{tikzpicture}%
}}
\newcommand{\legendmixedline}[2]{{%
  \begin{tikzpicture}%
    \draw[line width=6pt, #1] (0.0cm,0.1cm) -- (0.4cm,0.0cm);
    \begin{scope}[overlay]
      \clip (-0.2cm, -0.2cm) -- (-0.2cm,0.1cm) -- (0.0cm,0.1cm) -- (0.4cm,0.0cm) -- (0.6, 0.0cm) -- (0.6cm,-0.2cm) -- cycle;
      \draw[line width=7pt, #2] (0.0cm,0.1cm) -- (0.4cm,0.0cm); 
    \end{scope}
    \draw[black,thick] (0cm,0.1cm) -- (0.4cm,0.0cm);
  \end{tikzpicture}%
}}
\newcommand{\exsuccessnc}{\scaleobj{1.2}{\boldsymbol{\checkmark}}}
\newcommand{\exfailurenc}{\scaleobj{1.15}{\times}}
\newcommand{\exsuccess}{\textcolor{success-color}{\exsuccessnc}}
\newcommand{\exfailure}{\textcolor{failure-color}{\exfailurenc}}
\newcommand{\exunknown}{\scaleobj{0.9}{?}}
\NewDocumentCommand{\AlgorithmName}{m}{\textrm{\textsc{#1}}}
\NewDocumentCommand{\ToolName}{s}{\IfBooleanTF{#1}{\AlgorithmName{ProbabilisticVerification} (\AlgorithmName{PV})}{\AlgorithmName{PV}}}
\NewDocumentCommand{\ProbBounds}{}{\AlgorithmName{ProbBounds}}
\NewDocumentCommand{\IntervalArithmetic}{s}{\AlgorithmName{IntervalArithmetic}\IfBooleanT{#1}{~\cite{MooreKearfottCloud2009}}}
\NewDocumentCommand{\CROWN}{s}{\AlgorithmName{CROWN}\IfBooleanT{#1}{~\cite{ZhangWengChenEtAl2018}}}
\NewDocumentCommand{\BaBSB}{s}{\AlgorithmName{BaBSB}\IfBooleanT{#1}{~\cite{BunelLuTurkaslanEtAl2020}}}
\NewDocumentCommand{\FairSquare}{}{\AlgorithmName{FairSquare}}
\NewDocumentCommand{\ProVeSLR}{}{\textrm{\textsc{ProVe\_SLR}}}
\NewDocumentCommand{\eProVe}{}{\(\varepsilon\)\textrm{\textsc{-ProVe}}}
\NewDocumentCommand{\SpaceScanner}{}{\AlgorithmName{SpaceScanner}}
\NewDocumentCommand{\PreimgApprox}{}{\AlgorithmName{PreimgApprox}}
\NewDocumentCommand{\Select}{}{\AlgorithmName{Select}}
\NewDocumentCommand{\Prune}{}{\AlgorithmName{Prune}}
\NewDocumentCommand{\ComputeBounds}{}{\AlgorithmName{ComputeBounds}}
\NewDocumentCommand{\Split}{}{\AlgorithmName{Split}}
\NewDocumentCommand{\Prob}{}{\AlgorithmName{SelectProb}}
\NewDocumentCommand{\LongestEdge}{}{\AlgorithmName{LongestEdge}}
\NewDocumentCommand{\CIE}{s}{\IfBooleanTF{#1}{\AlgorithmName{CROWN Interval Extension} (\AlgorithmName{CIE})}{\AlgorithmName{CIE}}}
\NewDocumentCommand{\BaBSBLongestEdge}{m}{\AlgorithmName{BaBSB-LongestEdge}-#1}
\NewDocumentCommand{\textcite}{m}{\citet{#1}}%
\NewDocumentCommand{\Textcite}{m}{\Citet{#1}}%
\theoremstyle{plain}
\newtheorem{theorem}{Theorem}[section]
\newtheorem{proposition}[theorem]{Proposition}
\newtheorem{lemma}[theorem]{Lemma}
\newtheorem{corollary}[theorem]{Corollary}
\theoremstyle{definition}
\newtheorem{defn}[theorem]{Definition}
\newtheorem{assumption}[theorem]{Assumption}
\newtheorem{example}[theorem]{Example}
\newtheorem*{example*}{Example}
\theoremstyle{remark}
\icmltitlerunning{%
  Solving Probabilistic Verification Problems of Neural Networks using~Branch~and~Bound%
}
\begin{document}

\twocolumn[
\icmltitle{%
  Solving Probabilistic Verification Problems of Neural Networks using~Branch~and~Bound%
}




\begin{icmlauthorlist}
\icmlauthor{David Boetius}{kn}
\icmlauthor{Stefan Leue}{kn}
\icmlauthor{Tobias Sutter}{kn}
\end{icmlauthorlist}

\icmlaffiliation{kn}{Department of Computer and Information Science, University of Konstanz, Konstanz, Baden-Würtemberg, Germany}

\icmlcorrespondingauthor{David Boetius}{david.boetius@uni-konstanz.de}

\icmlkeywords{Machine Learning, ICML, Safe Machine Learning, Trustworthy Machine Learning, Fairness, Deep Learning, Fair Machine Learning, Probabilistic Verification, Neural Network Verification, Formal Verification, Probabilistic Specification, Branch and Bound, FairSquare}

\vskip 0.3in
]



\printAffiliationsAndNotice{}  

\begin{abstract}
  Probabilistic verification problems of neural networks are concerned with formally analysing the output distribution of a neural network under a probability distribution of the inputs.
  Examples of probabilistic verification problems include verifying the demographic parity fairness notion or quantifying the safety of a neural network.
  We present a new algorithm for solving probabilistic verification problems of neural networks based on an algorithm for computing and iteratively refining lower and upper bounds on probabilities over the outputs of a neural network.
  By applying state-of-the-art bound propagation and branch and bound techniques from non-probabilistic neural network verification, our algorithm significantly outpaces existing probabilistic verification algorithms, reducing solving times for various benchmarks from the literature from tens of minutes to tens of seconds.
  Furthermore, our algorithm compares favourably even to dedicated algorithms for restricted probabilistic verification problems. 
  We complement our empirical evaluation with a theoretical analysis, proving that our algorithm is sound and, under mildly restrictive conditions, also complete when using a suitable set of heuristics.
\end{abstract}

\section{Introduction}\label{sec:intro}
As deep learning spreads through society, it becomes increasingly important to ensure the reliability of artificial neural networks, including aspects of fairness and safety.
However, manually introspecting neural networks is infeasible due to their non-transparent nature.
Furthermore, empirical assessments of neural networks are challenged by neural networks being fragile with respect to various types of input perturbations~\cite{SzegedyZarembaSutskeverEtAl2014,HosseiniXiaoPoovendran2017,BibiAlfadlyGhanem2018,EbrahimiRaoLowdEtAl2018,HendrycksZhaoBasartEtAl2021}.
In contrast, neural network verification analyses neural networks with mathematical rigour, facilitating the faithful auditing of neural networks.

In this paper, we consider probabilistic verification problems of neural networks, which are concerned with proving statements about the output distribution of a neural network given a distribution of the inputs.
We refer to solving probabilistic verification problems as \emph{probabilistic verification}.
An example of probabilistic verification is proving that a neural network~\(\NN\) making a binary decision affecting a person (for example, hire/do not hire, credit approved/denied) satisfies the demographic parity fairness notion~\cite{BarocasHardtNarayanan2023} under a probability distribution~\(\prob[\inp]\) of the network inputs~\(\inp\) representing the person
\begin{equation}
  \frac{%
    \prob[\inp](\NN(\inp) = \texttt{yes} \mid \inp\ \text{is disadvantaged})
  }{%
    \prob[\inp](\NN(\inp) = \texttt{yes} \mid \inp\ \text{is advantaged})
  } \geq \gamma,\label{eqn:demographic-parity-intro}
\end{equation}
where \enquote{\(\inp\ \text{is disadvantaged}\)} could, for example, refer to a person not being male and~\(\gamma \in [0, 1]\) with~\(\gamma = 0.8\) being a common choice~\cite{FeldmanFriedlerMoellerEtAl2015}.
A closely related problem to probabilistic verification is computing bounds on probabilities over a neural network.
An example of this is quantifying the safety of a neural network by bounding
\begin{equation}
  \prob[\vec{x}](\NN(\vec{x})\ \text{is unsafe}).\label{eqn:safety-intro}
\end{equation}
In this paper, we introduce a novel algorithm for computing bounds on probabilities such as \cref{eqn:safety-intro} using a branch and bound framework~\cite{LandDoig2010,MorrisonJacobsonSauppeEtAl2016,BunelLuTurkaslanEtAl2020}.
These bounds allow us to verify probabilistic statements like \cref{eqn:demographic-parity-intro} using bound propagation~\cite{MooreKearfottCloud2009,AlbarghouthiDAntoniDrewsEtAl2017}.
Our algorithm \ToolName*{} is a fast and generally applicable probabilistic verification algorithm for neural networks based on massively parallel branch and bound neural network verification~\cite{XuZhangWangEtAl2021} using linear relaxations of neural networks~\cite{WengZhangChenEtAl2018,ZhangWengChenEtAl2018,SinghGehrPueschelEtAl2019}.
Our theoretical analysis shows that \ToolName{} is sound and, under mildly restrictive conditions, complete when using suitable branching and splitting heuristics.

Our experimental evaluation reveals that \ToolName{} significantly outpaces the probabilistic verification algorithms \FairSquare{}~\cite{AlbarghouthiDAntoniDrewsEtAl2017} and \SpaceScanner{}~\cite{ConverseFilieriGopinathEtAl2020}. 
In particular, \ToolName{} solves benchmark instances that \FairSquare{} can not solve within 15 minutes in less than one minute and solves the ACAS~Xu~\cite{KatzBarrettDillEtAl2017} probabilistic robustness case study of~\textcite{ConverseFilieriGopinathEtAl2020} in a mean runtime of 22 seconds, compared to 33 minutes for \SpaceScanner{}.
 
Applying \ToolName{} to \#DNN verification~\cite{MarzariCorsiCicaleseEtAl2023}, a subset of probabilistic verification, reveals that \ToolName{} also compares favourably to specialized algorithms, such as \ProVeSLR{}~\cite{MarzariRoncolatoFarinelli2023}. 
It even compares favourably to \eProVe{}~\cite{MarzariCorsiMarchesiniEtAl2024} that relaxes \#DNN verification to computing a confidence interval on the solution and \PreimgApprox{}~\cite{ZhangWangKwiatkowska2024} that only computes sampling approximations.
In contrast to this, \ToolName{} computes lower and upper bounds on probabilities like \cref{eqn:safety-intro} that are guaranteed to hold with absolute certainty. 
Such bounds are preferable to confidence intervals in high-risk machine-learning applications, such as medical applications or autonomous driving and flight.

To test the limits of \ToolName{}, we introduce a significantly more challenging probabilistic verification benchmark:
The MiniACSIncome benchmark is based on the ACSIncome dataset~\cite{DingHardtMillerEtAl2021} and is concerned with verifying the demographic parity of neural networks for datasets of increasing input dimensionality.
MiniACSIncome provides more complex input distributions of higher dimensionality than earlier probabilistic verification benchmarks.
Our main contributions are
\begin{itemize}
  \item the \ToolName{} algorithm for the probabilistic verification of neural networks,
  \item a theoretical analysis proving the soundness and completeness of \ToolName{},
  \item a thorough experimental comparison of \ToolName{} with existing probabilistic verifiers for neural networks and tools dedicated to restricted subsets of probabilistic verification, and
  \item MiniACSIncome: a new, challenging probabilistic verification benchmark.
\end{itemize}
%
Our code is available at \url{https://github.com/sen-uni-kn/probspecs}.
MiniACSIncome is available as a Python package at \url{https://pypi.org/project/miniacsincome/}.

\section{Related Work}\label{sec:discussion}\label{sec:related-work}
\begin{table*}
  \centering
  \caption{%
    Comparison of Related Approaches. 
    Sound approaches provide definite guarantees.
    Probably sound approaches only provide PAC-type guarantees.  
    Approaches marked \enquote{unsound\textsuperscript{\(\ast\)}} are sound in specific settings but neither sound nor probably sound in general.
  }\label{tab:related-work}
  \vspace*{0.1in}
  \begin{tabular}{L{.39\textwidth}@{}C{.12\textwidth}@{\hspace{.025\textwidth}}C{.055\textwidth}@{\hspace{.02\textwidth}}C{.075\textwidth}@{\hspace{.02\textwidth}}C{.10\textwidth}C{.13\textwidth}}
    \toprule
    & \multicolumn{4}{c}{\textbf{Verification Problem}} \\ \cmidrule(lr){2-5}
    \textbf{Approach} & \textbf{Non-\newline Probabilistic} & \textbf{\#DNN} & \textbf{Group Fairness} & \textbf{General\newline Probabilistic} & \textbf{Verifier\newline Guarantee} \\ \midrule
    \Textcite{ZhouBrixHanasusantoEtAl2024,TranYangLopezEtAl2020}
    & \(\exsuccessnc\) & \(\exfailurenc\) & \(\exfailurenc\) & \(\exfailurenc\) & sound
    \\
    \Textcite{WengChenNguyenEtAl2019}
    & \(\exsuccessnc\) & \(\exfailurenc\) & \(\exfailurenc\) & \(\exfailurenc\) & probably sound
    \\ \midrule
    \Textcite{BorcaTasciucGuoBakEtAl2023}
    & \(\exfailurenc\) & \(\exsuccessnc\) & \(\exfailurenc\) & \(\exfailurenc\) & unsound
    \\
    \Textcite{ZhangWangKwiatkowska2024}
    & \(\exfailurenc\) & \(\exsuccessnc\) & \(\exfailurenc\) & \(\exfailurenc\) & unsound\textsuperscript{\(\ast\)}
    \\
    \Textcite{ConverseFilieriGopinathEtAl2020}
    & \(\exfailurenc\) & \(\exsuccessnc\) & \(\exsuccessnc\) & \(\exsuccessnc\) & unsound\textsuperscript{\(\ast\)}
    \\
    \Textcite{BalutaShenShindeEtAl2019,MarzariCorsiCicaleseEtAl2023,MarzariCorsiMarchesiniEtAl2024}
    & \(\exfailurenc\) & \(\exsuccessnc\) & \(\exfailurenc\) & \(\exfailurenc\) & probably sound
    \\
    \Textcite{BastaniZhangSolarLezama2019}
    & \(\exfailurenc\) & \(\exfailurenc\) & \(\exsuccessnc\) & \(\exfailurenc\) & probably sound
    \\ 
    \Textcite{MarzariRoncolatoFarinelli2023}
    & \(\exfailurenc\) & \(\exsuccessnc\) & \(\exfailurenc\) & \(\exfailurenc\) & sound
    \\
    \Textcite{AlbarghouthiDAntoniDrewsEtAl2017}
    & \(\exfailurenc\) & \(\exfailurenc\) & \(\exsuccessnc\) & \(\exfailurenc\) & sound
    \\
    \Textcite{MorettinPasseriniSebastiani2024}
    & \(\exfailurenc\) & \(\exsuccessnc\) & \(\exsuccessnc\) & \(\exsuccessnc\) & sound
    \\
    Ours (\ToolName{})
    & \(\exfailurenc\) & \(\exsuccessnc\) & \(\exsuccessnc\) & \(\exsuccessnc\) & sound
    \\ \bottomrule
  \end{tabular}
\end{table*}
%
Approaches for non-probabilistic neural network verification include Satisfiability Modulo Theories (SMT) solving~\cite{KatzBarrettDillEtAl2017}, Mixed Integer Linear Programming (MILP)~\cite{TjengXiaoTedrake2019,ChengNuehrenbergRuess2017}, and reachability analysis~\cite{BakTranHobbsEtAl2020,TranBakXiangEtAl2020,TranYangLopezEtAl2020}.
Many of these approaches can be understood as branch and bound algorithms~\cite{BunelLuTurkaslanEtAl2020}.
Branch and bound~\cite{LandDoig2010,MorrisonJacobsonSauppeEtAl2016} also powers the \AlgorithmName{\(\alpha,\!\beta\)-CROWN}~\cite{ZhouBrixHanasusantoEtAl2024} verifier that leads the table in recent international neural network verifier competitions~\cite{BrixBakLiuEtAl2023,BrixBakJohnsonEtAl2024}.
A critical component of a branch and bound verification algorithm is computing bounds on the output of a neural network.
Approaches for bounding neural network outputs include interval arithmetic~\cite{PulinaTacchella2010}, dual approaches~\cite{WongKolter2018}, and linear bound propagation techniques~\cite{WengZhangChenEtAl2018,SinghGehrPueschelEtAl2019,XuZhangWangEtAl2021}, such as \CROWN*{}.

Probabilistic verification algorithms can be divided into \emph{sound} algorithms that provide valid proofs, \emph{probably sound} algorithms that provide valid proofs only with a certain predefined probability, and \emph{unsound} algorithms that do not quantify their probability of providing invalid proofs.
Probably sound algorithms provide similar guarantees as \emph{probably approximately correct (PAC)} learning~\cite{Bishop2007}.
Fairness verification is a subset of probabilistic verification that studies problems such as \cref{eqn:demographic-parity-intro}. 
Another subset of probabilistic verification is \#DNN verification~\cite{MarzariCorsiCicaleseEtAl2023}, corresponding to probabilistic verification under uniformly distributed inputs.
\Cref{tab:related-work} provides an overview of approaches for neural network verification.

By sacrificing soundness, probably sound verification algorithms~\cite{BastaniZhangSolarLezama2019,BalutaShenShindeEtAl2019,WengChenNguyenEtAl2019,MarzariCorsiCicaleseEtAl2023,MarzariCorsiMarchesiniEtAl2024} obtain efficiency.
One example is \eProVe{}~\cite{MarzariCorsiMarchesiniEtAl2024}, a probably sound \#DNN verification algorithm.
\Textcite{BastaniZhangSolarLezama2019},~\textcite{ConverseFilieriGopinathEtAl2020}, and~\textcite{MarzariRoncolatoFarinelli2023} compare sound and probably sound approaches for probabilistic verification.
We study sound algorithms since certainly valid results are preferable in high-risk applications.

Concerning sound approaches, \FairSquare{}~\cite{AlbarghouthiDAntoniDrewsEtAl2017} is a sound fairness verification algorithm that partitions the input space into disjoint hyperrectangles and iteratively refines the input space partitioning using SMT solving.
\ProVeSLR{}~\cite{MarzariRoncolatoFarinelli2023} is a sound \#DNN verifier based on a massively parallel branch and bound algorithm.
\Textcite{ConverseFilieriGopinathEtAl2020}~(\SpaceScanner{}) and~\textcite{BorcaTasciucGuoBakEtAl2023} divide the input space into disjoint polytopes using concolic execution and reachable set verification, respectively.
Both approaches are unsound for fairness verification due to approximating continuous probability distributions using histograms.
\SpaceScanner{} is sound for \#DNN verification.
\PreimgApprox{}~\cite{ZhangWangKwiatkowska2024} divides the input space into disjoint polytopes using ReLU branching~\cite{BunelLuTurkaslanEtAl2020}, but is unsound due to using sampling to approximate probabilities.
However, it offers a post-verification soundness check for low-dimensional verification problems.
\Textcite{TranChoiOkamotoEtAl2023} provide a sound verifier based on reachability analysis that is only applicable for truncated Gaussian input distributions.
\Textcite{MorettinPasseriniSebastiani2024} use weighted model integration~\cite{BellePasseriniBroeck2015} to obtain a general sound probabilistic verification algorithm but neither provide code nor report runtimes. 

From the above approaches, \FairSquare{} and \ProVeSLR{} are most closely related to our algorithm \ToolName{}.
However, \ToolName{} is more general than \FairSquare{} and \ProVeSLR{}, which are restricted to fairness verification and \#DNN verification, respectively.
Like \FairSquare{}, \ToolName{} iteratively refines bounds on probabilities to verify probabilistic statements like \cref{eqn:demographic-parity-intro}.
However, while \FairSquare{} uses expensive SMT solving for refining the input space, we use computationally inexpensive input splitting and bound propagation techniques from non-probabilistic neural network verification.
\ProVeSLR{} builds on similar techniques but computes probabilities exactly instead of iteratively refining bounds.
These differences allow \ToolName{} to significantly outpace both \FairSquare{} and \ProVeSLR{}, as demonstrated in \cref{sec:experiments}.

A related problem to probabilistic verification of neural networks is verifying Bayesian Neural Networks (BNNs)~\cite{CardelliKwiatkowskaLaurentiEtAl2019a,CardelliKwiatkowskaLaurentiEtAl2019b,WickerLaurentiPataneEtAl2020,WickerPataneLaurentiEtAl2024,BerradaDathathriDvijothamEtAl2021,AdamsParaneLahijanianEtAl2023,BattenHosseiniLomuscio2024}. 
In BNNs, the network parameters follow a probability distribution~\cite{Neal1996}.
Since neural networks typically have vastly more parameters than inputs, BNN verification is concerned with much higher dimensional probability distributions than we consider in this paper.
Our restriction to deterministic neural networks allows us to provide a sound and complete yet practically scalable probabilistic verification algorithm.

Dependency fairness~\cite{GalhotraBrunMeliou2017,UrbanChristakisWuestholzEtAl2020} is a non-probabilistic individual fairness notion~\cite{DworkHardtPitassiEtAl2012}.
Approaches for verifying dependency fairness include~\cite{RuossBalunovicFischerEtAl2020,UrbanChristakisWuestholzEtAl2020,BiswasRajan2023,MohammadiSivaramanFarnadi2023,KimWangWang2024}.
We are concerned with probabilistic fairness notions.


\section{Preliminaries and Problem Statement}\label{sec:prelim}
Throughout this paper, we are concerned with computing (provable) lower and upper bounds.
\begin{defn}[Bounds]
For~\(f: \Reals^n \to \Reals^m\), we call~\(\ell, u \in \Reals^m\) a \emph{lower}, respectively, \emph{upper bound} on~\(f\) for~\(\mathcal{X}' \subseteq \Reals^n\) if~\(\ell \leq f(\vec{x}) \leq u\), for all~\(\vec{x} \in \mathcal{X}'\).
\end{defn}

\textbf{Neural Networks.} In particular, we are concerned with computing bounds on neural networks~\(\NN: \mathcal{X} \to \Reals^m\), where~\(\mathcal{X} \subseteq \Reals^n\) is the input space of the neural network.
A neural network is a composition of linear functions and a predefined set of non-linear functions, such as ReLU and max pooling.
We only study Lipschitz continuous neural networks, which includes many common architectures~\cite{SzegedyZarembaSutskeverEtAl2014,RuanHuangKwiatkowska2018}.
Besides this, we refrain from defining neural networks further, as our approach is not specific to any particular architecture.
%

\textbf{Notation and Terminology.}
We use~\([\underline{\vec{x}}, \overline{\vec{x}}] = \{\vec{x} \in \Reals^n \mid \underline{\vec{x}} \leq \vec{x} \leq \overline{\vec{x}}\}\) to denote hyperrectangles and write~\([n] = \{1, \ldots, n\}\) for~\(n \in \Nats\).
The term \emph{bounds} generally refers to a pair of a lower and an upper bound in this paper.
We assume that all random objects are defined on the same abstract probability space~$(\Omega, \mathcal{F}, \prob)$ and that all continuous random variables admit a probability density function.

\subsection{Probabilistic Verification of Neural Networks}\label{sec:prob-specs}
Let~\(\mathcal{X} \subseteq \Reals^n\) be a  (potentially unbounded) hyperrectangle and let~\(v \in \Nats\).
We are concerned with proving or disproving whether a neural network~\(\NN: \mathcal{X} \to \Reals^m\) is feasible for the \emph{probabilistic verification problem}
\begin{equation}
  \left\{\!\!
  \begin{array}{l}
    \fSAT(p_1, \ldots, p_v) \geq 0,\\
    p_i = \prob[\inp[i]](
      \fSAT*[i](\inp[i], \NN(\inp[i])) \geq 0
    ) \;\; \forall i \in [\mathrlap{v],}
  \end{array}
  \right.\label{eqn:pvp}
\end{equation}
where~\(\inp[i]\),~\(i \in [v]\), is an~$\mathcal{X}$-valued random variable with distribution~\(\prob[{\inp[i]}]\) and~\(\fSAT: \Reals^v \to \Reals\),~\(\fSAT*[i]: \Reals^{n} \times \Reals^{m} \to \Reals\),~\(i \in [v]\) are \emph{satisfaction functions} that are compositions of linear functions, multiplication, division, and monotone functions.
\Cref{sec:specs} expresses \cref{eqn:demographic-parity-intro} in the form of \cref{eqn:pvp} as a concrete example.

Probabilistic verification with a single uniformly distributed random variable corresponds to \#DNN verification~\cite{MarzariCorsiCicaleseEtAl2023}.
As~\textcite{MarzariCorsiCicaleseEtAl2023} prove, \#DNN verification is \#P~complete, implying that probabilistic verification is \#P~hard.
However, this does not determine which probabilistic verification problems are practically solvable.

\subsection{Non-Probabilistic Neural Network Verification}\label{sec:nn-verif}

Non-probabilistic neural network verification determines whether~\(\NN: \mathcal{X} \to \Reals^m\) is feasible for
\begin{equation}
    \parg{} \quad \forall \vec{x} \in \mathcal{X}',\label{eqn:nn-verif}
\end{equation}
where~\(\mathcal{X}' \subseteq \mathcal{X}\) is a bounded hyperrectangle, and~\(\fSAT*:\Reals^{n} \times \Reals^{m} \to \Reals\) is a \emph{satisfaction function} that indicates whether the output of~\(\NN\) is desirable (\(\fSAT*(\cdot, \NN(\cdot)) \geq 0\)) or undesirable (\(\fSAT*(\cdot, \NN(\cdot)) < 0\)).
In neural network verification,~\(\fSAT*\) can generally be considered a part of~\(\NN\)~\cite{BunelLuTurkaslanEtAl2020,XuShiZhangEtAl2020b}.
\emph{Neural network verifiers} are algorithms for proving or disproving \cref{eqn:nn-verif}.
Two desirable properties of neural network verifiers are \emph{soundness} and \emph{completeness}.
%
\begin{defn}[Soundness and Completeness]\label{defn:sound}\label{defn:complete}\label{defn:sound-complete}
  A verification algorithm is \emph{sound} if it only produces genuine counterexamples and valid proofs for \cref{eqn:nn-verif}.
  It is \emph{complete} if it produces a counterexample or proof for \cref{eqn:nn-verif} for any neural network in a finite amount of time.
\end{defn}
Analogous notions of soundness and completeness also apply to probabilistic verification.
\Cref{sec:related-work} discusses \emph{probably sound} verification.

\textbf{Interval Arithmetic.}\label{sec:interval-arithmetic}
Interval arithmetic~\cite{MooreKearfottCloud2009} is a bound propagation technique that derives bounds on the output of a neural network from bounds on the network input. 
Assume~\(\underline{\inp} \leq \inp \leq \overline{\inp}\) are bounds on the network input~\(\inp\).
We apply interval arithmetic to compute~\(\ell \leq \fSAT*(\inp, \NN(\inp)) \leq u\),~\(\forall \inp \in \HR{\inp}\).
If the lower bound is large enough or the upper bound small enough, we can prove or disprove \cref{eqn:nn-verif} using~\(\ell \geq 0 \Longrightarrow \fSAT*(\inp, \NN(\inp)) \geq 0\), respectively,~\(u < 0 \Longrightarrow \fSAT*(\inp, \NN(\inp)) < 0\).
However, if the bounds are \emph{inconclusive}, that is~\(\ell < 0 \leq u\), we can neither prove nor disprove \cref{eqn:nn-verif}. 
Therefore, interval arithmetic is incomplete according to \cref{defn:sound-complete}.
%

Let~\(f: \Reals^n \to \Reals\) be a function that we want to bound for inputs~\(\inp \in \HR{\inp}\).
Interval arithmetic and other bound propagation techniques rely on~\(f = f^{(K)} \circ \cdots \circ f^{(1)}\) being a composition of more fundamental functions~\(f^{(k)}: \Reals^{n_k} \to \Reals^{n_{k+1}}\) for which we can already compute~\(\ell^{(k)} \leq f^{(k)}(\vec{z}) \leq u^{(k)}\) given~\(\underline{\vec{z}} \leq \vec{z} \leq \overline{\vec{z}}\).
Examples of such functions include monotone non-decreasing functions, for which~\(f^{(k)}(\underline{\vec{z}}) \leq f^{(k)}(\vec{z}) \leq f^{(k)}(\overline{\vec{z}})\).
This \emph{bounding rule} already allows us to bound addition,~\(\min\),~\(\max\), and ReLU, among others.
\Cref{sec:interval-arithmetic-extra} contains further bounding rules.
Given a bounding rule~\(F^{(k)}: \Reals^{n_k} \times \Reals^{n_k} \to \Reals^{n_{k+1}} \times \Reals^{n_{k+1}}\) for each~\(f^{(k)}\), interval arithmetic computes bounds on~\(f\) by computing~\((F^{(K)} \circ \cdots \circ F^{(1)})(\underline{\inp}, \overline{\inp})\).

\textbf{Branch and Bound.}\label{sec:branch-and-bound}
Bound propagation approaches, such as interval arithmetic and \CROWN*{}, are incomplete according to \cref{defn:sound-complete}.
To obtain a complete verifier, bound propagation can be combined with \emph{branching} to gain completeness.
This algorithmic framework is called \emph{branch and bound}~\cite{LandDoig2010,MorrisonJacobsonSauppeEtAl2016}.
In branch and bound, the search space is split (\emph{branching}) when the computed bounds are inconclusive (\(\ell < 0 \leq u\)).
The idea is that splitting improves the precision of the bounds for each part of the split (each \emph{branch}).
\Textcite{BunelLuTurkaslanEtAl2020} provide an introduction to branch and bound for non-probabilistic neural network verification.
%
The next section introduces our branch and bound algorithm for probabilistic neural network verification.
\section{Algorithm}\label{sec:algo}

\begin{figure*}
  \centering
  \tikzexternalenable%
  \tikzset{%
    side by side/.style 2 args={%
      line width=8pt,
      #1,
      postaction={%
          clip,postaction={draw,#2}
      }
    },
    class A/.style={%
      font=\Large\sffamily,
    },
    class B/.style={%
      font=\Large\sffamily,
      text=white,
    },
    class unknown/.style={%
      font=\Large\bfseries,
    },
  }
  \contourlength{0.05em}  
  \begin{subfigure}[t]{.275\textwidth}
    \tikzsetnextfilename{prob-bounds-illustration-four-splits}
    \begin{tikzpicture}
      \begin{axis}[
        width=\textwidth,
        axis equal image,
        scale only axis,
        xlabel=$\vec{x}_1$, ylabel=$\vec{x}_2$,
        xmin=0, xmax=10,
        ymin=0, ymax=10,
        xtick=\empty, ytick=\empty,
      ]
        \coordinate (D0)  at (-1,7.0);
        \coordinate (D1)  at (-0.1,7.45);
        \coordinate (D2)  at (1.6,8.2);
        \coordinate (D3)  at (4.0,8.5);
        \coordinate (D4)  at (4.6,8.1);
        \coordinate (D5)  at (4.75,7.3);
        \coordinate (D6)  at (4.65,6.3);
        \coordinate (D7)  at (4.4,4.7);
        \coordinate (D8)  at (4.5,3.8);
        \coordinate (D9)  at (4.7,2.6);
        \coordinate (D10) at (5.1,2.0);
        \coordinate (D11) at (5.5,1.3);
        \coordinate (D12) at (5.8,-0.2);
        \coordinate (D13) at (6.0,-1);
  
        \coordinate (S1_0) at (5,0);
        \coordinate (S1_1) at (5,10);
        \coordinate (S2_0) at (0,5);
        \coordinate (S2_1) at (5,5);
        \coordinate (S3_0) at (2.5,0);
        \coordinate (S3_1) at (2.5,5);
        \coordinate (S4_0) at (5,5);
        \coordinate (S4_1) at (10,5);
  
  
        \draw[fill=Class1] (0,0) -- (0,5) -- (2.5,5) -- (2.5,0) -- cycle;
        \draw[fill=Class2] (5,5) -- (5,10) -- (10,10) -- (10,5) -- cycle;
        \draw[fill=black!10!white] (0,5) -- (0,10) -- (5,10) -- (5,5) -- cycle;
        \draw[fill=black!10!white] (2.5,0) -- (2.5,5) -- (5,5) -- (5,0) -- cycle;
        \draw[fill=black!10!white] (5,0) -- (5,5) -- (10,5) -- (10,0) -- cycle;
  
        \draw[gray,thin,step=1.25] (0,0) grid (10,10);
  
        \draw[side by side={Class2}{Class1}] (D0) -- (D1) -- (D2) -- (D3) -- (D4) -- (D5) -- (D6) -- (D7) -- (D8) -- (D9) -- (D10) -- (D11) -- (D12) -- (D13);
        \draw[black,thick] (D0) -- (D1) -- (D2) -- (D3) -- (D4) -- (D5) -- (D6) -- (D7) -- (D8) -- (D9) -- (D10) -- (D11) -- (D12) -- (D13);
        
        \draw[black,very thick] 
          (S1_0) -- (S1_1)
          (S2_0) -- (S2_1)
          (S3_0) -- (S3_1)
          (S4_0) -- (S4_1)
        ;
  
        \node[class A] at (1.25,2.5) {V};
        \node[class B] at (7.5,7.5) {S};
        \node [pin={[pin edge={black,semithick,Stealth-},font=\small,align=center,xshift=-1.1cm,yshift=-1.95cm]{Satisfaction\\ Boundary}}] at ($(D4) + (.09cm,.09cm)$) {};
      \end{axis}
    \end{tikzpicture}
    \caption{%
      After four Splits.
    }%
    \label{fig:probability-bounds-four-splits}
  \end{subfigure}
  \qquad
  \begin{subfigure}[t]{.275\textwidth}
    \tikzsetnextfilename{prob-bounds-illustration-nineteen-splits}
    \begin{tikzpicture}
      \begin{axis}[
        width=\textwidth,
        axis equal image,
        scale only axis,
        xlabel=$\vec{x}_1$, ylabel=$\vec{x}_2$,
        xmin=0, xmax=10,
        ymin=0, ymax=10,
        xtick=\empty, ytick=\empty,
      ]
        \coordinate (D0)  at (-1,7.0);
        \coordinate (D1)  at (-0.1,7.45);
        \coordinate (D2)  at (1.6,8.2);
        \coordinate (D3)  at (4.0,8.5);
        \coordinate (D4)  at (4.6,8.1);
        \coordinate (D5)  at (4.75,7.3);
        \coordinate (D6)  at (4.65,6.3);
        \coordinate (D7)  at (4.4,4.7);
        \coordinate (D8)  at (4.5,3.8);
        \coordinate (D9)  at (4.7,2.6);
        \coordinate (D10) at (5.1,2.0);
        \coordinate (D11) at (5.5,1.3);
        \coordinate (D12) at (5.8,-0.2);
        \coordinate (D13) at (6.0,-1);
  
        \coordinate (S1_0) at (5,0);
        \coordinate (S1_1) at (5,10);
        \coordinate (S2_0) at (0,5);
        \coordinate (S2_1) at (5,5);
        \coordinate (S3_0) at (2.5,0);
        \coordinate (S3_1) at (2.5,5);
        \coordinate (S4_0) at (5,5);
        \coordinate (S4_1) at (10,5);
        \coordinate (S6_0) at (7.5,0);
        \coordinate (S6_1) at (7.5,5);
        \coordinate (S7_0) at (5,2.5);
        \coordinate (S7_1) at (10,2.5);
        \coordinate (S8_0) at (5,2.5);
        \coordinate (S8_1) at (10,2.5);
        \coordinate (S9_0) at (6.25,0);
        \coordinate (S9_1) at (6.25,2.5);
        \coordinate (S11_0) at (2.5,2.5);
        \coordinate (S11_1) at (5,2.5);
        \coordinate (S12_0) at (3.75,2.5);
        \coordinate (S12_1) at (3.75,5);
        \coordinate (S13_0) at (2.5,5);
        \coordinate (S13_1) at (2.5,10);
        \coordinate (S14_0) at (0,7.5);
        \coordinate (S14_1) at (2.5,7.5);
        \coordinate (S15_0) at (2.5,7.5);
        \coordinate (S15_1) at (5,7.5);
        \coordinate (S16_0) at (3.75,0);
        \coordinate (S16_1) at (3.75,2.5);
        \coordinate (S17_0) at (3.75,5);
        \coordinate (S17_1) at (3.75,7.5);
        \coordinate (S18_0) at (2.5,8.75);
        \coordinate (S18_1) at (5,8.75);
        \coordinate (S19_0) at (0,8.75);
        \coordinate (S19_1) at (2.5,8.75);
  
  
        \draw[fill=Class1] (0,0) -- (0,5) -- (2.5,5) -- (2.5,0) -- cycle;
        \draw[fill=Class1] (2.5,2.5) -- (2.5,5) -- (3.75,5) -- (3.75,2.5) -- cycle;
        \draw[fill=Class1] (2.5,0) -- (2.5,2.5) -- (3.75,2.5) -- (3.75,0) -- cycle;
        \draw[fill=black!10!white] (3.75,2.5) -- (3.75,5) -- (5,5) -- (5,2.5) -- cycle;
        \draw[fill=black!10!white] (3.75,0) -- (3.75,2.5) -- (5,2.5) -- (5,0) -- cycle;
        
        \draw[fill=Class1] (0,5) -- (0,7.5) -- (2.5,7.5) -- (2.5,5) -- cycle;
        \draw[fill=Class1] (2.5,5) -- (2.5,7.5) -- (3.75,7.5) -- (3.75,5) -- cycle;
        \draw[fill=Class2] (0,8.75) -- (0,10) -- (2.5,10) -- (2.5,8.75) -- cycle;
        \draw[fill=Class2] (2.5,8.75) -- (2.5,10) -- (5,10) -- (5,8.75) -- cycle;
        \draw[fill=black!10!white] (0,7.5) -- (0,8.75) -- (2.5,8.75) -- (2.5,7.5) -- cycle;
        \draw[fill=black!10!white] (2.5,7.5) -- (2.5,8.75) -- (5,8.75) -- (5,7.5) -- cycle;
        \draw[fill=black!10!white] (3.75,5) -- (3.75,7.5) -- (5,7.5) -- (5,5) -- cycle;
        
        \draw[fill=Class2] (5,5) -- (5,10) -- (10,10) -- (10,5) -- cycle;
        
        \draw[fill=Class2] (5,2.5) -- (5,5) -- (7.5,5) -- (7.5,2.5) -- cycle;
        \draw[fill=Class2] (7.5,2.5) -- (7.5,5) -- (10,5) -- (10,2.5) -- cycle;
        \draw[fill=Class2] (7.5,0) -- (7.5,2.5) -- (10,2.5) -- (10,0) -- cycle;
        \draw[fill=Class2] (6.25,0) -- (6.25,2.5) -- (7.5,2.5) -- (7.5,0) -- cycle;
        \draw[fill=black!10!white] (5,0) -- (5,2.5) -- (6.25,2.5) -- (6.25,0) -- cycle;
  
        \draw[gray,thin,step=1.25] (0,0) grid (10,10);
  
        \draw[side by side={Class2}{Class1}] (D0) -- (D1) -- (D2) -- (D3) -- (D4) -- (D5) -- (D6) -- (D7) -- (D8) -- (D9) -- (D10) -- (D11) -- (D12) -- (D13);
        \draw[black,thick] (D0) -- (D1) -- (D2) -- (D3) -- (D4) -- (D5) -- (D6) -- (D7) -- (D8) -- (D9) -- (D10) -- (D11) -- (D12) -- (D13);
        
        \draw[black,very thick] 
          (S1_0) -- (S1_1)
          (S2_0) -- (S2_1)
          (S3_0) -- (S3_1)
          (S4_0) -- (S4_1)
          (S6_0) -- (S6_1)
          (S7_0) -- (S7_1)
          (S8_0) -- (S8_1)
          (S9_0) -- (S9_1)
          (S11_0) -- (S11_1)
          (S12_0) -- (S12_1)
          (S13_0) -- (S13_1)
          (S14_0) -- (S14_1)
          (S15_0) -- (S15_1)
          (S16_0) -- (S16_1)
          (S17_0) -- (S17_1)
          (S18_0) -- (S18_1)
          (S19_0) -- (S19_1)
        ;
  
        \node[class A] at (1.25,2.5) {V};
        \node[class A] at (3.125,3.75) {V};
        \node[class A] at (3.125,1.25) {V};
        \node[class A] at (1.25,6.25) {V};
        \node[class A] at (3.125,6.25) {V};
        
        \node[class B] at (7.5,7.5) {S};
        \node[class B] at (6.25,3.75) {S};
        \node[class B] at (6.25,3.75) {S};
        \node[class B] at (8.75,3.75) {S};
        \node[class B] at (8.75,1.25) {S};
        \node[class B] at (6.875,1.25) {S};
        \node[class B] at (1.25,9.375) {S};
        \node[class B] at (3.75,9.375) {S};
        
      \end{axis}
    \end{tikzpicture}  
    \caption{After 17 Splits.}\label{fig:probability-bounds-nineteen-splits}
  \end{subfigure}
  \qquad
  \begin{subfigure}[t]{.275\textwidth}
    \tikzsetnextfilename{prob-bounds-illustration-density}
    \begin{tikzpicture}
      \begin{axis}[
        width=\textwidth,
        axis equal image,
        scale only axis,
        xlabel=$\vec{x}_1$, ylabel=$\vec{x}_2$,
        xmin=0, xmax=10,
        ymin=0, ymax=10,
        xtick=\empty, ytick=\empty,
        view={0}{90},
        colormap/PuBu,
      ]
        \coordinate (D0)  at (-1,7.0);
        \coordinate (D1)  at (-0.1,7.45);
        \coordinate (D2)  at (1.6,8.2);
        \coordinate (D3)  at (4.0,8.5);
        \coordinate (D4)  at (4.6,8.1);
        \coordinate (D5)  at (4.75,7.3);
        \coordinate (D6)  at (4.65,6.3);
        \coordinate (D7)  at (4.4,4.7);
        \coordinate (D8)  at (4.5,3.8);
        \coordinate (D9)  at (4.7,2.6);
        \coordinate (D10) at (5.1,2.0);
        \coordinate (D11) at (5.5,1.3);
        \coordinate (D12) at (5.8,-0.2);
        \coordinate (D13) at (6.0,-1);
  
        \coordinate (S1_0) at (5,0);
        \coordinate (S1_1) at (5,10);
        \coordinate (S2_0) at (0,5);
        \coordinate (S2_1) at (5,5);
        \coordinate (S3_0) at (2.5,0);
        \coordinate (S3_1) at (2.5,5);
        \coordinate (S4_0) at (5,5);
        \coordinate (S4_1) at (10,5);
        \coordinate (S6_0) at (7.5,0);
        \coordinate (S6_1) at (7.5,5);
        \coordinate (S7_0) at (5,2.5);
        \coordinate (S7_1) at (10,2.5);
        \coordinate (S8_0) at (5,2.5);
        \coordinate (S8_1) at (10,2.5);
        \coordinate (S9_0) at (6.25,0);
        \coordinate (S9_1) at (6.25,2.5);
        \coordinate (S11_0) at (2.5,2.5);
        \coordinate (S11_1) at (5,2.5);
        \coordinate (S12_0) at (3.75,2.5);
        \coordinate (S12_1) at (3.75,5);
        \coordinate (S13_0) at (2.5,5);
        \coordinate (S13_1) at (2.5,10);
        \coordinate (S14_0) at (0,7.5);
        \coordinate (S14_1) at (2.5,7.5);
        \coordinate (S15_0) at (2.5,7.5);
        \coordinate (S15_1) at (5,7.5);
        \coordinate (S16_0) at (3.75,0);
        \coordinate (S16_1) at (3.75,2.5);
        \coordinate (S17_0) at (3.75,5);
        \coordinate (S17_1) at (3.75,7.5);
        \coordinate (S18_0) at (2.5,8.75);
        \coordinate (S18_1) at (5,8.75);
        \coordinate (S19_0) at (0,8.75);
        \coordinate (S19_1) at (2.5,8.75);
  
        
        \addplot3 [surf,samples=30,domain=0:10,shader=interp] {exp(-0.5*(((x-8)/5)^2 + ((y-8)/4)^2))/(sqrt(2*pi*5*4)};
  
        \draw[gray,thin,step=1.25] (0,0) grid (10,10);
  
        \draw[side by side={Class2}{Class1}] (D0) -- (D1) -- (D2) -- (D3) -- (D4) -- (D5) -- (D6) -- (D7) -- (D8) -- (D9) -- (D10) -- (D11) -- (D12) -- (D13);
        \draw[black,thick] (D0) -- (D1) -- (D2) -- (D3) -- (D4) -- (D5) -- (D6) -- (D7) -- (D8) -- (D9) -- (D10) -- (D11) -- (D12) -- (D13);
        
        \draw[black,very thick] 
          (S1_0) -- (S1_1)
          (S2_0) -- (S2_1)
          (S3_0) -- (S3_1)
          (S4_0) -- (S4_1)
          (S6_0) -- (S6_1)
          (S7_0) -- (S7_1)
          (S8_0) -- (S8_1)
          (S9_0) -- (S9_1)
          (S11_0) -- (S11_1)
          (S12_0) -- (S12_1)
          (S13_0) -- (S13_1)
          (S14_0) -- (S14_1)
          (S15_0) -- (S15_1)
          (S16_0) -- (S16_1)
          (S17_0) -- (S17_1)
          (S18_0) -- (S18_1)
          (S19_0) -- (S19_1)
        ;
  
        \node[class A] at (1.25,2.5)   {V};
        \node[class A] at (3.125,3.75) {V};
        \node[class A] at (3.125,1.25) {V};
        \node[class A] at (1.25,6.25)  {V};
        \node[class A] at (3.125,6.25) {V};
        
        \node[class B] at (7.5,7.5)   {\contour{black}{S}};
        \node[class B] at (6.25,3.75)  {\contour{black}{S}};
        \node[class B] at (6.25,3.75)  {\contour{black}{S}};
        \node[class B] at (8.75,3.75)  {\contour{black}{S}};
        \node[class B] at (8.75,1.25)  {\contour{black}{S}};
        \node[class B] at (6.875,1.25) {\contour{black}{S}};
        \node[class B] at (1.25,9.375) {\contour{black}{S}};
        \node[class B] at (3.75,9.375) {\contour{black}{S}};
      \end{axis}
    \end{tikzpicture}  
    \caption{Probability Density Function.}%
    \label{fig:probability-bounds-probability-mass}
  \end{subfigure}

  \tikzexternaldisable%
  \caption[Computing Bounds on Probabilities]{%
    Computing bounds on probabilities.
    This figure illustrates the steps for computing bounds on~\(p = \pterm\).
    Our algorithm successively splits the input space to find regions that do not intersect the satisfaction boundary~\(\parg\) (\mbox{orange/green} line~\legendmixedline{Class2}{Class1}).
    Green, orange, and grey rectangles (\legendcolorbox{Class2}/\legendcolorbox{Class1}/\legendcolorbox{black!10!white}) denote regions for which we could prove~\(\parg\) (satisfaction)~\legendcolorbox{Class2},~\(\fSAT*(\inp, \NN(\inp)) < 0\) (violation)~\legendcolorbox{Class1}, or neither~\legendcolorbox{black!10!white}, respectively.
    By integrating the probability density~\(\probdensity[\inp]\) in \subref{fig:probability-bounds-probability-mass} (darker means higher density) over the green rectangles~\legendcolorbox{Class2}, we obtain a lower bound on~\(p\).
    Similarly, we can integrate over the orange rectangles~\legendcolorbox{Class1} to construct an upper bound on~\(p\).
    Refining the input splitting from \subref{fig:probability-bounds-four-splits} to \subref{fig:probability-bounds-nineteen-splits} tightens the bounds on~\(p\).
  }%
  \label{fig:probability-bounds-illustration}
\end{figure*}

This section introduces~\ToolName*{}, our algorithm for probabilistic verification of neural networks as defined in \cref{eqn:pvp}.
The overall approach of \ToolName{} is to iteratively refine bounds on each~\(p_i\) from \cref{eqn:pvp} until a bound propagation approach allows us to prove or disprove~\(\fSAT(p_1, \ldots, p_v) \geq 0\).
For computing the bounds on~\(p_i\), \ToolName{} partitions the input space into hyperrectangles since this allows for computing probabilities efficiently~\cite{AlbarghouthiDAntoniDrewsEtAl2017}.
To compute tighter bounds on~\(p_i\), \ToolName{} uses a branch and bound algorithm that uses computationally inexpensive input splitting and bound propagation techniques from non-probabilistic neural network verification for refining the input splitting.

\Cref{algo:overall} describes the \ToolName{} algorithm.
The centrepiece of \ToolName{} is the procedure \ProbBounds{} for computing bounds~\(\ell_i^{(t)} \leq p_i \leq u_i^{(t)}\) on a probability~\(p_i\) from \cref{eqn:pvp}.
Given~\(\ell_i^{(t)} \leq p_i \leq u_i^{(t)}\), we apply a bound propagation technique to prove or disprove~\(\fSAT(p_1, \ldots, p_v)\), as described in \cref{sec:interval-arithmetic}.
If this analysis is inconclusive, \ProbBounds{} refines~\(\ell_i^{(t)}, u_i^{(t)}\) to obtain~\(\ell_i^{(t+1)}, u_i^{(t+1)}\) with~\(\ell_i^{(t)} \leq \ell_i^{(t+1)} \leq p_i \leq u_i^{(t+1)} \leq u_i^{(t)}\).
We again apply bound propagation to~\(\fSAT(p_1, \ldots, p_v)\), this time using~\(\ell_i^{(t+1)}, u_i^{(t+1)}\).
If the result remains inconclusive, we iterate refining the bounds on each~\(p_i\) until we obtain a conclusive result.
\ToolName{} applies \ProbBounds{} for each~\(p_i\) in parallel, making use of several CPU cores or several GPUs.
Our main contribution is the \ProbBounds{} algorithm for computing a converging sequence of lower and upper bounds on~\(p_i\).


\begin{algorithm}
  \caption{\ToolName{}}\label{algo:overall}
  \newcommand{\pbvar}{\textsc{P\!\!\;B}}
  \begin{algorithmic}[1]
    \REQUIRE Probabilistic Verification Problem as in \cref{eqn:pvp}, Batch Size \(N\)
    \FOR[Launch \(v\) parallel instances]{$i \in [v]$} 
      \STATE \(\pbvar_i \gets \text{Launch}\ \ProbBounds(p_i, N) \)
    \ENDFOR
    \FOR{$t \in \mathbb{N}$}
      \STATE \textbf{for} \(i \in [v]\) \textbf{do} Gather \(b_i^{(t)} = (\ell_i^{(t)}, u_i^{(t)})\) from \(\pbvar_i\)
      \STATE \((\ell^{(t)},\!u^{(t)})\!\gets\!\ComputeBounds(\fSAT,b_1^{\mathrlap{(t)}}, \ldots, b_v^{(t)})\)
      \STATE \textbf{if} \(\ell^{(t)} \geq 0\) \textbf{then} \textbf{return} \texttt{Satisfied}
      \STATE \textbf{if} \(u^{(t)} < 0\) \textbf{then} \textbf{return} \texttt{Violated}
    \ENDFOR
  \end{algorithmic}
\end{algorithm}

\subsection{Bounding Probabilities}\label{sec:prob-bounds}
Our \ProbBounds{} algorithm for deriving and refining bounds on a probability is described in detail in \cref{algo:bound-prob} and illustrated in \cref{fig:probability-bounds-illustration}.
\ProbBounds{} is a massively parallel input-splitting branch and bound algorithm~\cite{BunelLuTurkaslanEtAl2020,WangPeiWhitehouseEtAl2018b,XuShiZhangEtAl2020b} that leverages a bound propagation algorithm for non-probabilistic neural network verification (\ComputeBounds{}).
Since we only consider a single probability in this section, we denote this probability as~\(p = \pterm\).

\ProbBounds{} receives~\(p\) and a \emph{batch size}~\(N \in \Nats\) as input.
The algorithm iteratively computes~\(\ell^{(t)}, u^{(t)} \in [0, 1]\), such that~\(\ell^{(t)} \leq \ell^{(t')} \leq p \leq u^{(t')} \leq u^{(t)}\),~\(\forall t, t' \in \Nats, t' \geq t\).
The following sections describe each step of \ProbBounds{} in detail. 


\begin{algorithm}[t]
  \caption{\ProbBounds{}}\label{algo:bound-prob}
  \begin{algorithmic}[1]
    \REQUIRE Probability \(\pterm\), Batch Size~\(N\)
    \STATE \(\mathrm{branches} \gets \{\mathcal{X}\}\) 
    \STATE \(\ell^{(0)} \gets 0, u^{(0)} \gets 1\)
    \FOR{$t \in \mathbb{N}$}
      \STATE \(\mathrm{batch} \gets \Select(\mathrm{branches}, N)\) 
      \STATE \((\underline{\vec{y}}, \overline{\vec{y}})\!\gets\!\ComputeBounds(\fSAT*(\cdot, \NN(\cdot)), \mathrm{batch})\)
      \STATE \((\mathrm{keep}, \Xsat[t], \Xviol[t]) \gets \Prune(\mathrm{batch}, \underline{\vec{y}}, \overline{\vec{y}})\)
      \STATE \(\ell^{(t)} \gets \ell^{(t-1)} + \prob[\inp](\Xsat[t])\)
      \STATE \(u^{(t)} \gets u^{(t-1)} - \prob[\inp](\Xviol[t])\)
      \STATE \textbf{yield} \((\ell^{(t)}, u^{(t)})\) \COMMENT{Report new bounds to \ToolName{}}
      \STATE \(\mathrm{new} \gets \Split(\mathrm{keep})\) 
      \STATE \(\mathrm{branches} \gets (\mathrm{branches} \setminus \mathrm{batch}) \cup \mathrm{new}\)
    \ENDFOR
  \end{algorithmic}
\end{algorithm}

\textbf{Initialisation.}
Initially, we consider a single branch encompassing \(\NN\)'s entire input space~\(\mathcal{X}\).
As in \cref{sec:prelim}, we assume~\(\mathcal{X}\) to be a (potentially unbounded) hyperrectangle.
We use the trivial bounds~\(\ell^{(0)} = 0 \leq p \leq 1 = u^{(0)}\) as initial bounds on~\(p\).

\textbf{Selecting Branches.}
First, we select a batch of~\(N \in \Nats\) branches.
In the spirit of~\textcite{XuZhangWangEtAl2021}, we leverage the data parallelism of modern CPUs and GPUs to process several branches at once.
In iteration~\(t=1\), the batch only contains the branch~\(\mathcal{X}\).
Which branches we select determines how fast we obtain tight bounds on~\(p\).
We propose the \Prob{} heuristic for selecting branches. Inspired by \FairSquare{}~\cite{AlbarghouthiDAntoniDrewsEtAl2017}, \Prob{} selects the~\(N\) branches~\(\branch[i]\) with the largest~\(\prob[\inp](\branch[i])\).
This heuristic is motivated by the observation that pruning these branches would lead to the largest improvement of~\(\ell^{(t)}, u^{(t)}\).

\textbf{Pruning.}
The next step is to prune those branches~\(\branch[j] \in \mathrm{batch}\), for which we can determine that~\(y = \parg\) is either certainly satisfied or certainly violated.
For this, we first compute~\(\underline{\vec{y}} \leq \fSAT*(\cdot, \NN(\cdot)) \leq \overline{\vec{y}}\) for the entire~\(\mathrm{batch}\) using a bound propagation algorithm for neural networks, such as \CROWN*{}.
%
If~\(\underline{\vec{y}}_j \geq 0\) (\(\vec{y}_j \geq 0\) is certainly satisfied) or~\(\overline{\vec{y}}_j < 0\) (\(\vec{y}_j \geq 0\) is certainly violated), we can prune~\(\branch[j]\), meaning that we remove it from~\(\mathrm{branches}\).
We collect the branches with~\(\underline{\vec{y}}_j \geq 0\) in the set~\(\Xsat[t]\) and the branches with~\(\overline{\vec{y}}_j < 0\) in the set~\(\smash{\Xviol[t]}\), where~\(t \in \Nats\) is the current iteration.

\textbf{Updating Bounds.}
Let~\(\Xsat*[t] = \bigcup_{t'=1}^{t} \Xsat[t']\) and~\(\Xviol*[t] = \bigcup_{t'=1}^{t} \Xviol[t']\), where~\(t \in \Nats\) is the current iteration.
Then,~\(\ell^{(t)} = \prob[\inp](\Xsat*[t]) \leq p\).
Similarly,~\(\ka^{(t)} = \prob[\inp](\Xviol*[t]) \leq \prob[\inp](\fSAT*(\inp, \NN(\inp)) < 0) = 1 - p\).
Therefore,~\(1 - \ka^{(t)} = u^{(t)} \geq p\).
Practically, we only have to maintain the current bounds~\(\ell^{(t)}\) and~\(u^{(t)}\) instead of the sets~\(\Xsat*[t]\) and~\(\Xviol*[t]\).
%

Because~\(\Xsat*[t]\) and~\(\Xviol*[t]\) are a union of disjoint hyperrectangles, exactly computing~\(\prob[\inp](\Xsat*[t])\) and~\(\prob[\inp](\Xviol*[t])\) is feasible for a large class of probability distributions, including most univariate distributions, Mixture Models, and Bayesian Networks.
The precise class of supported probability distributions is discussed in \cref{sec:algo-limitations}.
While we do not account for floating point errors in this paper, our approach can readily be extended to this end.
%

\textbf{Splitting.}\label{sec:splitting}
Splitting refines a branch~\(\branch = \HR{\inp}\) by selecting a dimension~\(d \in [n]\) to split.
We split based on the type of variable that is encoded in~\(d\).
For bounded continuous variables, we split by bisecting~\(\HR{\inp}\) along~\(d\).
For unbounded variables, we split at zero if~\(-\ulx_d = \olx_d = \infty\), at~\(\max(2\ulx_d, 1)\) if~\(-\ulx_d < \olx_d = \infty\), and at~\(\min(2\olx_d, -1)\) if~\(-\infty = \ulx_d < -\olx_d\).
For integer variables, we additionally round the split points to the next smaller, respectively, larger integer.
For dimensions containing a binary indicator of a one-hot encoded categorical variable~\(A\), we jointly split all indicators of~\(A\) such that~\(A\) takes on the value encoded in~\(d\) in one branch and does not take on this value in the other branch.
\Cref{sec:split-details} defines these splitting rules formally.

\textbf{Split Selection.}\label{sec:split-select}
We present three heuristics for selecting the dimension~\(d\) for splitting.
We generally select dimensions encoding unbounded variables first in order to obtain bounded branches, since \ComputeBounds{} usually computes vacuous bounds for unbounded branches.
For bounded variables, the well-known \LongestEdge{} heuristic~\cite{BunelLuTurkaslanEtAl2020} selects the dimension with the largest \emph{edge length}~\(\overline{\inp}_d - \underline{\inp}_d\).
Alternatively, we use a variant of the \BaBSB{} heuristic~\cite{BunelLuTurkaslanEtAl2020}.
\BaBSB{} estimates the improvement in bounds that splitting dimension~\(d\) yields by using a yet less expensive technique than \ComputeBounds{}.
Our variant of \BaBSB{} uses \IntervalArithmetic{}, assuming that we use \CROWN{} for \ComputeBounds{}.
\Cref{sec:babsb} describes our \BaBSB{} variant in detail.
While \LongestEdge{} is more theoretically accessible, \BaBSB{} is practically advantageous, as discussed in \cref{sec:heuristics-additional}.
Combining the advantages of both approaches, we introduce \BaBSBLongestEdge{k}. This heuristic alternates using \BaBSB{} and \LongestEdge{}, using \LongestEdge{} for every k-th split.
If we visualise branches and their descendants from splitting in a branching tree, the splits at level~\(k, 2k, 3k, \ldots\) use \LongestEdge{} while the splits at all other levels use \BaBSB{}.

\subsection{Input Spaces and Input Distributions}\label{sec:algo-limitations}
This section discusses the concrete requirements that the input space~\(\mathcal{X}\) and the input distributions~\(\prob[\inp[i]]\) in \cref{eqn:pvp} need to satisfy for applying \ToolName{}. 
\Cref{sec:mitigating-algo-limitations} discusses how some of these requirements can be mitigated, for example, for using polytopes as input spaces.

\ToolName{} requires~\(\mathcal{X} \subseteq \Reals^n\) to be a hyperrectangle, which can be unbounded.
The dimensions of~\(\mathcal{X}\) may encode discrete random variables.
For each probability distribution~\(\prob[\inp[i]]\), we require a terminating algorithm that computes the exact probability of a hyperrectangle.
This requirement is satisfied by a large class of probability distributions, including discrete distributions with a closed-form probability mass function and univariate continuous distributions with a closed-form cumulative density function, as well as Mixture Models and probabilistic graphical models~\cite{Bishop2007}, such as Bayesian Networks, of such distributions.

\section{Theoretical Analysis}\label{sec:algo-theory}
In this section, we prove that \ToolName{} is a sound probabilistic verification algorithm when instantiated with a suitable \ComputeBounds{} procedure.
We also prove that \ToolName{} is complete under mild assumptions on the probabilistic verification problem when instantiated with suitable \Split{}, \ComputeBounds{}, and \Select{} procedures.
Soundness and completeness are defined in \cref{defn:sound-complete}.
As in \cref{sec:prob-bounds}, we omit indices and superscripts when considering only a single probability~\(p = \pterm\).
We defer all proofs to \cref{sec:proofs}.
Our first result concerns the soundness of \ProbBounds{}.
%
\begin{theorem}[Sound Bounds]\label{lem:sound-bounds}
  Let~\(N \in \Nats\) be a batch size and assume \ComputeBounds{} produces valid bounds.
  Let~\({\left\{(\ell^{(t)}, u^{(t)})\right\}}_{t \in \Nats}\) be the iterates of~\(\ProbBounds(\pterm, N)\).
  It holds that~\(\ell^{(t)} \leq \pterm \leq u^{(t)}\) for all~$t\in\mathbb{N}$.
\end{theorem}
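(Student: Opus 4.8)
The plan is to show that the additive updates in \ProbBounds{} accumulate exactly the probability mass of the regions certified as satisfied or violated, and that these regions are genuine subsets of the satisfaction and violation sets. Write \(\mathcal{S} = \{\vec{x} \in \mathcal{X} \mid \pfun \geq 0\}\) and \(\mathcal{V} = \mathcal{X} \setminus \mathcal{S}\), so that \(\pterm = \prob[\inp](\mathcal{S})\) and \(\prob[\inp](\mathcal{V}) = 1 - \pterm\) hold by definition. It then suffices to prove the identities \(\ell^{(t)} = \prob[\inp](\Xsat*[t])\) and \(u^{(t)} = 1 - \prob[\inp](\Xviol*[t])\) together with the inclusions \(\Xsat*[t] \subseteq \mathcal{S}\) and \(\Xviol*[t] \subseteq \mathcal{V}\): monotonicity of \(\prob[\inp]\) then gives \(\ell^{(t)} = \prob[\inp](\Xsat*[t]) \leq \prob[\inp](\mathcal{S}) = \pterm\) and \(\prob[\inp](\Xviol*[t]) \leq \prob[\inp](\mathcal{V}) = 1 - \pterm\), whence \(u^{(t)} = 1 - \prob[\inp](\Xviol*[t]) \geq \pterm\).

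I would first establish the two inclusions from the soundness of \ComputeBounds{}. A branch \(\branch[j]\) enters \(\Xsat[t]\) only when its computed lower bound satisfies \(\underline{\vec{y}}_j \geq 0\); since \ComputeBounds{} produces valid bounds, \(\pfun \geq \underline{\vec{y}}_j \geq 0\) for every \(\vec{x} \in \branch[j]\), so \(\branch[j] \subseteq \mathcal{S}\). Taking the union over all iterations \(t' \leq t\) gives \(\Xsat*[t] \subseteq \mathcal{S}\), and the symmetric argument using \(\overline{\vec{y}}_j < 0\) gives \(\Xviol*[t] \subseteq \mathcal{V}\).

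The main work, and what I expect to be the crux, is the identity \(\ell^{(t)} = \prob[\inp](\Xsat*[t])\) (and its counterpart for \(u^{(t)}\)), which I would prove by induction on \(t\) from the telescoping update \(\ell^{(t)} = \ell^{(t-1)} + \prob[\inp](\Xsat[t])\). The inductive step closes provided \(\Xsat[t]\) is disjoint, up to a \(\prob[\inp]\)-null set, from \(\Xsat*[t-1]\), so that \(\prob[\inp]\) is additive across the two. This disjointness is exactly the branch-and-bound bookkeeping of \cref{algo:bound-prob}: the branches certified in iteration \(t\) are drawn from the current \(\mathrm{batch}\) of active branches, once a branch is pruned it is removed from \(\mathrm{branches}\) and never split again, and \Split{} only produces sub-branches of still-active branches. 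I would therefore maintain the loop invariant that the active branches together with \(\Xsat*[t]\) and \(\Xviol*[t]\) are pairwise disjoint, overlapping at most on shared hyperrectangle faces; such faces carry zero \(\prob[\inp]\)-mass for the continuous distributions considered and are excluded outright by the integer and one-hot splitting conventions. Granting the invariant, the base case \(\ell^{(0)} = 0 = \prob[\inp](\emptyset)\) and \(u^{(0)} = 1 - \prob[\inp](\emptyset)\) is immediate, and the inductive step reduces to additivity of \(\prob[\inp]\).

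Assembling these ingredients yields \(\ell^{(t)} \leq \pterm \leq u^{(t)}\) for every \(t \in \Nats\), with the index \(t = 0\) covered by the empty accumulated sets. The soundness of \ToolName{} then follows at once, since \cref{algo:overall} returns a verdict only when interval arithmetic applied to \(\fSAT\) with the valid bounds \((\ell_i^{(t)}, u_i^{(t)})\) is conclusive.
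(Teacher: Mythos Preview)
Your proposal is correct and follows essentially the same approach as the paper: establish the inclusions \(\Xsat*[t] \subseteq \mathcal{S}\) and \(\Xviol*[t] \subseteq \mathcal{V}\) from the validity of \ComputeBounds{}, identify \(\ell^{(t)} = \prob[\inp](\Xsat*[t])\) and \(u^{(t)} = 1 - \prob[\inp](\Xviol*[t])\), and conclude by monotonicity of the measure. The only difference is that you spell out the disjointness and inductive bookkeeping behind the identity \(\ell^{(t)} = \prob[\inp](\Xsat*[t])\), whereas the paper simply asserts that \(\ell^{(t)}\) is the total probability of the accumulated pruned branches; your version is thus slightly more explicit but not substantively different.
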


\begin{corollary}[Soundness]\label{thm:algo-sound}
\ToolName{} is sound when using \ComputeBounds{} procedures that compute valid bounds.
\end{corollary}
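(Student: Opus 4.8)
The plan is to obtain \cref{thm:algo-sound} as an essentially immediate consequence of \cref{lem:sound-bounds} together with the soundness of interval arithmetic. Recall from \cref{defn:sound-complete} that soundness requires every verdict the algorithm emits to be correct. Since \ToolName{} (\cref{algo:overall}) only ever returns \texttt{Satisfied} (when $\ell^{(t)} \geq 0$) or \texttt{Violated} (when $u^{(t)} < 0$), it suffices to show that each of these two conditions genuinely certifies feasibility, respectively infeasibility, of the probabilistic verification problem \cref{eqn:pvp}. The argument therefore reduces to verifying that the values $\ell^{(t)}$ and $u^{(t)}$ computed in \cref{algo:overall} are valid lower and upper bounds on $\fSAT(p_1, \ldots, p_v)$.

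First, I would apply \cref{lem:sound-bounds} to each of the $v$ parallel instances of \ProbBounds{} launched by \ToolName{}. Since \ComputeBounds{} is assumed to produce valid bounds, \cref{lem:sound-bounds} yields $\ell_i^{(t)} \leq p_i \leq u_i^{(t)}$ for every $i \in \{1, \ldots, v\}$, where $p_i = \pterm[i]$ as in \cref{eqn:pvp}. Note that this validity holds for \emph{any} pair of bounds yielded by an instance of \ProbBounds{}, so it does not matter from which internal iteration the gathered $\ell_i^{(t)}, u_i^{(t)}$ originate; in particular, the asynchronous interleaving of the $v$ instances introduces no subtlety. Consequently, the true probability vector satisfies $(p_1, \ldots, p_v) \in [\ell_1^{(t)}, u_1^{(t)}] \times \cdots \times [\ell_v^{(t)}, u_v^{(t)}]$.

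Second, I would invoke the soundness of interval arithmetic. \ToolName{} computes $(\ell^{(t)}, u^{(t)})$ by applying \IntervalArithmetic{} to $\fSAT$ with the coordinate-wise input box $[\ell_i^{(t)}, u_i^{(t)}]$. Because $\fSAT$ is, by assumption, a composition of functions admitting the bounding rules discussed in \cref{sec:interval-arithmetic}, and because the true point $(p_1, \ldots, p_v)$ lies inside this input box by the previous step, interval arithmetic returns valid bounds, giving $\ell^{(t)} \leq \fSAT(p_1, \ldots, p_v) \leq u^{(t)}$. A short case analysis then closes the proof: if $\ell^{(t)} \geq 0$ then $\fSAT(p_1, \ldots, p_v) \geq \ell^{(t)} \geq 0$, so $\NN$ is genuinely feasible in \cref{eqn:pvp} and the verdict \texttt{Satisfied} is a valid proof; symmetrically, if $u^{(t)} < 0$ then $\fSAT(p_1, \ldots, p_v) \leq u^{(t)} < 0$, so $\NN$ is genuinely infeasible and \texttt{Violated} is a genuine counterexample.

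The main obstacle here is conceptual rather than technical: one must be careful that \enquote{counterexample} and \enquote{proof} are interpreted correctly in the probabilistic setting of \cref{eqn:pvp}, where the object of interest is the single inequality $\fSAT(p_1, \ldots, p_v) \geq 0$ rather than a universally quantified statement as in \cref{eqn:nn-verif}, and that the stated restriction on admissible satisfaction functions indeed guarantees $\fSAT$ falls within the class for which interval arithmetic computes valid bounds. Beyond these bookkeeping points, no quantitative estimate is required, since \cref{lem:sound-bounds} supplies validity uniformly in $t$ and independently for each of the $v$ instances, and the composition with interval arithmetic preserves it.
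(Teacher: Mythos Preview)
Your proposal is correct and follows essentially the same approach as the paper: the paper's proof is a one-liner stating that the corollary follows from \cref{lem:sound-bounds} together with the soundness of interval arithmetic (Theorem~5.1 of Moore et al.), and your argument simply unpacks this into the two natural steps plus the final case analysis. The additional remarks you make about asynchrony and the admissible form of $\fSAT$ are sound clarifications but not required by the paper's terse presentation.
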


Our remaining theoretical results are concerned with the completeness of \ToolName{}. Concretely, we prove that \ToolName{} instantiated with \Prob{}, \LongestEdge{} or \BaBSBLongestEdge{k}, and \IntervalArithmetic{} or \CROWN{} is complete under a mildly restrictive condition on \cref{eqn:pvp}.
\Cref{sec:proofs-completeness} defines a more general class of pruning and splitting heuristics for which \ToolName{} is complete.

\begin{assumption}\label{assume:one}
  Let~\(v\),~\(\fSAT\),~\(\fSAT*[i]\), and~\(\inp[i]\) be as in \cref{eqn:pvp}.
  Assume~\(\fSAT(p_1, \ldots, p_v) \neq 0\) and~\(\forall i \in [v]: \prob[{\inp[i]}](\pfun[i]{} = 0) = 0\).
\end{assumption}
\Cref{assume:one} is only mildly restrictive, since for every verification problem that does not satisfy \cref{assume:one}, there are similar problems that satisfy the assumption.
Consider the case that~\(\fSAT(p_1, \ldots, p_n) = 0\).
In this case, \cref{eqn:pvp} does not satisfy \cref{assume:one}.
However, a slightly stronger verification problem concerned with~\(\fSATprime(p_1, \ldots, p_n) = \fSAT(p_1, \ldots, p_n) - \varepsilon\) for an arbitrarily small~\(\varepsilon > 0\) satisfies \cref{assume:one}.
\Cref{sec:proofs-completeness} discusses \cref{assume:one} in more detail.

To prove the completeness of \ToolName{}, we first establish that \ProbBounds{} produces a sequence of lower and upper bounds that converge towards each other.

\begin{lemma}[Converging Probability Bounds]\label{thm:converging-bounds-mainbody}
  Let~\(N \in \Nats\) be a batch size.
  Let~\({\left\{(\ell^{(t)}, u^{(t)})\right\}}_{t \in \Nats}\) be the iterates of~\(\ProbBounds{}(\pterm, N)\) instantiated with \Prob{}, \LongestEdge{} or \BaBSBLongestEdge{k}, and \IntervalArithmetic{} or \CROWN{}.
  Assume~\(\prob[\inp](\fSAT*(\inp, \NN(\inp)) = 0) = 0\) as in \cref{assume:one}.
  Then, 
  \begin{equation*}
    \lim_{t \to \infty} \ell^{(t)} = \lim_{t \to \infty} u^{(t)} = \pterm.
  \end{equation*}
\end{lemma}

\begin{theorem}[Completeness]\label{thm:complete-mainbody}
  When instantiated with \ProbBounds{} as in \cref{thm:converging-bounds-mainbody} and \IntervalArithmetic{} or \CROWN{} for \ComputeBounds{}, \ToolName{} is complete for verification problems satisfying \cref{assume:one}.
\end{theorem}
Unfortunately, our completeness result does not apply to the \BaBSB{} heuristic, which provides the best empirical performance when used in \ToolName{}.
However, our result applies to \BaBSBLongestEdge{k}, which yields comparable performance as \BaBSB{}, as we show in \cref{sec:heuristics-additional-experiments}.

\section{Experiments}\label{sec:experiments}
\begin{table*}
  \centering
  \caption{%
    Our benchmarks. Network size is the size of the neural network given as \#layers$\times$layer size.
  }\label{tab:benchmarks}
  \vspace*{0.1in}
  \begin{tabular}{l@{\hspace{1em}}cllc}
    \toprule
    \textbf{Benchmark} & \textbf{Input Dimension} & \textbf{Input Distributions} & \textbf{Network Size} & \textbf{Source} \\ \midrule
    \multirow{2}*{\textbf{FairSquare}}    & \multirow{2}*{2--3} & independent         & \multirow{2}*{1$\times$1, 1$\times$2} & \multirow{2}*{\cite{AlbarghouthiDAntoniDrewsEtAl2017}} \\
                           &      & 2 Bayesian Networks & \\[.25em]
    \textbf{ACAS Xu}       & 5    & uniform             & 6$\times$50 & \cite{KatzBarrettDillEtAl2017} \\
    \textbf{VCAS}          & 4    & uniform             & 1$\times$21 & \cite{ZhangWangKwiatkowska2024} \\[.25em]
    \multirow{2}*{\textbf{MiniACSIncome}} & \multirow{2}*{1--8}  & \multirow{2}*{Bayesian Network} & 1$\times$10--10000 & \multirow{2}*{Own} \\
    & & & 1$\times$10 -- 10$\times$10 \\
    \bottomrule
  \end{tabular}
\end{table*}
We apply our algorithms to verify the demographic parity fairness notion, count the number of safety violations of neural network controllers in safety-critical systems, and quantify the robustness of a neural network.
\Cref{tab:benchmarks} gives an overview of our benchmarks.
All verification problems are defined formally in \cref{sec:specs}.
For all benchmarks, \ProbBounds{} use the \Prob{} and \BaBSB{} heuristics and \CROWN*{} for \ComputeBounds{}, while \ToolName{} uses \IntervalArithmetic{}.

As our results show, \ToolName{} (\cref{algo:overall}) outpaces the probabilistic verification algorithms \FairSquare{}~\cite{AlbarghouthiDAntoniDrewsEtAl2017} and \SpaceScanner{}~\cite{ConverseFilieriGopinathEtAl2020}.
Additionally, we show that \ProbBounds{} (\cref{algo:bound-prob}) compares favourably to the \ProVeSLR{}~\cite{MarzariRoncolatoFarinelli2023}, \eProVe{}~\cite{MarzariCorsiMarchesiniEtAl2024}, and \PreimgApprox{}~\cite{ZhangWangKwiatkowska2024} algorithms for \#DNN verification~\cite{MarzariCorsiCicaleseEtAl2023}, which corresponds to probabilistic verification with uniformly distributed inputs.

While no code is publicly available for \SpaceScanner{}, running \ProVeSLR{} is very computationally expensive.
To enable a faithful comparison, we run our experiments on less powerful hardware (HW1) compared to the hardware used by~\textcite{ConverseFilieriGopinathEtAl2020} and~\textcite{MarzariRoncolatoFarinelli2023} and compare the runtime of our algorithms to the runtimes reported by these authors.
All other results reported in this paper were obtained on HW1, including the results for \FairSquare{}, \eProVe{}, and \PreimgApprox{}.

To test the limits of \ToolName{}, we introduce a new, challenging benchmark:
MiniACSIncome is based on the ACSIncome dataset~\cite{DingHardtMillerEtAl2021}.
It consists of datasets of varying input dimensionality, probability distributions for these datasets, and neural networks trained on these datasets.
Being based on real-world US census data, MiniACSIncome offers more complex input distributions with higher input dimensionality than existing probabilistic verification benchmarks.
\ToolName{} solves seven of eight instances in MiniACS\-Income within an hour.

\textbf{Hardware and Implementation.}
We implement \ToolName{} in Python, leveraging \texttt{PyTorch}~\cite{PaszkeGrossMassaEtAl2019} and \texttt{auto\_LiRPA}~\cite{XuShiZhangEtAl2020b}. 
We run all experiments on a Ubuntu 22.04 desktop with an Intel i7--4820K CPU, 32 GB of memory, and no GPU (HW1).
\Cref{sec:details-hardware} compares our hardware to the hardware used by \textcite{ConverseFilieriGopinathEtAl2020} and~\textcite{MarzariRoncolatoFarinelli2023}.

\subsection{FairSquare Benchmark}\label{sec:fairsquare-experiment}
\Textcite{AlbarghouthiDAntoniDrewsEtAl2017} evaluate their \FairSquare{} algorithm on an application derived from the Adult dataset~\cite{AdultDataset1996}.
In particular, they verify whether three small neural networks satisfy two fairness notions with respect to a person's sex under three different distributions of the network input: a distribution of entirely independent univariate variables and two Bayesian Networks.
\Cref{sec:fairsquare-additional} describes the FairSquare benchmark in more detail.


\Cref{fig:fairsquare} compares the runtimes of \ToolName{} and \FairSquare{} on the FairSquare benchmark.
\ToolName{} significantly outperforms \FairSquare{}.
In particular, \ToolName{} solves four more instances than \FairSquare{} within the timeout of 15 minutes. 
For the instances that both tools solve, the median runtime of \ToolName{} is 4s (mean: 5s, max: 17s) compared to 44s for \FairSquare{} (mean: 109s, max: 657s).
\Cref{sec:fairsquare-additional} contains the detailed results of this experiment.
%

\begin{figure}
  \centering
  \begin{tikzpicture}
    \begin{semilogyaxis}[
      width=197.5pt,  
      height=90pt,
      scale only axis,
      ymax=900,
      xmin=1, xmax=18,
      xlabel={\# Solved Instances}, 
      ylabel={Runtime (s)},
      ytick={1,10,100,900},
      yticklabels={$10^{0}$,$10^{1}$,$10^{2}$,TO},
      ylabel shift=-5pt,
      legend pos=north west,
      legend cell align=left,
    ]
      \addplot [Colors-B,very thick,mark=x] table[x=Nr,y=Runtime,col sep=comma] {data/FairSquareAlgorithmRuntimes.csv};
      \addlegendentry{\FairSquare{}\citetalias{AlbarghouthiDAntoniDrewsEtAl2017}}

      \addplot [Colors-A,very thick,mark=x] table[x=Nr,y=Runtime,col sep=comma] {data/FairSquareRuntimeProbBaBSBCROWN.csv};
      \addlegendentry{\ToolName{} (Ours)}
    \end{semilogyaxis}
  \end{tikzpicture}
  \tikzexternaldisable%
  \caption{%
    FairSquare benchmark results.
    The timeout (TO) is 15min.
    \quad \citetalias{AlbarghouthiDAntoniDrewsEtAl2017}\textcite{AlbarghouthiDAntoniDrewsEtAl2017}
  }\label{fig:fairsquare}
\end{figure}

\subsection{Aircraft Collision Avoidance}
The ACAS~Xu networks~\cite{KatzBarrettDillEtAl2017} are a suite of 45 networks, together forming a collision avoidance system for crewless aircraft.
Each ACAS~Xu network predicts a horizontal turning direction to avoid collision with another aircraft.
VCAS~\cite{JulianKochenderfer2019} is a similar system that predicts vertical steering directions for avoiding collisions.
We reproduce the ACAS~Xu safety experiments of~\textcite{MarzariRoncolatoFarinelli2023}, the ACAS~Xu robustness experiments of~\textcite{ConverseFilieriGopinathEtAl2020}, and the VCAS correctness experiment of~\textcite{ZhangWangKwiatkowska2024}.

\textbf{ACAS~Xu Safety.}
In this experiment, we seek to \emph{quantify} the number of violations (violation rate) of several ACAS~Xu networks~\cite{KatzBarrettDillEtAl2017}.
This corresponds to computing bounds on \cref{eqn:safety-intro} under a uniform distribution of~\(\inp\).
%
%
We compare \ProbBounds{} to the \ProVeSLR{} and \eProVe{} algorithms for \#DNN verification.
\ProVeSLR{} computes the violation rate exactly, while \eProVe{} computes an upper bound on the violation rate that is sound with a certain predefined probability.
In contrast, \ProbBounds{} provides sound bounds on the violation rate at any time during its execution.

\Cref{tab:cx-counting-compare} compares \ProbBounds{} to~\ProVeSLR{} and \eProVe{} for the ACAS~Xu networks investigated by \textcite{MarzariRoncolatoFarinelli2023}.
For all three networks, \ProbBounds{} can tighten the bounds to a margin of less than~$0.7\%$ within one hour, while \ProVeSLR{} requires at least four hours to compute the exact violation rate.
In comparison to \eProVe{}, \ProbBounds{} produces tighter sound bounds within 10 seconds in two of three cases, while \eProVe{} requires at least 57~seconds to derive a probably sound upper bound for these cases.
The extended comparison in \cref{sec:acasxu-safety-additional} reveals that in 12 from a total of 36 cases, \ProbBounds{} computes a tighter sound bound faster than \eProVe{} computes a probably sound upper bound.

\begin{table*}
  \centering
  \caption{%
    Comparison of \ProbBounds{}, \ProVeSLR{}, and \eProVe{}.
    We run \ProbBounds{} with different time budgets (10s, 1m, 1h) and report the lower and upper bounds (\(\ell, u\)) computed within this time budget.
    In contrast, \ProVeSLR{} computes the exact probabilities (VR), and \eProVe{} computes a 99.9\% confidence (confid.) upper bound.
    The probabilities and probability bounds are given as percentages.
    The runtimes (Rt) of \ProVeSLR{} are taken from \textcite{MarzariRoncolatoFarinelli2023}.
  }\label{tab:cx-counting-compare}
  \vspace*{0.1in}
  \begin{tabular}{c@{\hspace{1em}}cccrrcr}
    \toprule
    & \multicolumn{3}{c}{\bfseries \ProbBounds{} (Ours)} 
    & \multicolumn{2}{c}{\bfseries\ProVeSLR{}\citetalias{MarzariRoncolatoFarinelli2023}} 
    & \multicolumn{2}{c}{\bfseries\eProVe{}\citetalias{MarzariCorsiMarchesiniEtAl2024}} 
    \\ \cmidrule(lr){2-4}\cmidrule(lr){5-6}\cmidrule(lr){7-8}
    & {\bfseries 10s} & {\bfseries 1m} & {\bfseries 1h}
    & \multicolumn{2}{c}{\bfseries Exact} 
    & \multicolumn{2}{c}{\bfseries 99.9\% confid.} 
    \\
    \(\NN\) 
    & \(\ell, u\) & \(\ell, u\) & \(\ell, u\)
    & VR & Rt & \(u\) & Rt
    \\ \midrule
    $N_{4,3}$ & $  0.17\%,   2.92\%$ & $  0.61\%,   2.27\%$ & $  1.12\%,   1.75\%$ & $1.43\%$ & 8h 46m  & $3.61\%$ & 65s \\
    $N_{4,9}$ & $  0.00\%,   3.36\%$ & $  0.00\%,   1.55\%$ & $  0.08\%,   0.29\%$ & $0.15\%$ & 12h 21m & $0.73\%$ & 20s \\
    $N_{5,8}$ & $  0.89\%,   4.16\%$ & $  1.55\%,   3.10\%$ & $  1.97\%,   2.57\%$ & $2.20\%$  & 4h 35m  & $4.52\%$ & 57s \\
    \bottomrule
  \end{tabular} \\[0.5em]
  {\small
    \citetalias{MarzariRoncolatoFarinelli2023}\citet{MarzariRoncolatoFarinelli2023}
    \quad
    \citetalias{MarzariCorsiMarchesiniEtAl2024}\citet{MarzariCorsiMarchesiniEtAl2024}
  }
\end{table*}

\textbf{ACAS~Xu Robustness.}\label{sec:experiment-acasxu-robustness}
We replicate the experiments of~\textcite{ConverseFilieriGopinathEtAl2020} who apply \SpaceScanner{} to quantify the robustness of ACAS~Xu network~\(N_{1,1}\)~\cite{KatzBarrettDillEtAl2017} under adversarial perturbations.
Overall, the experiment consists of 125 verification problems that concern the probability of obtaining a particular class for uniformly distributed perturbed inputs close to one of 25 reference input points.

The mean runtime of \ProbBounds{} for these~125 instances is~22~seconds (median:~6s, maximum:~213s).
In contrast, \textcite{ConverseFilieriGopinathEtAl2020} report a mean runtime of 33 minutes per instance for \SpaceScanner{} while running their experiments on superior hardware.
\Cref{sec:acasxu-robustness-additional} contains more details on this experiment.

\textbf{VCAS Correctness.}
\Textcite{ZhangWangKwiatkowska2024} study whether a VCAS network correctly predicts to maintain course in a scenario where there is no risk of collision.
Concretely, they verify whether the VCAS network provides the correct output at least~\(90\%\) of the time.
\ToolName{} is able to prove this within~\(0.13\)s.
In contrast, \PreimgApprox{} requires~\(16.42\)s for computing an unsound empirical lower bound on the probability of obtaining correct outputs.

\subsection{MiniACSIncome}
To test the limits of \ToolName{}, we introduce the MiniACSIncome benchmark.
MiniACSIncome is derived from the ACSIncome dataset~\cite{DingHardtMillerEtAl2021}, a replacement of the Adult dataset~\cite{AdultDataset1996} that is better suited for fair machine learning research.
The task is to predict whether a person's yearly income exceeds \$50\,000 using features such as the person's age, sex,  and education.
Our benchmark provides probabilistic verification problems of various degrees of difficulty.
We apply \ToolName{} to MiniACSIncome and compare it to a baseline approach for solving MiniACSIncome.

\textbf{Benchmark.}
To create probabilistic verification problems of increasing difficulty, we consider an increasing number of input variables from ACSIncome.
The smallest instance, MiniACSIncome-1, only contains the binary \enquote{SEX} variable. 
In contrast, the largest instance, MiniACSIncome-8, contains \enquote{SEX} and seven more variables from ACSIncome, including age, education, and working hours per week.
Our benchmark's task is to verify the demographic parity of neural networks with varying input dimension under a Bayesian Network as input distribution.
These Bayesian Networks provide complex multi-modal input distributions, as they fit the real-world US census data in ACSIncome.
\Cref{sec:miniacsincome-extra} describes MiniACSIncome in detail.
%

\textbf{Results.}
Since all variables in MiniACSIncome are discrete, a baseline approach for verifying the demographic parity of a MiniACSIncome network is to enumerate all values in the input space. 
\Cref{fig:MiniACSIncome-mainbody} displays the runtime of \ToolName{} and the baseline enumeration approach for shallow 10-neuron neural networks with increasing input size.
While enumeration is faster than \ToolName{} when the network can be evaluated for all discrete values in one batch, enumeration falls behind \ToolName{} as soon as this becomes infeasible.
\ToolName{} can solve MiniACSIncome for up to seven input variables in less than 30 minutes, only exceeding the timeout of one hour for eight input variables.
While we only consider a small network here, the runtime of \ToolName{} is largely unaffected by network size on this benchmark.
This unexpected result can be attributed to both large and small networks learning similar decision boundaries for MiniACSIncome.
\Cref{sec:miniacsincome-netsize} discusses this result in more detail.

\begin{figure}
  \centering
  \begin{tikzpicture}
    \begin{axis}[
      name=insize,
      width=245pt,  
      height=120pt,  
      xmin=1, xmax=8,
      ymin=0, ymax=3600,
      ytick={60,600,1200,1800,2400,3000,3600},
      yticklabels={1,10,20,30,40,50,TO},
      xlabel={\# Input Variables}, 
      ylabel={Runtime (min)},
      ylabel shift=-5pt,
      legend pos=north west,
      legend cell align=left,
    ]
      \addplot [Colors-C,very thick,mark=x] table[x=InputVariables,y=EnumerateRuntime,col sep=comma] {data/MiniACSIncomeInputSize.csv};
      \addlegendentry{Enumerate}
      
      \addplot [Colors-A,very thick,mark=x] table[x=InputVariables,y=VerifyRuntime,col sep=comma] {data/MiniACSIncomeInputSize.csv};
      \addlegendentry{\ToolName{} (Ours)}
    \end{axis}
  \end{tikzpicture}
  \tikzexternaldisable%
  \caption{%
    MiniACSIncome results.
    The timeout (TO) is one hour.
   }\label{fig:MiniACSIncome-mainbody}
\end{figure}

\section{Conclusion}\label{sec:conclude}
Our \ToolName{} algorithm for the probabilistic verification of neural networks significantly outpaces existing algorithms for probabilistic verification.
We achieve this speedup by applying a massively parallel branch and bound algorithm based on bound propagation algorithms for neural networks.
Our MiniACSIncome benchmark provides a challenging testbed for future probabilistic verification algorithms.
%

\newpage
%
%

\section*{Impact Statement}
This work is concerned with providing mathematical guarantees on the output distribution of a neural network given a distribution of the inputs.
Since mathematical guarantees enhance the transparency of neural networks and facilitate their faithful auditing, we anticipate that our work will have a predominantly positive societal impact.
However, obtaining an input distribution for probabilistic verification requires significant domain expertise and careful design.
For example, a poorly designed input distribution may lead to certifying an unfair classifier as fair.
Therefore, verification results are only meaningful if the concrete probabilistic verification problem that was solved is reported and made available alongside the verification result, including the input distribution. 
Ideally, verification should be conducted by a separate certification body for critical applications.

{%
  \def\UrlBreaks{\do\a\do\b\do\c\do\d\do\e\do\f\do\g\do\h\do\i\do\j\do\k\do\l\do\m\do\n\do\o\do\p\do\q\do\r\do\s\do\t\do\u\do\v\do\w\do\x\do\y\do\z\do\A\do\B\do\C\do\D\do\E\do\F\do\G\do\H\do\I\do\J\do\K\do\L\do\M\do\N\do\O\do\P\do\Q\do\R\do\S\do\T\do\U\do\V\do\W\do\X\do\Y\do\Z}

  \bibliography{references}
  \bibliographystyle{icml2025}
}

\newpage
\appendix
\onecolumn

\section{Probabilistic Verification Problems}\label{sec:specs}\label{sec:spec-extended-example}
This section contains the formal definitions of all probabilistic verification problems in this paper.

\begin{example}\label{example:demographic-parity}
  We express the demographic parity fairness notion from \cref{eqn:demographic-parity-intro} as a probabilistic verification problem.
  %
  Let~\(\mathcal{X} \subseteq \Reals^n\) be an input space that encodes information about a person, including a categorical protected attribute, such as gender, race, or disability status that is one-hot encoded at the indices~\(A \subset [n]\).
  We assume a single historically advantaged category encoded at the index~\(a \in A\).
  Consider a neural network~\(\NN: \Reals^n \to \Reals^2\) that acts as a binary classifier making a decision affecting a person, such as hiring or credit approval. 
  The neural network produces a score for each class and assigns the class with the higher score to an input.
  We rewrite \cref{eqn:demographic-parity-intro} as
  \begin{alignat*}{4}
    && \frac{
        \prob[\vec{x}](\NN(\inp) = \texttt{yes} \mid \inp\ \text{is disadvantaged})
      }{
        \prob[\vec{x}](\NN(\inp) = \texttt{yes} \mid \inp\ \text{is advantaged})
      } &\geq \gamma \\
    \Longleftrightarrow\quad
    && \frac{
        \prob[\inp]({\NN(\inp)}_1 - {\NN(\inp)}_2 \geq 0 \mid \inp_a \leq 0)
      }{
        \prob[\inp]({\NN(\inp)}_1 - {\NN(\inp)}_2 \geq 0 \mid \inp_a \geq 1)
      } &\geq \gamma \\
    \Longleftrightarrow\quad
    && \frac{
        \prob[\inp]({\NN(\inp)}_1 - {\NN(\inp)}_2 \geq 0 \wedge \inp_a \leq 0) / \prob[\inp](\inp_a \leq 0)
      }{
        \prob[\inp]({\NN(\inp)}_1 - {\NN(\inp)}_2 \geq 0 \wedge \inp_a \geq 1) / \prob[\inp](\inp_a \geq 1)
      } &\geq \gamma \\
    \Longleftrightarrow\quad
    && \frac{
        \prob[\inp](\min({\NN(\inp)}_1 - {\NN(\inp)}_2, -\inp_a) \geq 0) / \prob[\inp](-\inp_a \geq 0)
      }{
        \prob[\inp](\min({\NN(\inp)}_1 - {\NN(\inp)}_2, \inp_a - 1) \geq 0) / \prob[\inp](\inp_a - 1\geq 0)
      } &\geq \gamma \\
    \Longleftrightarrow\quad
    && \frac{
        \prob[\inp](\fSAT*[1](\inp, \NN(\inp)) \geq 0) / \prob[\inp](\fSAT*[2](\inp, \NN(\inp)) \geq 0)
      }{
        \prob[\inp](\fSAT*[3](\inp, \NN(\inp)) \geq 0) / \prob[\inp](\fSAT*[4](\inp, \NN(\inp)) \geq 0)
      } - \gamma &\geq 0 \\
    \Longleftrightarrow\quad
    && \fSAT{\left(
        \prob[\inp](\fSAT*[1](\inp, \NN(\inp)) \geq 0), 
        \ldots,
        \prob[\inp](\fSAT*[4](\inp, \NN(\inp)) \geq 0)
      \right)} &\geq 0
  \end{alignat*}
  where, \(\fSAT(p_1,p_2,p_3,p_4) = (p_1 p_4)/(p_2 p_3) - \gamma\), \(\fSAT*[1](\inp, \NN(\inp)) = \min({\NN(\inp)}_1 - {\NN(\inp)}_2, -\inp_a)\), \(\fSAT*[2](\inp, \NN(\inp))=-\inp_a\), \(\fSAT*[3](\inp, \NN(\inp)) = \min({\NN(\inp)}_1 - {\NN(\inp)}_2, \inp_a - 1)\), and \(\fSAT*[4](\inp, \NN(\inp)) = \inp_a -1\). 
\end{example}

\subsection{Parity of Qualified Persons}\label{sec:specs-parity-of-qualified}
The following probabilistic verification problem concerns verifying the parity of qualified persons, a variant of demographic parity that only considers the subpopulation of persons qualified for, for example, hiring~\cite{AlbarghouthiDAntoniDrewsEtAl2017}.
Let~\(\mathcal{X} \subseteq \Reals^n\),~\(A \subset [n]\),~\(a \in A\), and~\(\NN: \Reals^n \to \Reals^2\) be as in \cref{example:demographic-parity}.
Additionally, let~\(q \in [n] \setminus A\) and~\(\hat{q} \in \Reals\), such that persons with~\(\inp_q \geq \hat{q}\) are considered to be qualified.
In their extended set of experiments, \textcite{AlbarghouthiDAntoniDrewsEtAl2017} consider a~\(q\) that encodes age and~\(\hat{q} = 18\) so that only persons who are at least 18 years old are considered to be qualified.
The parity of qualified persons fairness notion is
\begin{alignat*}{4}
  && \frac{
      \prob[\vec{x}](\NN(\inp) = \texttt{yes} \mid \inp\ \text{is disadvantaged} \wedge \inp\ \text{is qualified})
    }{
      \prob[\vec{x}](\NN(\inp) = \texttt{yes} \mid \inp\ \text{is advantaged} \wedge \inp\ \text{is qualified})
    } &\geq \gamma \\
  \Longleftrightarrow\quad
  && \frac{
    \prob[\inp]({\NN(\inp)}_1 - {\NN(\inp)}_2 \geq 0 \mid \inp_a \leq 0 \wedge \inp_q \geq \hat{q})
    }{
      \prob[\inp]({\NN(\inp)}_1 - {\NN(\inp)}_2 \geq 0 \mid \inp_a \geq 1 \wedge \inp_q \geq \hat{q})
    } &\geq \gamma \\
  \Longleftrightarrow\quad
  && \frac{
    \prob[\inp]({\NN(\inp)}_1 - {\NN(\inp)}_2 \geq 0 \mid \min(-\inp_a, \inp_q - \hat{q}) \geq 0)
    }{
      \prob[\inp]({\NN(\inp)}_1 - {\NN(\inp)}_2 \geq 0 \mid \min(\inp_a - 1, \inp_q - \hat{q}) \geq 0)
    } &\geq \gamma \\
  \Longleftrightarrow\quad
  && \fSAT{\left(
      \prob[\inp](\fSAT*[1](\inp, \NN(\inp)) \geq 0), 
      \ldots,
      \prob[\inp](\fSAT*[4](\inp, \NN(\inp)) \geq 0)
    \right)} &\geq 0
\end{alignat*}
where~\(\gamma \in [0, 1]\),~\(\fSAT(p_1,p_2,p_3,p_4) = (p_1 p_4)/(p_2 p_3) - \gamma\), \(\fSAT*[1](\inp, \NN(\inp)) = \min({\NN(\inp)}_1 - {\NN(\inp)}_2, -\inp_a, \inp_q - \hat{q})\), \(\fSAT*[2](\inp, \NN(\inp)) = \min(-\inp_a, \inp_q - \hat{q})\), \(\fSAT*[3](\inp, \NN(\inp)) = \min({\NN(\inp)}_1 - {\NN(\inp)}_2, \inp_a - 1, \inp_q - \hat{q})\), and~\(\fSAT*[4](\inp, \NN(\inp)) = \min(\inp_a -1, \inp_q - \hat{q})\). 

\subsection{ACAS Xu Safety}
Next, we consider \cref{eqn:safety-intro} for an ACAS Xu network, where to be safe means satisfying property~\(\phi_2\) of~\textcite{KatzBarrettDillEtAl2017}.
For quantifying the number of violations, we first define what it means for an ACAS~Xu neural network~\(\NN: \Reals^5 \to \Reals^5\) to violate~\(\phi_2\).
Using the satisfaction functions of~\textcite{BauerMarquartBoetiusLeueEtAl2021}, violating~\(\phi_2\) means
\begin{equation}
  \fSAT*(\inp, \NN(\inp)) = \max_{i = 2}^{5} {\NN(\inp)}_i - {\NN(\inp)}_1 < 0 
  \quad \forall \inp \in \mathcal{X}_{\phi_2} \cap \mathcal{X},\label{eqn:acasxu-phitwo-verif}
\end{equation}
where~\(\mathcal{X}\) is the bounded hyperrectangular input space of~\(\NN\) and
\begin{equation*}
  \mathcal{X}_{\phi_2} = [55947.961, \infty] \times \Reals^2 \times [1145, \infty] \times [-\infty, 60].
\end{equation*}
We refer to \textcite{KatzBarrettDillEtAl2017} for an interpretation of~\(\phi_2\) in the application context.
Quantifying the number of violations with respect to~\(\phi_2\) corresponds to computing
\begin{equation*}
  \ell \leq \prob[\inp](\fSAT*(\inp, \NN(\inp)) < 0) = \prob[\inp](-\fSAT*(\inp, \NN(\inp)) \geq 0) \leq u,
\end{equation*}
where~\(\fSAT*\) is as in \cref{eqn:acasxu-phitwo-verif} and~\(\inp\) is uniformly distributed on~\(\mathcal{X}_{\phi_2} \cap \mathcal{X}\) with all points outside~\(\mathcal{X}_{\phi_2} \cap \mathcal{X}\) having zero probability.

\subsection{ACAS Xu Robustness}
For the ACAS~Xu robustness experiment in \cref{sec:experiment-acasxu-robustness}, we solve five probabilistic verification problems for each reference input~\(\inp\)~---~one for each of the five classes.
Our goal is to bound the probability of~\(\NN\) classifying an input~\(\vec{x}'\) as class~\(i \in [5]\), where~\(\vec{x}'\) is close to the reference input~\(\inp\) in the first two dimensions and identical to~\(\inp\) in the remaining dimensions.

Let~\(\NN\) be the ACAS~Xu network~\(N_{1,1}\) of \textcite{KatzBarrettDillEtAl2017} with input space~\(\mathcal{X} = \HR{\inp}\).
Let~\(\inp\) be a reference input.
Note that the ACAS~Xu networks assign the class with the \emph{minimal} score to an input instead of using the maximal score.
For bounding the probability of obtaining class~\(i \in [5]\) for inputs close to~\(\inp\), we compute bounds on
\begin{gather*}
  \prob[\vec{x}'](\fSAT*(\vec{x}', \NN(\vec{x}')) \geq 0), \\
  \fSAT*(\vec{x}', \NN(\vec{x}')) = \min_{\substack{j = 1\\ j\neq i}}^5 {\NN(\vec{x}')}_j - {\NN(\vec{x}')}_i
\end{gather*}
where~\(\vec{x}'\) is uniformly distributed on the set~\(
  \mathcal{X} \cap 
  \left(
    [\inp_{1:2} - 0.05\cdot\vec{w}_{1:2}, \inp_{1:2} + 0.05\cdot\vec{w}_{1:2}] 
    \times \{\inp_{3:5}\}
  \right),
\)
where~\(\vec{w} = \overline{\inp} - \underline{\inp}\) and~\(\vec{z}_{i:j}\) is the vector containing the elements~\(i, \ldots, j\) of a vector~\(\vec{z}\).

\subsection{VCAS Correctness}
In the VCAS correctness experiment in \cref{sec:experiment-acasxu-robustness}, the goal is to prove whether the VCAS network~\(\NN: \Reals^4 \to \Reals^9\) of~\textcite{ZhangWangKwiatkowska2024} satisfies
\begin{equation*}
  \prob[\vec{x}](\fSAT*(\inp, \NN(\inp)) \geq 0) \geq 0.9,
\end{equation*}
where~\(\fSAT*(\inp, \NN(\inp)) = {\NN(\inp)}_{1} - \min_{j=2}^{9} {\NN(\inp)}_{j}\), which encodes that the network predicts \enquote{Clear Of Conflict}, and~\(\inp\) is uniformly distributed on the set~\([-8000, 0] \times [0, 100] \times \{-30\} \times [0, 40]\).
We refer to~\textcite{ZhangWangKwiatkowska2024} for an interpretation of this specification in the application context.

\subsection{Useful Modelling Techniques}\label{sec:mitigating-algo-limitations}
As discussed in \cref{sec:algo-limitations}, \ToolName{} requires the input space~\(\mathcal{X}\) to be a hyperectangle.
Further, it requires that each input distribution~\(\prob[\inp]\) allows for computing the probability of a hyperrectangle in closed form.
This section shows how some of these restrictions can be mitigated.
Concretely, we show how to use multivariate normal distributions as input distributions and polytopes as input spaces.

We first show how we can apply \ToolName{} to multivariate normal distributions by transforming the input distribution and the network to verify.
Consider a multivariate normal distribution~\(\prob[\vec{z}]\) with mean~\(\varvec{\mu}\) and covariance~\(\varvec{\Sigma} = \mat{A}\transp{\mat{A}}\).
If~\(\varvec{\Sigma}\) is diagonal, the probability of a hyperrectangle has a closed-form solution, so that we can compute it efficiently.
Here, we are interested in the case where~\(\varvec{\Sigma}\) is not diagonal, so that we can not compute the probability of a hyperrectangle directly.
In this case, let~\(\prob[\inp]\) be a standard multivariate normal distribution.
Now,~\(\vec{z} = \mat{A}\inp + \varvec{\mu}\) is distributed according to~\(\prob[\vec{z}]\).
Therefore, by prepending the linear transformation~\(\mat{A}\inp + \varvec{\mu}\) to~\(\NN\), we can apply \ToolName{} to general multivariate normal distributions, since the probability of a hyperrectangle under a standard multivariate normal distribution has a closed-form solution.

Second, we show how to apply \ToolName{} to polytopal input spaces, even though they can not be used as input space directly.
Let~\(\mathcal{P} = \{\vec{x} \in \Reals^n \mid \mat{A}\vec{x} \leq \vec{b}\}\) be a polytope and let~\(\mathcal{X} \supseteq \mathcal{P}\) be a hyperrectangle enclosing~\(\mathcal{P}\).
By using~\(\mathcal{X}\) as the input space and using
\begin{equation*}
  \prob[\vec{x}](\fSAT*(\vec{x}, \NN(\vec{x})) \geq 0 \mid \mat{A}\vec{x} \leq \vec{b}) 
  = \frac{%
      \prob[\inp](\fSAT*(\vec{x}, \NN(\vec{x})) \geq 0 \land \mat{A}\vec{x} \leq \vec{b})
    }{%
      \prob[\inp](\mat{A}\vec{x} \leq \vec{b})
    }
\end{equation*}
we can apply \ToolName{} to polytopes as input spaces.
However, before applying \ToolName{}, we should check whether~\(\prob[\inp](\mat{A}\vec{x} \leq \vec{b}) > 0\), since, otherwise, the verification problem is ill-defined.
We can apply \ProbBounds{} for this purpose by computing bounds on~\(\prob[\inp](\mat{A}\vec{x} \leq \vec{b})\).

\section{Additional Details on \ProbBounds{}}
This section contains additional details on \ProbBounds{} (\cref{algo:bound-prob}). 
It includes a detailed description of our procedure for splitting dimensions and a motivation and additional details on our \BaBSB{} \Split{} heuristic.

\subsection{Splitting}\label{sec:split-details}
\Cref{sec:splitting} describes how to split a dimension~\(d \in [n]\) to refine a branch.
This section formally defines the splitting procedure that \ProbBounds{} applies.
A dimension can encode several types of variables. 
We consider continuous variables, such as normalised pixel values, integer variables, such as age, and dimensions containing one indicator of a one-hot encoded categorical variable like gender.
The type of variable encoded in~\(d\) determines how we split~\(d\).
\begin{itemize}
  \item For continuous variables, we further differentiate whether~\(\branch\) is bounded, unbounded in one direction, or unbounded in both directions in dimension~\(d\).
  \begin{itemize}
    \item If~\(\branch\) is bounded in dimension~\(d\), we bisect~\(\branch\) along~\(d\) resulting in two new branches~\(\HR[']{\inp}\) and~\(\HR['']{\inp}\).
      Concretely,~\(\underline{\inp}'_{d'} = \underline{\inp}''_{d'} = \underline{\inp}_{d'}\) and~\(\overline{\inp}'_{d'} = \overline{\inp}''_{d'} = \overline{\inp}_{d'}\) for all~\(d' \in [n] \setminus \{d\}\) while~\(\overline{\inp}'_d = \underline{\inp}''_d = (\underline{\inp}_d + \overline{\inp}_d)/2\),~\(\underline{\inp}'_d = \underline{\inp}_d\), and~\(\overline{\inp}''_d = \overline{\inp}_d\).
    \item If~\(\branch\) is unbounded in both directions in~\(d\), we split~\(d\) at zero, so that~\(\overline{\inp}_d' = \underline{\inp}_d'' = 0\). The remaining bounds of the new branches~\(\HR[']{\inp}\) and~\(\HR['']{\inp}\) are as in the bounded case.
    \item If~\(\branch\) is bounded from below but unbounded from above in~\(d\), that is~\(-\infty < \underline{\inp}_d < \overline{\inp}_d = \infty\), we split at~\(\overline{\inp}_d' = \underline{\inp}_d'' = \max(2\underline{\inp}_d, 1)\), all else being as above.
    Effectively, this split rule performs an exponential search over unbounded dimensions until the remaining unbounded branches are no longer selected by~\Select{}, for example, because they have diminishing probability in the case of~\Prob{}.
    We handle the case where~\(d\) is bounded from above but unbounded from below analogously.
  \end{itemize}
  \item For integer variables, we split~\(d\) as for a continuous variable to obtain~\(\HR[']{\inp}\),~\(\HR['']{\inp}\) and round~\(\overline{\inp}'_d\) to the next smaller integer while rounding~\(\underline{\inp}''_d\) to the next larger integer.
  \item For a one-hot encoded categorical variable~\(V\) encoded in the dimensions~\(A \subseteq [n]\) with~\(d \in A\), we create one split where~\(V\) is equal to the category represented by~\(d\) and one where~\(V\) is different from this category.
    Formally,~\(\underline{\vec{x}}_d' = \overline{\vec{x}}_d' = 1\) and~\(\underline{\vec{x}}_{d'}' = \overline{\vec{x}}_{d'}' = 0\) for~\(d' \in A \setminus \{d\}\) defines~\(\HR[']{\vec{x}}\).
    For~\(\HR['']{\vec{x}}\), we set~\(\underline{\vec{x}}_{d}'' = \overline{\vec{x}}_{d}'' = 0\) and leave the remaining values are they are in~\(\underline{\vec{x}}\) and~\(\overline{\vec{x}}\).
    This splitting procedure eventually creates a new branch where all dimensions are set to zero. 
    This branch has zero probability and can be discarded immediately.
\end{itemize}
In any case, we need to ensure not to select~\(d\) if~\(\underline{\inp}_d = \overline{\inp}_d\).

\subsection{\BaBSB{}}\label{sec:babsb}
Our \BaBSB{} split selection heuristic is a variation of the \BaBSB{} heuristic for non-probabilistic neural network verification of~\textcite{BunelLuTurkaslanEtAl2020}.
One difference is that \textcite{BunelLuTurkaslanEtAl2020} use the method of~\textcite{WongKolter2018} for estimating the improvement in bounds, while we use \IntervalArithmetic{}.
Another difference is that while \textcite{BunelLuTurkaslanEtAl2020} are mainly interested in lower bounds, we are equally interested in lower and upper bounds. 
Let~\(\HR[^{(d, 1)}]{\inp}\) and~\(\HR[^{(d, 2)}]{\inp}\) be the two new branches originating from splitting dimension~\(d \in [n]\) and let~\(\underline{y}^{(d, 1)}, \overline{y}^{(d, 1)}\),~\(\underline{y}^{(d, 2)}, \overline{y}^{(d, 2)}\) be the bounds that \IntervalArithmetic{} computes on \(\fSAT*(\cdot, \NN(\cdot))\) for these branches.
Our \BaBSB{} heuristic selects~\(d = \argmax_{d \in [n]} \tilde{y}^{(d)}\), where~\(\tilde{y}^{(d)} = \max(\max(\underline{y}^{(d, 1)}, \underline{y}^{(d, 2)}), -{\min(\overline{y}^{(d, 1)}, \overline{y}^{(d, 2)})})\).
In other words, we select the dimension~\(d\) that yields the largest lower bound or smallest upper bound in any of the new branches, while~\textcite{BunelLuTurkaslanEtAl2020} select the dimension~\(d\) with the largest lower bound among the smaller lower bound for the two branches originating from splitting~\(d\).
We found this variant to be the most successful for our application. 
\Textcite{BunelLuTurkaslanEtAl2020} discuss further variants.

\subparagraph{Implementation.}
We round all bounds to four decimal places to mitigate floating point issues.
If several dimensions yield equal improvements in bounds, we randomly select one of these dimensions.
Without this random tie-breaking, we might split a single dimension repeatedly if the \IntervalArithmetic{} bounds are very loose.
We use a separate pseudo-random number generator with a fixed seed for this tie-breaking so that \BaBSB{} remains entirely deterministic.

\section{Extended Theoretical Analysis}\label{sec:proofs}
This section contains the proofs of the theorems in \cref{sec:algo-theory}.
We also give a more general completeness analysis of \ToolName{} than presented in \cref{sec:algo-theory}.

\subsection{Soundness}\label{sec:proofs-soundness}
This section contains the proofs for our soundness results from \cref{sec:algo-theory}.

\begin{proof}[Proof of \cref{lem:sound-bounds}]
  Let~\(t \in \Nats\) and let~\(\Xsat[t]\) and~\(\Xviol[t]\) be as in \cref{algo:bound-prob}.
  \ProbBounds{} computes~\(\ell^{(t)}\) as the total probability of all previously pruned satisfied branches~\(\Xsat*[t] = \bigcup_{t'=1}^t \Xsat[t']\).
  Similarly,~\(u^{(t)} = 1 - \ka^{(t)}\) where~\(\ka^{(t)}\) is the total probability of all previously pruned violated branches~\(\Xviol*[t] = \bigcup_{t'=1}^t \Xviol[t']\).
  Since we assumed that \ComputeBounds{} produces valid bounds, \Prune{} only prunes branches that are actually satisfied or violated.
  Therefore,~\(\Xsat*[t] \subseteq \{\inp \in \mathcal{X} \mid \parg\}\) and~\(\Xviol*[t] \subseteq \{\inp \in \mathcal{X} \mid \fSAT*(\inp, \NN(\inp)) < 0\}\).
  From this, it follows directly that
  \begin{align*}
    \ell^{(t)} &= \prob[\inp](\Xsat*[t]) 
    \leq \pterm \\
    \ka^{(t)} &= \prob[\inp](\Xviol*[t])
    \leq \prob[\inp](\fSAT*(\inp, \NN(\inp)) < 0),
  \end{align*}
  which implies~\(u^{(t)} = 1 - \ka^{(t)} \geq 1 - \prob[\inp](\fSAT*(\inp, \NN(\inp)) < 0) = \pterm\).
  This shows that \ProbBounds{} is sound.
\end{proof}

\begin{proof}[Proof of \cref{thm:algo-sound}]
  \Cref{thm:algo-sound} follows from \cref{lem:sound-bounds} and the soundness of the \ComputeBounds{} procedure applied by \ToolName{}.
\end{proof}

\subsection{Completeness}\label{sec:proofs-completeness}
This section is concerned with proving our completeness result from \cref{sec:algo-theory}.
We first discuss in more detail why \cref{assume:one} is only mildly restrictive.
Next, we define conditions on the \Split{}, \Select{}, and \ComputeBounds{} procedures that ensure the completeness of \ToolName{}.
We then prove that the \Prob{}, \LongestEdge{}, \BaBSBLongestEdge{k} heuristics and \IntervalArithmetic{}, as well as \CROWN{} satisfy these conditions.
Finally, we prove the completeness of \ToolName{}.

\textbf{Discussion of \cref{assume:one}.}
The proof of \cref{thm:complete-mainbody} is based on \cref{thm:converging-bounds-mainbody} that states that \ProbBounds{} produces a sequence of lower and upper bounds that converge towards each other.
Intuitively, we require \cref{assume:one} since converging bounds on~\(\fSAT(p_1, \ldots, p_n)\) are insufficient for proving~\(\fSAT(p_1, \ldots, p_n) \geq 0\) if~\(\fSAT(p_1, \ldots, p_n) = 0\)~\cite{AlbarghouthiDAntoniDrewsEtAl2017}.
Note that \(p_1, \ldots, p_v\) and \(f_{\mathrm{Sat}}(p_1, \ldots, p_v)\) are unknown but fixed values in \cref{eqn:pvp}. 

For illustration, assume we want to show~\(y \geq 0\), where~\(y \in \Reals\) is an unknown constant. 
We are provided with converging sequences of bounds~\({(\ell_t)}_{t \in \mathbb{N}}\) and~\({(u_t)}_{t \in \mathbb{N}}\) with~\(\ell_t \leq y \leq u_t\) for each~\(t \in \mathbb{N}\) and~\(\lim_{t \to \infty} \ell_t = \lim_{t \to \infty} u_t = y\). 
If~\(y = 0\), the sequences of bounds that only converge in the limit do not suffice to prove~\(y \geq 0\), since there may not be a~\(T \in \mathbb{N}\) with \(\ell_T = 0\). 
However, if the~\(y \neq 0\), obtaining a finite number of iterates of~\({(\ell_t)}_{t \in \mathbb{N}}\) and~\({(u_t)}_{t \in \mathbb{N}}\) always suffices for proving or disproving~\(y \geq 0\). 
Concretely, there will be a~\(T \in \mathbb{N}\), such that either~\(\ell_T > 0\) or~\(u_T < 0\), that proves, respectively, disproves~\(y \geq 0\).
The assumption that~\(f_{\mathrm{Sat}}(p_1, \ldots, p_v) \neq 0\) corresponds to assuming~\(y \neq 0\) in this example. 

In \cref{sec:algo-theory}, we describe studying~\(\fSATprime(p_1, \ldots, p_n) = \fSAT(p_1, \ldots, p_n) - \varepsilon\) for some~\(\varepsilon > 0\), if we suspect that~\(\fSAT(p_1, \ldots, p_n) = 0\).
If~\(\fSAT(p_1, \ldots, p_n) = 0\), the probabilistic verification problem with~\(\fSATprime\) in place of~\(\fSAT\) satisfies \cref{assume:one} and is only marginally stronger than the original verification problem.

The motivation for requiring~\(\prob[{\inp[i]}](\pfun[i]{} = 0) = 0\),~\(\forall i \in [v]\) is similar as for requiring~\(\fSAT(p_1, \ldots, p_n) \neq 0\).
If~\(\prob[{\inp[i]}](\pfun[i]{} = 0) \neq 0\), there can be a region of the input space with positive probability that we can never prune, since the bounds computed by interval arithmetic or CROWN may only converge in the limit for this region.
However, if this is the case, we can tighten~\(\parg[i]\) to~\(\pfun[i]{} \geq \varepsilon\) for some~\(\varepsilon > 0\) such that~\(\prob[{\inp[i]}](\pfun[i]{} = \varepsilon) = 0\).
Such an~\(\varepsilon > 0\) exists because~\(\prob[\inp](\pfun{} = 0) = 0\) means that the satisfaction boundary has positive volume but any neural network has only finitely many flat regions that can produce a satisfaction boundary of positive volume.
We now define conditions on the \ComputeBounds{}, \Split{}, and \Select{} procedures that ensure the completeness of \ToolName{}.

\begin{defn}[Convergent Bounds]\label{defn:convergent-bounds}
  Let~\(f: \Reals^n \to \Reals^m\). 
  We call a \ComputeBounds{} procedure that computes~\(\uly \leq f(\inp) \leq \oly\) for~\(\inp \in \HR{\inp}\) \emph{convergent} if~\(\|\oly - \uly\| \to 0\) as~\(\|\olx - \ulx\| \to 0\) and~\(\|\oly - \uly\| = 0\) if~\(\|\olx - \ulx\| = 0\).
\end{defn}

\begin{defn}[Dimension Alternation]\label{defn:dim-alt}
  Let~\(\HR{\inp} \subseteq \Reals^n\).
  A splitting procedure \Split{} is \emph{dimension-alternating} if for every~\(d \in [n]\) with~\(\ulx_d \neq \olx_d\)
  \begin{equation*}
    \exists t \in \Nats: \exists \HR[']{\inp} \in \mathrm{branches}^{(t)}: \olx'_d - \ulx'_d < \olx_d - \ulx_d,
  \end{equation*}
  where~\(\mathrm{branches}^{(t)} = \Split(\mathrm{branches}^{(t-1)})\) for~\(t \in \Nats\) and~\(\mathrm{branches}^{(0)} = \HR{\inp}\).
\end{defn}

\begin{defn}[Branch Alternation]\label{defn:branch-alt}
  A branch selection procedure \Select{} is \emph{branch-alternating} if
  \begin{equation*}
    \forall t \in \Nats: 
    \forall \branch \in \mathrm{branches}^{(t)}:
    \prob[\inp](\branch) > 0
    \Longrightarrow 
    \exists t' \geq t:
    \branch \in \Select{}(\mathrm{branches}^{(t')}, N),
  \end{equation*}
  where~\(N \in \Nats\) and~\(\mathrm{branches}^{(t)}\) is the value of the~\(\mathrm{branches}\) variable of \ProbBounds{} in iteration~\(t\) where \ProbBounds{} is instantiated with \Select{} and a \ComputeBounds{} procedure satisfying \cref{defn:convergent-bounds}.
\end{defn}

In the following, we prove that \Prob{}, \LongestEdge{}, and \BaBSBLongestEdge{k} as introduced in \cref{sec:prob-bounds} are branch alternating and dimensional alternating, respectively.
It is well-known that \IntervalArithmetic{} satisfies \cref{defn:convergent-bounds}~\cite{MooreKearfottCloud2009}.
We provide a proof in \cref{sec:ia-theorem-six-one-moore-et-al}.
We show that \CROWN{} satisfies \cref{defn:convergent-bounds} in \cref{sec:crown-theory}.

\begin{proposition}\label{prop:longest-edge-is-dim-alt}
  \LongestEdge{} satisfies \cref{defn:dim-alt}.
\end{proposition}

\begin{proof}
  Let~\(\HR{\inp} \subseteq \Reals^n\),~\(d \in [n]\) with~\(\ulx_d \neq \olx_d\), and let~\(\mathrm{branches}^{(t)}\) for~\(t \in \NatsZero\) be as in \cref{defn:dim-alt}.
  We call~\(\olx_d - \ulx_d\) the \emph{edge length} of~\(d\) in~\(\HR{\inp}\).
  
  If~\(\olx_d - \ulx_d > \max_{d' \neq d} \olx_{d'} - \ulx_{d'}\), the dimension~\(d\) is selected for splitting immediately.
  In the following, we not only show that~\(\olx_d - \ulx_d\) decreases when split but also that~\(\olx_d - \ulx_d \to 0\) in at least one branch when we split~\(d\) repeatedly.
  This result is required for the second part of this proof.
  We differentiate several cases based on the variable encoded in~\(d\).
  \begin{itemize}
      \item \emph{Bounded Continuous Variable.} 
        Let~\(d\) encode a continuous variable with~\(\olx_d - \ulx_d < \infty\).
        As described in \cref{sec:splitting} we split such dimensions by bisecting~\(\HR{\inp}\) along~\(d\).
        Bisecting decreases the edge length of~\(d\) in the resulting branches so that we have~\(\olx_d' - \ulx_d' < \olx_d - \ulx_d\) for all~\(\HR[']{\inp} \in \mathrm{branches}^{(1)} = \Split{}(\HR{\inp})\).
        Furthermore, the edge length of~\(d\) converges towards zero if we bisect along~\(d\) repeatedly.
      \item \emph{Continuous Variable Bounded from Below but Unbounded from Above.}
        Let~\(d\) encode a continuous variable with~\(-\infty < \ulx_d < \olx_d = \infty\).
        Since splitting such a dimension creates one branch where~\(d\) is bounded, we have~\(\olx_d' - \ulx_d' < \olx_d - \ulx_d\) for the bounded branch~\(\HR[']{\inp} \in \mathrm{branches}^{(1)}\).
        Furthermore, repeatedly splitting the bounded branch along~\(d\) lets the edge length of~\(d\) converge towards zero, as discussed above.
      \item \emph{Continuous Variable Bounded from Above but Unbounded from Below.}
        This case proceeds analogously to the previous case.
      \item \emph{Continuous Variable Unbounded from Both Sides.}
        Splitting along such variables creates two branches that are bounded from one side. 
        Therefore, after two splits, we obtain two bounded branches, such that~\(\olx_d' - \ulx_d' < \olx_d - \ulx_d\) for two~\(\HR[']{\inp} \in \mathrm{branches}^{(2)} = \Split{}(\mathrm{branches}^{(1)})\).
        Similarly, repeatedly splitting the bounded branches along~\(d\) lets the edge length of~\(d\) converge towards zero.
      \item \emph{Integer Variable.}
        Let~\(d\) encode an integer variable.
        Splitting~\(d\) proceeds as for a continuous variable, except for excluding non-integer values from the new branches.
        This decreases the edge length of~\(d\) at least as much as if we were splitting a continuous variable.
        Therefore, we have that~\(\olx_d' - \ulx_d' < \olx_d - \ulx_d\) for at least one branch~\(\HR[']{\inp} \in \mathrm{branches}^{(1)}\).
        Furthermore, the edge length of~\(d\) reaches zero after finitely many splits in all bounded branches since we exclude non-integer values.
      \item \emph{One-Hot Encoded Categorical Variable.}
        Let~\(d\) contain an indicator of a one-hot encoded categorical variable.
        Splitting~\(d\) decreases the edge length of~\(d\) to zero, so that we have~\(\olx_d' - \ulx_d' = 0 < \olx_d - \ulx_d\) for all~\(\HR[']{\inp} \in \mathrm{branches}^{(1)}\).
  \end{itemize}
  Overall, \cref{defn:dim-alt} is satisfied if dimension~\(d\) is selected for splitting immediately.

  Now consider the case that~\(d\) is not selected for splitting immediately.
  In this case, a different dimension~\(d' \in [n], d' \neq d\) with~\(\olx_{d'} - \ulx_{d'} \geq \olx_{d} - \ulx_{d}\), is selected for splitting by \LongestEdge{}.
  As we have argued above, repeatedly splitting~\(d'\) lets the edge length of~\(d'\) decrease towards zero in at least one branch.
  Therefore, we eventually obtain~\(\HR[']{\inp} \in \mathrm{branches}^{(t)}\) with~\(\olx_{d'}' - \ulx_{d'}' < \olx_{d} - \ulx_{d}\).
  Since this holds for all~\(d'' \in [n], d'' \neq d\) with~\(\olx_{d''} - \ulx_{d''} \geq \olx_{d} - \ulx_{d}\), we eventually obtain a branch where \LongestEdge{} splits~\(d\). 
  Therefore, \cref{defn:dim-alt} is also satisfied if dimension~\(d\) is not selected for splitting immediately.
  Overall, \LongestEdge{} satisfies \cref{defn:dim-alt}.
\end{proof}

\begin{corollary}\label{cor:babsb-longest-edge-k-dim-alt}
  \BaBSBLongestEdge{k} satisfies \cref{defn:dim-alt}. 
\end{corollary}

\begin{proposition}\label{prop:prob-branch-alt}
  \Prob{} satisfies \cref{defn:branch-alt}.
\end{proposition}

\begin{proof}
  Let~\(\mathrm{branches}^{(t)}\) be the value of the \(\mathrm{branches}\) variable of \ProbBounds{} (\cref{algo:bound-prob}) in iteration~\(t \in \Nats\), where \ProbBounds{} is instantiated with \Prob{} and a \ComputeBounds{} procedure satisfying \cref{defn:convergent-bounds}.
  Let~\(N, t \in \Nats\) and~\(\branch \in \mathrm{branches}^{(t)}\) with~\(\prob[\inp](\branch) > 0\).
  Our goal is to show
  \begin{equation}
    \exists t' \geq t: \branch \in \Prob{}(\mathrm{branches}^{(t')}, N).%
    \label{eqn:prob-longest-edge-branch-alt-goal}
  \end{equation}
  If~\(\branch \in \Prob{}(\mathrm{branches}^{(t)}, N)\), \cref{eqn:prob-longest-edge-branch-alt-goal} holds immediately.
  Otherwise, there are at least~\(N\) branches~\(\branch[t]'\) in iteration~\(t\) with~\(\prob[\inp](\branch[t]') \geq \prob[\inp](\branch)\).
  We show
  \begin{equation}
    \exists t' > t: 
    \forall \branch[t]', \prob[\inp](\branch[t]') \geq \prob[\inp](\branch): 
    \underbrace{%
      \forall \branch[t']', \branch[t]' \leadsto \branch[t']': 
      \prob[\inp](\branch[t']') < \prob[\inp](\branch)
    }_{(\ast)},%
    \label{eqn:evtl-prob-low}
  \end{equation}
  where~\(\branch[t]' \leadsto \branch[t']'\) if~\(\branch[t']'\) is a branch in iteration~\(t' \in \Nats\) that originates from splitting~\(\branch[t]'\), meaning that~\(\branch[t']' \subset \branch[t]'\).

  Let~\(\branch[t]'\) be a branch in iteration~\(t\) with~\(\prob[\inp](\branch[t]') \geq \prob[\inp](\branch[t])\).
  First of all, if~\(\branch[t]'\) is pruned by \ProbBounds{} in iteration~\(t\), then there are no new branches originating from~\(\branch[t]'\), so that~\((\ast)\) holds vacuously.
  Otherwise, \ProbBounds{} splits~\(\branch[t]'\).

  We first consider the special case where the input space only contains categorical and bounded integer variables.
  Dimensions encoding such variables can only be split finitely often.
  Therefore, splitting~\(\branch[t]'\) eventually produces a finite set of branches~\(\HR{\inp}\) with~\(\ulx = \olx\). 
  Since we assumed \ComputeBounds{} to satisfy \cref{defn:convergent-bounds}, \ComputeBounds{} computes the bounds~\(\uly = \oly\) for branches with~\(\ulx = \olx\).
  Branches with~\(\uly = \oly\) are certainly pruned by \ProbBounds{}.
  Therefore, if we choose~\(t' > t\) large enough, \cref{eqn:evtl-prob-low} holds with~\((\ast)\) holding vacuously, since all branches originating from~\(\branch[t]'\) have been pruned.

  Otherwise, let the input space contain at least one continuous or unbounded integer variable.
  We show that there is an iteration~\(t' > t\) such that~\((\ast)\) holds for~\(\branch[t]'\).
  Without loss of generality, assume that the dimension selected for splitting encodes a continuous variable or an unbounded integer variable.
  This does not harm generality since categorical variables and bounded integer variables can only be split finitely often and, therefore, will eventually become unavailable for splitting.

  First, consider splitting along a dimension~\(d\) encoding a bounded continuous variable.
  Since we split continuous variables by bisection, the volume of all branches~\(\branch[t']'\) originating from~\(\branch[t]'\) decreases towards zero as we split~\(d\) repeatedly.
  As stated in \cref{sec:prelim}, we assume that all continuous random variables admit a probability density function.
  This implies that the probability in all branches~\(\branch[t']'\) originating from splitting~\(\branch[t]'\) decreases towards zero as the volume decreases towards zero.

  Now, consider splitting along a dimension~\(d\) encoding an unbounded variable.
  Without loss of generality, assume that~\(\branch[t]'\) is bounded in~\(d\) in at least one direction.
  This does not harm generality since dimensions unbounded in both directions are split into two parts, each bounded in one direction.
  Given this, splitting along~\(d\) creates a bounded and an unbounded part. 
  If~\(d\) encodes an integer variable, the bounded part contains only finitely many discrete values.
  Therefore, as argued above, all branches originating from this bounded part are eventually pruned. 
  If~\(d\) encodes a continuous variable, the bounded part behaves as described above with the probability of all branches~\(\branch[t']'\) originating from~\(\branch[t]'\) decreasing towards zero.
  Therefore, we only have to show that the probability remaining in the unbounded part decreases towards zero as we continue splitting to show~\((\ast)\).
  In fact, this follows from the properties of a probability measure.

  Above, we have shown that splitting repeatedly either leads to pruning the resulting branches or the probability of all resulting branches decreases towards zero.
  Therefore, there is a~\(t'\), such that~\((\ast)\) is satisfied for~\(\branch[t]'\).
  Since there are only finitely many branches in any iteration of \ProbBounds{}, the above implies that \cref{eqn:evtl-prob-low} is satisfied.
  In turn, this directly implies that~\(\branch\) is eventually selected by \ProbBounds{}, proving \cref{prop:prob-branch-alt}.
\end{proof}

Next, we prove a generalised version of \cref{thm:converging-bounds-mainbody}.

\begin{lemma}[Converging Probability Bounds]\label{thm:converging-bounds}
  Let~\(N \in \Nats\) be a batch size.
  Let~\({\left\{(\ell^{(t)}, u^{(t)})\right\}}_{t \in \Nats}\) be the iterates of~\(\ProbBounds{}(\pterm, N)\) instantiated with a \Select{} procedure satisfying \cref{defn:dim-alt}, a sound \ComputeBounds{} procedure satisfying \cref{defn:convergent-bounds} and a \Split{} procedure satisfying \cref{defn:branch-alt}.
  Assume~\(\prob[\inp](\fSAT*(\inp, \NN(\inp)) = 0) = 0\) as in \cref{assume:one}.
  Then, 
  \begin{equation*}
    \lim_{t \to \infty} \ell^{(t)} = \lim_{t \to \infty} u^{(t)} = \pterm.
  \end{equation*}
\end{lemma}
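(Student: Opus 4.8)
The plan is to prove convergence in two stages: first show that both bound sequences converge, then show their limits coincide with \(\pterm\) by proving that the probability mass of the still-undecided region vanishes. Throughout I write \(R^{(t)}\) for the union of the branches present after iteration \(t\), i.e.\ the region neither pruned as satisfied nor as violated. Since \(\ell^{(t)}\) is non-decreasing and \(u^{(t)}\) is non-increasing (pruning only ever adds mass to \(\Xsat*[t]\) or \(\Xviol*[t]\)) and both lie in \([0,1]\), they converge to limits \(\ell^\star \le \pterm \le u^\star\), where the inequalities are exactly \cref{lem:sound-bounds}. Because \(\mathcal{X}\) is partitioned (up to the measure-zero shared faces) into \(\Xsat*[t]\), \(\Xviol*[t]\), and \(R^{(t)}\), I get \(u^{(t)} - \ell^{(t)} = \prob[\inp](R^{(t)})\), and since splitting never changes the covered region while pruning only shrinks it, \((R^{(t)})_t\) is a decreasing sequence of measurable sets. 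Hence it suffices to prove \(\lim_t \prob[\inp](R^{(t)}) = 0\), which forces \(\ell^\star = u^\star = \pterm\).

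To reduce the problem to the satisfaction boundary I would use two facts. First, \(\inp \mapsto \fSAT*(\inp, \NN(\inp))\) is continuous on the compact hyperrectangle \(\mathcal{X}\), and interval arithmetic is convergent in the sense of Moore: the width of the bounds \(\ComputeBounds\) returns on a branch tends to \(0\) as the branch diameter does. Thus any sufficiently small branch containing a point \(\vec{x}_0\) with \(\fSAT*(\vec{x}_0, \NN(\vec{x}_0)) \neq 0\) is pruned (as satisfied if the value is positive, as violated if negative). By uniform continuity together with the uniform convergence rate of interval arithmetic on \(\mathcal{X}\), this is quantitative: for each \(\delta > 0\) there is \(\rho > 0\) such that every branch of diameter below \(\rho\) that meets \(B_\delta^{\,c} = \{\inp \in \mathcal{X} : |\fSAT*(\inp, \NN(\inp))| \ge \delta\}\) is pruned. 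Second, writing \(B_\delta = \{\inp \in \mathcal{X} : |\fSAT*(\inp, \NN(\inp))| < \delta\}\), we have \(\bigcap_{\delta > 0} B_\delta = \{\fSAT*(\inp, \NN(\inp)) = 0\}\), so continuity of measure from above and \cref{assume:one} give \(\prob[\inp](B_\delta) \to 0\) as \(\delta \to 0\).

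The main argument is then by contradiction. Suppose \(\lim_t \prob[\inp](R^{(t)}) = g^\star > 0\). Choose \(\delta\) with \(\prob[\inp](B_\delta) < g^\star/2\); since \(\prob[\inp](R^{(t)}) \ge g^\star\) for all \(t\), the far part \(R^{(t)}_{\mathrm{far}} = R^{(t)} \cap B_\delta^{\,c}\) carries probability at least \(g^\star/2\) throughout. By the previous paragraph, every undecided branch meeting the far region must have diameter at least \(\rho\). Because \(\LongestEdge\) always bisects the longest edge, all descendants of \(\mathcal{X}\) keep a bounded edge-length ratio, so diameter at least \(\rho\) forces volume at least some \(v_0 > 0\); hence at most \(K = \vol(\mathcal{X})/v_0\) such disjoint branches can coexist, and at least one carries probability at least \(g^\star/(2K)\). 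Now \(\Prob\) always selects the \(N\) highest-probability branches, so such a branch is selected, and being too large to be pruned it is split; \(\LongestEdge\) drives its diameter below \(\rho\) in finitely many splits, after which it is pruned and its mass leaves \(R^{(t)}_{\mathrm{far}}\). Iterating exhausts the far region, contradicting \(\prob[\inp](R^{(t)}_{\mathrm{far}}) \ge g^\star/2\).

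The delicate step — and the one I expect to be the main obstacle — is making the final sentence rigorous: showing that \(\Prob\) never \emph{starves} a positive-probability undecided branch. One cannot simply track the single largest branch, because a bisection may leave almost all of a branch's mass in one child, so branch probabilities need not decrease geometrically. I would close this by combining the counting bound above (at most \(K\) far branches of diameter \(\ge \rho\), hence at most \(1/c\) branches of probability \(\ge c\)) with the summability \(\sum_t \bigl(\prob[\inp](\Xsat[t]) + \prob[\inp](\Xviol[t])\bigr) \le 1\), which forces the probability pruned per iteration to \(0\) and thereby forces the high-probability far branches to be selected, split, and driven below diameter \(\rho\) infinitely often. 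The supporting geometric fact that \(\LongestEdge\) both shrinks diameters to zero along every lineage and keeps aspect ratios bounded is standard but is what the entire volume-counting step rests on, so I would state and justify it explicitly.
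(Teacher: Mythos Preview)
Your overall plan—showing \(u^{(t)} - \ell^{(t)} = \prob[\inp](R^{(t)})\) and then arguing this vanishes by splitting \(R^{(t)}\) into a near-boundary part \(B_\delta\) (small by the assumption) and a far part (eventually pruned)—is sound and is a somewhat cleaner, more quantitative packaging than the paper's argument. The paper instead tracks \(\Xsat<\ast> \setminus \Xsat*[t]\) directly and argues its volume vanishes, invoking Moore's Theorem~6.1 at the end; your uniform \(\delta\)--\(\rho\) statement and the aspect-ratio observation for \LongestEdge{} are useful ingredients the paper leaves implicit.

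The genuine gap is exactly where you flag it, but your proposed resolution via summability of pruned mass does not work. That \(\sum_t \bigl(\prob[\inp](\Xsat[t]) + \prob[\inp](\Xviol[t])\bigr) \le 1\) forces the per-iteration pruned mass to zero, but this says nothing about \emph{which} branches \Prob{} selects. A branch \(\branch'\) straddling the satisfaction boundary can be selected, split, and have one child that still straddles the boundary while retaining almost all the mass; iterating yields a lineage that is never pruned and whose probabilities decrease arbitrarily slowly, perpetually outranking your fixed far branch \(\branch\). The counting bound (\(\le 1/c\) disjoint branches of probability \(\ge c\)) does not rescue this, since a single such lineage already suffices to block \(\branch\) indefinitely.

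The fix—which the paper isolates as a separate lemma immediately before this proof—is to invoke the paper's standing assumption that the continuous input variables admit a density. Then \(\prob[\inp](\branch') \le (\sup \probdensity[\inp]) \cdot \vol(\branch')\), and since \LongestEdge{} drives the volume of every repeatedly-selected lineage to zero, every competitor's probability eventually drops below \(\prob[\inp](\branch)\), forcing \(\branch\) to be selected. With this ``every positive-probability branch is eventually selected'' lemma in hand, your far-branch argument completes: each of the at most \(K\) far branches of diameter \(\ge \rho\) is eventually selected, split below diameter \(\rho\), and pruned, so \(R^{(t)}_{\mathrm{far}}\) cannot retain mass \(g^\star/2\).
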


\begin{proof}
  We first prove that~\(\lim_{t \to \infty} \ell^{(t)} = \pterm\).
  The convergence of the upper bound,~\(\lim_{t \to \infty} u^{(t)} = \pterm\) follows analogously.

  Let~\(\Xsat<\ast> = \{\inp \in \mathcal{X} \mid \pfun > 0\}\), where~\(\mathcal{X}\) is the input space of~\(\NN\).
  Note that~\(\prob[\inp](\Xsat<\ast>) = \prob[\inp](\{\inp \in \mathcal{X} \mid \parg\})\) due to \cref{assume:one}.
  Further, let~\(\Xsat*[t]\) be as in the proof of \cref{lem:sound-bounds} and recall~\(\ell^{(t)} = \prob[\inp](\Xsat*[t])\).

  First, we give an argument why the limit~\(\lim_{t \to \infty} \ell^{(t)}\) exists.
  Due to \cref{lem:sound-bounds}, the sequence~\({\{\ell^{(t)}\}}_{t\in\mathbb{N}}\) is bounded from above.
  Furthermore,~\({\{\ell^{(t)}\}}_{t\in\mathbb{N}}\) is non-decreasing in~\(t\) since \ProbBounds{} only adds elements to~\(\Xsat*[t]\).
  Therefore,~\(\lim_{t \to \infty} \ell^{(t)}\) exists.
  Now, we can equivalently rewrite
  \begin{alignat}{2}
    &                    & \ell^{(t)} &\xrightarrow[t \to \infty]{} \pterm\nonumber\\
    &\Longleftrightarrow\quad & \prob[\inp](\Xsat*[t]) &\xrightarrow[t \to \infty]{} \prob[\inp](\Xsat<\ast>)\nonumber\\
    &\Longleftrightarrow\quad & \prob[\inp](\Xsat*[t]) - \prob[\inp](\Xsat<\ast>) &\xrightarrow[t \to \infty]{} 0\nonumber\\
    &\Longleftrightarrow\quad & \prob[\inp](\Xsat<\ast> \setminus \Xsat*[t]) &\xrightarrow[t \to \infty]{} 0.\label{eqn:left-overs}
  \end{alignat}

  We now argue why~\eqref{eqn:left-overs} holds.  
  Consider the case that the input space does not contain a continuous variable.
  Let~\(\branch[t]\) be a bounded branch in iteration~\(t \in \Nats\) for an input space containing only discrete variables.
  As discussed in the proof of \cref{prop:prob-branch-alt}, there is an iteration~\(t' > t\) so that all branches~\(\branch[t']\) that originate from splitting~\(\branch[t]\) are pruned.
  Due to the properties of a probability measure, the probability of the unbounded branches~\(\branch[t]\) that remain decreases towards zero as~\(t \to \infty\).
  Therefore,~\eqref{eqn:left-overs} holds if the input space does not contain a continuous variable.

  Now, assume the input space contains at least one continuous variable.
  Let~\(\Xsat**[t] = \Xsat<\ast> \setminus \Xsat*[t]\).
  With the goal of obtaining a contradiction, assume~\(\lim_{t \to \infty} \prob[\inp](\Xsat**[t]) > 0\).
  Since we assumed in \cref{sec:prelim} that every continuous probability distribution admits a density function,~\(\vol(\Xsat**[t]) > 0\), where~\(\vol\) denotes the volume.
  Let~\(t \in \Nats\). 
  Since the branches maintained by \ProbBounds{} form a partition of the input space, there is a branch~\(\branch[t]\) in iteration~\(t\) of \ProbBounds{} such that~\(\vol(\Xsat** \cap \branch[t]) > 0\).

  Due to \Split{} satisfying \cref{defn:dim-alt} and \Select{} satisfying \cref{defn:branch-alt}, we eventually obtain~\(\branch[t']\) in iteration~\(t' > t\) with~\(\branch[t'] \subseteq \Xsat** \cap \branch[t]\).
  Additionally, since \Split{} satisfies \cref{defn:dim-alt}, we have that~\(\|\olx - \ulx\| \to 0\) for all not yet pruned branches~\(\HR{\inp}\) as~\(t \to \infty\). 
  Since \ComputeBounds{} satisfies \cref{defn:convergent-bounds}, we have that the bounds~\(\underline{y} \leq \pfun \leq \overline{y}\) produced by \ComputeBounds{} converge towards~\(\pfun\).
  Note that~\(\pfun > 0\) for all~\(\inp \in \HR{\inp}\) since~\(\HR{\inp} \subseteq \branch[t'] \subseteq \Xsat<\ast>\).
  This implies that there is an iteration~\(t'' > t'\) in which \ProbBounds{} considers a branch~\(\branch[t''] \subset \Xsat**\) for which~\(\underline{y} > 0\), which means that \ProbBounds{} prunes~\(\branch[t'']\).
  This contradicts the construction of~\(\Xsat**[t]\).
  With this contradiction, we have shown~\(\lim_{t \to \infty} \ell^{(t)} = \pterm\) when the input space contains a continuous variable.

  Overall, we have shown~\(\lim_{t \to \infty} \ell^{(t)} = \pterm\) for all possible compositions of the input space.
  The convergence of the upper bound~\(u^{(t)}\) follows from an analogous argument on~\(\ka^{(t)}\) as in the proof of \cref{lem:sound-bounds} where~\(u^{(t)} = 1 - \ka^{(t)}\).
  This establishes \cref{thm:converging-bounds}.
\end{proof}

\Cref{thm:converging-bounds-mainbody} follows from \cref{thm:converging-bounds} by inserting \Prob{} for \Select{}, \LongestEdge{} or \BaBSBLongestEdge{k} for \Split{}, and \IntervalArithmetic{} or \CROWN{} for \ComputeBounds{}.
\Prob{}, \LongestEdge{}, and \BaBSBLongestEdge{k} satisfy the requirements of \cref{thm:converging-bounds} due to \cref{prop:prob-branch-alt}, \cref{prop:longest-edge-is-dim-alt} and \cref{cor:babsb-longest-edge-k-dim-alt}, respectively.
We show that \IntervalArithmetic{} and \CROWN{} satisfy the requirements of \cref{thm:converging-bounds} in \cref{sec:interval-arithmetic-extra} and \cref{sec:crown-theory}, respectively.
We now prove a generalised version of \cref{thm:complete-mainbody}.

\begin{theorem}[Completeness]\label{thm:complete}
  When instantiated with \ProbBounds{} as in \cref{thm:converging-bounds} and a \ComputeBounds{} procedure that satisfies \cref{defn:convergent-bounds}, \ToolName{} is complete for verification problems satisfying \cref{assume:one}.
\end{theorem}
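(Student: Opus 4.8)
The plan is to reduce completeness to a termination statement. By \cref{thm:algo-sound}, \ToolName{} is sound, so whichever verdict it returns is correct; it therefore suffices to show that, under \cref{assume:one}, \ToolName{} returns \texttt{Satisfied} or \texttt{Violated} after finitely many iterations. First I would observe that \cref{assume:one} guarantees \(\prob[\inp[i]](\pfun[i]{} = 0) = 0\) for every \(i \in \{1, \ldots, v\}\), so \cref{thm:converging-bounds} applies to each of the \(v\) parallel \ProbBounds{} instances separately and yields \(\ell_i^{(t)} \to p_i\) and \(u_i^{(t)} \to p_i\) as \(t \to \infty\). Combined with the sandwiching \(\ell_i^{(t)} \leq p_i \leq u_i^{(t)}\) from \cref{lem:sound-bounds}, the input intervals \([\ell_i^{(t)}, u_i^{(t)}]\) that \ToolName{} feeds to interval arithmetic in iteration \(t\) shrink to the singletons \(\{p_i\}\).

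Next I would invoke the convergence of interval arithmetic, Theorem 6.1 of \textcite{MooreKearfottCloud2009} (restated in \cref{sec:ia-theorem-six-one-moore-et-al}). Since \(\fSAT\) is a composition of linear functions, multiplication, division, and monotone functions, its interval extension is convergent: as the widths of the input intervals vanish, the computed output bounds converge to the true value. Hence the \ToolName{}-level bounds satisfy \(\ell^{(t)} \to \fSAT(p_1, \ldots, p_v)\) and \(u^{(t)} \to \fSAT(p_1, \ldots, p_v)\).

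A case split on the sign of \(\fSAT(p_1, \ldots, p_v)\), which is nonzero by \cref{assume:one}, then completes the argument. If \(\fSAT(p_1, \ldots, p_v) > 0\), convergence of \(\ell^{(t)}\) forces \(\ell^{(t)} \geq 0\) for all sufficiently large \(t\), so \ToolName{} returns \texttt{Satisfied} in finitely many iterations; if \(\fSAT(p_1, \ldots, p_v) < 0\), convergence of \(u^{(t)}\) forces \(u^{(t)} < 0\) eventually, so \ToolName{} returns \texttt{Violated}. In either case the algorithm halts in finite time, and soundness makes the verdict correct, which is exactly completeness in the sense of \cref{defn:sound-complete}.

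The step I expect to be the main obstacle is justifying the convergence of interval arithmetic on \(\fSAT\) when \(\fSAT\) involves a division, as in \cref{example:demographic-parity}, because interval division is only well-behaved away from a zero denominator. Here I would rely on the standing assumption that the probabilistic verification problem in \cref{eqn:pvp} is well-defined, so the denominator evaluated at \((p_1, \ldots, p_v)\) is nonzero; once the intervals \([\ell_i^{(t)}, u_i^{(t)}]\) are tight enough, the denominator interval is bounded away from zero, the interval extension is Lipschitz on that regime, and Moore's convergence theorem applies. I would spell out this ``tighten first, then converge'' argument carefully, as it is the only place where well-definedness of \cref{eqn:pvp} is genuinely used.
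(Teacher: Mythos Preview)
Your proposal is correct and follows essentially the same route as the paper: apply \cref{thm:converging-bounds} to each \(p_i\), feed the shrinking intervals through interval arithmetic and invoke Theorem~6.1 of \textcite{MooreKearfottCloud2009} to get \(\ell^{(t)},u^{(t)}\to\fSAT(p_1,\ldots,p_v)\), then case-split on the nonzero sign from \cref{assume:one}. Your extra care about division (tightening the denominator interval away from zero before invoking Lipschitzness) is a detail the paper glosses over, so you are, if anything, more thorough there.
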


\begin{proof}
  Let~\(\NN\),~\(\fSAT\),~\(\fSAT*[1], \ldots, \fSAT*[v]\) be as in \cref{eqn:pvp}.
  First, consider~\(\fSAT(p_1, \ldots, p_v) > 0\). 
  As a consequence of \cref{thm:converging-bounds}, the bounds~\(\ell \leq \fSAT(p_1, \ldots, p_v) \leq u\) produced by \ComputeBounds{} converge towards~\(\fSAT(p_1, \ldots, p_v)\), since \ComputeBounds{} satisfies \cref{defn:convergent-bounds}. 
  This implies that eventually~\(\ell > 0\), meaning that \ToolName{} eventually proves~\(\fSAT(p_1, \ldots, p_v) \geq 0\).
  
  If~\(\fSAT(p_1, \ldots, p_v) < 0\) we eventually obtain~\(u < 0\) with the same argument as above.
  Since~\(u < 0\) disproves~\(\fSAT(p_1, \ldots, p_v) \geq 0\), \ToolName{} is complete for probabilistic verification problems satisfying \cref{assume:one}.
\end{proof}
Similarly to \cref{thm:converging-bounds-mainbody} and \cref{thm:converging-bounds}, \cref{thm:complete-mainbody} follows from \cref{thm:complete} by inserting \IntervalArithmetic{} or \CROWN{} for \ComputeBounds{}.

\section{Interval Arithmetic}\label{sec:interval-arithmetic-extra}\label{sec:theory-bound-propagation}
This section introduces additional interval arithmetic bounding rules for linear functions, multiplication, and division, complementing the interval arithmetic bounding rules for monotone functions in \cref{sec:interval-arithmetic}. 
Furthermore, we provide Theorems 5.1 and 6.1~\textcite{MooreKearfottCloud2009} for reference and provide a proof that \IntervalArithmetic{} satisfies \cref{defn:convergent-bounds}.
These results provide the foundation for the theoretical analysis of CROWN in \cref{sec:crown-theory}.

\subsection{Bounding Rules}\label{sec:interval-arithmetic-bounding-rules-extra}
Let~\(f^{(k)}\) be as in \cref{sec:interval-arithmetic}.
First, consider the multiplication of two scalars, that is,~\(f^{(k)}(z, w) = zw\) where~\(\underline{z} \leq z \leq \overline{z}\) and~\(\underline{w} \leq w \leq \overline{w}\).
We have
\begin{equation*}
  \min(\underline{z}\underline{w}, \underline{z} \overline{w}, \overline{z}\underline{w}, \overline{z}\overline{w})
  \leq z_1z_2 \leq
  \max(\underline{z}\underline{w}, \underline{z} \overline{w}, \overline{z}\underline{w}, \overline{z}\overline{w}).
\end{equation*}
For the element-wise multiplication of vectors, we apply the above rule to each element separately.
Multiplication of several arguments can be rewritten as several multiplications of two arguments.

Now, consider computing bounds of the reciprocal~\(f^{(k)}(z) = \frac{1}{z}\) with~\(\underline{z} \leq z \leq \overline{z}\).
We differentiate the following cases
\begin{equation*}
  \everymath={\displaystyle}
  \begin{array}{rlcll}
    \frac{1}{\overline{z}} \leq \frac{1}{z} & \text{if } 0 \notin (\underline{z}, \overline{z}] 
                                            & \qquad &
    \frac{1}{z} \leq \frac{1}{\underline{z}} & \text{if } 0 \notin [\underline{z}, \overline{z}) \\
    -\infty \leq \frac{1}{z} & \text{if } 0 \in (\underline{z}, \overline{z}]
    & \qquad &
    \frac{1}{z} \leq \infty & \text{if } 0 \in [\underline{z}, \overline{z}).
  \end{array}\label{eqn:ia-divide}
\end{equation*}
Using bounds on the reciprocal, we can compute bounds on a division by rewriting division as multiplication by the reciprocal.
Lastly, for an affine function~\(f^{(k)}(\vec{z}) = \mat{W}\vec{z} + \vec{b}\) where~\(\underline{\vec{z}} \leq \vec{z} \leq \overline{\vec{z}}\), we have
\begin{equation}
  \pospart{\mat{W}}\underline{\vec{z}} + \negpart{\mat{W}}\overline{\vec{z}} + \vec{b}
  \leq \mat{W} \vec{z} + \vec{b} \leq
  \pospart{\mat{W}}\overline{\vec{z}} + \negpart{\mat{W}}\underline{\vec{z}} + \vec{b},%
  \label{eqn:ia-affine-rule}
\end{equation}
where~\(\pospart{\mat{W}}_{i,j} = \max(0, \mat{W}_{i,j})\) and~\(\negpart{\mat{W}}_{i,j} = \min(0, \mat{W}_{i,j})\).
%

\subsection{Theoretical Properties}\label{sec:ia-theorem-six-one-moore-et-al}
We include Theorems 5.1 and 6.1 of~\textcite{MooreKearfottCloud2009} and relevant definitions for reference.
Let~\(\mathbb{H}^n = \{[\underline{\vec{x}}, \overline{\vec{x}}] \mid \underline{\vec{x}}, \overline{\vec{x}} \in \Reals^n, \underline{\vec{x}} \leq \overline{\vec{x}}\}\) be the set of hyperrectangles in~\(\Reals^n\).
Let~\(w: \powerset{\Reals^n} \to \RealsNonNeg\) with
\begin{equation*}
  w(\mathcal{X}) 
  = \max_{i \in [n]}{(\max_{\vec{x} \in \mathcal{X}}\vec{x}_i - \min_{\vec{x} \in \mathcal{X}}\vec{x}_i)}
  = \max_{\inp, \inp{}' \in \mathcal{X}} {\| \inp - \inp{}' \|}_\infty
\end{equation*}
be the \emph{width} of the set~\(\mathcal{X} \subseteq \Reals^n\).
We denote the \emph{image} of a hyperrectangle~\(\HR{\inp}\) under~\(f: \Reals^n \to \Reals^m\) as~\(f(\HR{\inp}) = \{f(\inp) \mid \inp \in \HR{\inp}\}\).

\subsubsection{Definitions}

Theorem 5.1 of~\textcite{MooreKearfottCloud2009} applies to inclusion isotonic interval extensions as defined below.
\begin{defn}\label{defn:incl-iso-interval-ext}
   Let~\(F: \mathbb{H}^n \to \mathbb{H}^m\) and~\(f: \Reals^n \to \Reals^m\).
   \begin{itemize}
     \item \(F\) is an \emph{interval extensions} of~\(f\) if~\(\forall \vec{x} \in \Reals^n: F([\vec{x}, \vec{x}]) = [f(\vec{x}), f(\vec{x})]\).
     \item \(F\) is \emph{inclusion isotonic} if~\(\forall [\underline{\vec{x}}, \overline{\vec{x}}], [\underline{\vec{x}}', \overline{\vec{x}}'] \in \mathbb{H}^n, [\underline{\vec{x}}', \overline{\vec{x}}'] \subseteq [\underline{\vec{x}}, \overline{\vec{x}}]: F([\underline{\vec{x}}', \overline{\vec{x}}']) \subseteq F([\underline{\vec{x}}, \overline{\vec{x}}])\).
   \end{itemize}
\end{defn}
Theorem 6.1 of~\textcite{MooreKearfottCloud2009} requires an inclusion isotonic \emph{Lipschitz} interval extension.
\begin{defn}\label{defn:lipschitz-interval-ext}
   Let~\(F: \mathbb{H}^n \to \mathbb{H}^m\) be an interval extension of~\(f: \Reals^n \to \Reals^m\).
   \(F\) is \emph{Lipschitz} if there exists an~\(L \in \Reals\) such that~\(\forall [\underline{\vec{x}}, \overline{\vec{x}}] \in \mathbb{H}^n: w(F([\underline{\vec{x}}, \overline{\vec{x}}])) \leq Lw(\HR{\vec{x}})\).
\end{defn}

\IntervalArithmetic{}, as introduced in \cref{sec:interval-arithmetic}, corresponds to \emph{natural interval extensions} in~\textcite{MooreKearfottCloud2009}.
As~\textcite{MooreKearfottCloud2009} show, natural interval extensions~---~and, therefore, \IntervalArithmetic{}~---~satisfy Definitions~\ref{defn:incl-iso-interval-ext} and~\ref{defn:lipschitz-interval-ext}.

\subsubsection{Theorems and Propositions}
Theorem 5.1 of~\textcite{MooreKearfottCloud2009} is known as the fundamental theorem of interval analysis.
It proves that \IntervalArithmetic{} is sound.
We also provide a proof for the well-known property that \IntervalArithmetic{} satisfies \cref{defn:convergent-bounds}.
This result is closely related to Theorem 6.1 of~\textcite{MooreKearfottCloud2009}, which we also include for reference.
\begin{theorem}[Theorem 5.1 of~\textcite{MooreKearfottCloud2009}]\label{thm:moore-et-al-five-one}
  If~\(F: \mathbb{H}^n \to \mathbb{H}^m\) is an inclusion isotonic interval extension of~\(f: \Reals^n \to \Reals^m\), we have~\(f(\HR{\inp}) \subseteq F(\HR{\inp})\) for every~\(\HR{\inp} \in \mathbb{H}^n\).
\end{theorem}
 
\begin{theorem}[Theorem 6.1 of~\textcite{MooreKearfottCloud2009}]\label{thm:moore-et-al-six-one}
  Let~\(F: \mathbb{H}^n \to \mathbb{H}^m\) be an inclusion isotonic Lipschitz interval extension of~\(f: \Reals^n \to \Reals^m\).
  Let~\(\mathcal{X} = \HR{\inp} \in \mathbb{H}^n\).
  We define the~\(M\)-step \emph{uniform subdivision} of~\(\HR{\inp}\) with~\(M \in \Nats\) as
  \begin{equation*}
    \mathcal{X}_{i,j} = \left[
      \underline{\inp}_i + (j - 1)\frac{w([\underline{\inp}_i, \overline{\inp}_i])}{M}, 
      \underline{\inp}_i + j\frac{w([\underline{\inp}_i, \overline{\inp}_i])}{M}
    \right], 
    \quad j \in [M].
  \end{equation*}
  Further, let
  \begin{equation*}
    F^{(M)}(\HR{\inp}) = \bigcup_{j_i = 1}^{M} F(\mathcal{X}_{1,j_1} \times \cdots \times \mathcal{X}_{n,j_n}). \\
  \end{equation*}
  It holds that
  \begin{equation*}
    w(F^{(M)}(\HR{\inp})) - w(f(\HR{\inp})) \leq 2L\frac{w(\mathcal{X})}{M},
  \end{equation*}
  where~\(L\) is the Lipschitz constant of~\(F\).
\end{theorem}

\begin{proposition}\label{prop:lipschitz-incl-iso-int-ext-is-convergent}
  Every Lipschitz interval extension satisfies \cref{defn:convergent-bounds}.
\end{proposition}

\begin{proof}
  Let~\(F: \mathbb{H}^n \to \mathbb{H}^m\) be a Lipschitz interval extension with Lipschitz constant~\(L\). 
  We write~\([\uly_{\HR{\inp}}, \oly_{\HR{\inp}}] = F(\HR{\inp})\) for~\(\HR{\inp} \in \mathbb{H}^n\).
  Using that~\(F\) is Lipschitz, we obtain
  \begin{equation*}
    \lim_{{\|\olx - \ulx\|} \to 0} {\|\oly_{\HR{\inp}} - \uly_{\HR{\inp}}\|}
    = \lim_{{\|\olx - \ulx\|} \to 0} w(F(\HR{\vec{x}}))
    \leq \lim_{{\|\olx - \ulx\|} \to 0} Lw(\HR{\inp})
    = 0.
  \end{equation*}
  Further, since~\(F\) is an interval extension, we have~\(\|\oly_{[\inp, \inp]} - \uly_{[\inp, \inp]}\| = 0\) for any~\(\inp \in \Reals^n\).
  This proves that~\(F\) satisfies \cref{defn:convergent-bounds}.
\end{proof}

\begin{corollary}
  \IntervalArithmetic{} satisfies \cref{defn:convergent-bounds}.
\end{corollary}

\section{Theoretical Analysis of CROWN}\label{sec:crown-theory}
In this section, we prove that \CROWN*{} satisfies \cref{defn:convergent-bounds}.
In fact, we provide a slightly stronger result showing that \CROWN{} possesses the properties of \IntervalArithmetic{} that follow from Theorem 5.1 and 6.1 of \textcite{MooreKearfottCloud2009}.
Concretely, we prove that the bounds computed by \CROWN{} are always contained in the bounds computed by an inclusion isotonic Lipschitz interval extension, such that \cref{prop:lipschitz-incl-iso-int-ext-is-convergent,thm:moore-et-al-five-one,thm:moore-et-al-six-one} apply.
Since the bounds computed by this interval extension converge and they contain the bounds computed by \CROWN{}, the bounds computed by \CROWN{} also need to converge, establishing that \CROWN{} satisfies \cref{defn:convergent-bounds}.
We first revisit \CROWN{}, discuss why \CROWN{} itself is not inclusion isotonic, and finally provide the inclusion isotonic Lipschitz interval extension that is guaranteed to compute \emph{looser} bounds than \CROWN{}.

In the following, we restrict our attention to ReLU-activated fully-connected neural networks since these are the most relevant for this paper. 
However, similar arguments to the arguments we make below can also be made for other architectures and activation functions.

\subsection{CROWN}\label{sec:crown-recap}
We revisit \CROWN{} for ReLU-activated fully-connected neural networks. 
Applying \CROWN{} for other activation functions and architectures is described by \textcite{ZhangWengChenEtAl2018,XuShiZhangEtAl2020b}.
Let~\(\NN: \mathcal{X} \to \Reals^m\) be a ReLU-activated fully-connected neural network with~\(K \in \Nats\) layers, that is,~\(\NN = f^{(K)} \circ \ldots \circ f^{(1)}\), where~\(f^{(k)}: \Reals^{n_k} \to \Reals^{n_{k+1}}\) is either an affine function or ReLU.\@
We use~\(f^{(:k)} = f^{(k)} \circ \cdots \circ f^{(1)}\) and~\(f^{(k:)} = f^{(K)} \circ \cdots \circ f^{(k)}\) to denote partial evaluation of~\(\NN\).
Further, let~\(\HR{\inp} \subseteq \mathcal{X}\).
\CROWN{} computes a linear lower bound and a linear upper bound on~\(\NN\)
\begin{equation*}
    \underline{\mat{A}}\inp + \underline{\vec{d}} 
    \leq \NN(\inp) \leq 
    \overline{\mat{A}}\inp + \overline{\vec{d}}, \qquad \forall \inp \in \HR{\inp}.%
\end{equation*}
These linear bounds can be \emph{concretised} into constant bounds as
\begin{equation*}
    \underline{\vec{y}}_{\CROWN{}} =
    \pospart{\underline{\mat{A}}}\underline{\inp} + \negpart{\underline{\mat{A}}}\overline{\inp} + \underline{\vec{d}}
    \leq \NN(\inp) \leq 
    \pospart{\overline{\mat{A}}}\overline{\inp} + \negpart{\overline{\mat{A}}}\underline{\inp} + \overline{\vec{d}}
    = \overline{\vec{y}}_{\CROWN{}},
\end{equation*}
where~\(\underline{\inp}, \overline{\inp}\) are bounds on the input,~\(\inp \in \HR{\inp}\),~\(\pospart{\mat{A}}_{i,j} = \max(0, \mat{A}_{i,j})\), and~\(\negpart{\mat{A}}_{i,j} = \min(0, \mat{A}_{i,j})\).

\CROWN{} computes the linear bounds by a backwards walk over the network~\(\NN\), starting from~\(f^{(K)}\) and propagating linear bounds backwards until reaching~\(f^{(1)}\).
\Cref{algo:crown-top-level} defines \CROWN{} for feed-forward neural networks, such as~\(\NN\).
The algorithm iteratively computes linear bounds
\begin{equation*}
    \underline{\mat{A}}^{(k)}\vec{z}^{(k)} + \underline{\vec{d}}^{(k)} 
    \leq f^{(k+1:)}(\vec{z}^{(k)}) \leq 
    \overline{\mat{A}}^{(k)}\vec{z}^{(k)} + \overline{\vec{d}}^{(k)}, \qquad \forall \vec{z}^{(k)} \in [\underline{\vec{z}}^{(k)}, \overline{\vec{z}}^{(k)}],
\end{equation*}
where~\([\underline{\vec{z}}^{(k)}, \overline{\vec{z}}^{(k)}]\) are bounds on~\(f^{(:k)}(\inp)\) for~\(\inp \in \HR{\inp}\) that are computed before invoking \CROWN{}, for example, using \IntervalArithmetic{}~\cite{ZhangChenXiao2020}.
The bounds~\(\underline{\vec{z}}^{(k)}, \overline{\vec{z}}^{(k)}\) are sometimes known as pre-activation bounds.
Alternatively to using \IntervalArithmetic{}, we can compute~\(\underline{\vec{z}}^{(k)}, \overline{\vec{z}}^{(k)}\) by invoking \cref{algo:crown-linear-bounds} iteratively for~\(f^{(:1)}, \ldots, f^{(:K)}\) and use the concretised bounds on~\(f^{(:1)}, \ldots, f^{(:k-1)}\) as the layer bounds for the~\(k\)-th invocation of \cref{algo:crown-linear-bounds}~\cite{ZhangWengChenEtAl2018}.
\Cref{algo:crown-bootstrap} describes this approach in more detail.
Applying \cref{algo:crown-linear-bounds} with the layer bounds computed by \IntervalArithmetic{} is known as \AlgorithmName{CROWN-IBP}~\cite{ZhangChenXiao2020}, while using \cref{algo:crown-bootstrap} is referred to as just \CROWN{}.
To avoid confusion, in this section, we use \CROWN{} to only refer to \cref{algo:crown-top-level}.

\begin{algorithm}
  \caption{\CROWN{}}\label{algo:crown-top-level}\label{algo:crown-linear-bounds}
  \begin{algorithmic}[1]
    \REQUIRE Neural Network \(\NN = f^{(K)} \circ \cdots \circ f^{(1)}\), Input Bounds \(\HR{\inp}\), Layer Bounds \(\HR[^{(1)}]{\vec{z}}, \ldots, \HR[^{(K-1)}]{\vec{z}}\)
    \STATE \(\ulmat{A}^{(K)} \gets \mat{I}\), \(\olmat{A}^{(K)} \gets \mat{I}\) \COMMENT{\(\mat{I} \in \Reals^{m \times m}\) is the identity matrix}
    \STATE \(\ulvec{d}^{(K)} \gets \vec{0}\), \(\olvec{d}^{(K)} \gets \vec{0}\) \COMMENT{\(\vec{0} \in \Reals^m\) is the all-zero vector}
    \STATE \(\ulz^{(0)} \gets \ulx\), \(\olz^{(0)} \gets \olx\)
    \FOR{\(k \in \{K, \ldots, 1\}\)}
      \STATE \((\ulmat{A}^{(k-1)}, \olmat{A}^{(k-1)}, \ulvarvec{\Delta}, \olvarvec{\Delta}) \gets \textsc{CROWNRule}(f^{(k)}, \ulmat{A}^{(k)}, \olmat{A}^{(k)}, \ulvec{z}^{(k-1)}, \olvec{z}^{(k-1)})\)
      \STATE \(\ulvec{d}^{(k-1)} \gets \ulvec{d}^{(k)} + \ulvarvec{\Delta}\)
      \STATE \(\olvec{d}^{(k-1)} \gets \olvec{d}^{(k)} + \olvarvec{\Delta}\)
    \ENDFOR
    \STATE \textbf{return} \((\ulmat{A}^{(0)}, \olmat{A}^{(0)}, \ulvec{d}^{(0)}, \olvec{d}^{(0)})\)
  \end{algorithmic}
\end{algorithm}

\begin{algorithm}
  \caption{\CROWN{} with \CROWN{} Layer Bounds}\label{algo:crown-bootstrap}
  \begin{algorithmic}[1]
    \REQUIRE Neural Network \(\NN = f^{(K)} \circ \cdots \circ f^{(1)}\), Input Bounds \(\HR{\inp}\)
    \FOR[Below, \CROWN{} refers to \cref{algo:crown-linear-bounds}]{\(k \in [K]\)}
      \STATE \([\ulz^{(k)}, \olz^{(k)}] \gets \textsc{Concretise}(\CROWN{}(f^{(:k)}, \HR{\inp}, \HR[^{(1)}]{\vec{z}}, \ldots, \HR[^{k-1}]{\vec{z}}))\)
    \ENDFOR
    \STATE \textbf{return} \(\CROWN{}(\NN, \HR{\inp}, \HR[^{(1)}]{\vec{z}}, \ldots, \HR[^{K-1}]{\vec{z}})\)
  \end{algorithmic}
\end{algorithm}

Similarly to \IntervalArithmetic{}, \cref{algo:crown-top-level} uses \AlgorithmName{CROWNRule}s to propagate linear bounds through more fundamental functions, such as affine layers or ReLU layers.
Specifically, the \AlgorithmName{CROWNRule} for an affine function~\(f^{(k)}(\vec{z}) = \mat{W}\vec{z} + \vec{b}\) computes
\begin{equation}
  \ulmat{A}^{(k-1)} = \ulmat{A}^{(k)}\mat{W}, \qquad
  \olmat{A}^{(k-1)} = \olmat{A}^{(k)}\mat{W}, \qquad
  \ulvarvec{\Delta}^{(k-1)} = \ulmat{A}^{(k)}\vec{b}, \qquad
  \olvarvec{\Delta}^{(k-1)} = \olmat{A}^{(k)}\vec{b}.%
  \label{eqn:crown-rule-affine}
\end{equation}
The \AlgorithmName{CROWNRule} for a ReLU layer~\(f^{(k)}(\vec{z}) = \ReLU{\vec{z}}\) is
\begin{equation}
  \begin{split}
    \ulmat{A}^{(k-1)} = 
    \pospart{\ulmat{A}^{(k)}}\diag(\ulvarvec{\alpha}) 
    + \negpart{\ulmat{A}^{(k)}}\diag(\olvarvec{\alpha}), 
    & \quad
    \ulvarvec{\Delta}^{(k-1)} = 
    \negpart{\ulmat{A}^{(k)}}\olvarvec{\beta}, \\
    \olmat{A}^{(k-1)} = 
    \pospart{\olmat{A}^{(k)}}\diag(\olvarvec{\alpha}) + \negpart{\olmat{A}^{(k)}}\diag(\ulvarvec{\alpha}),
    & \quad
    \olvarvec{\Delta}^{(k-1)} = 
    \pospart{\olmat{A}^{(k)}}\olvarvec{\beta},
  \end{split}%
  \label{eqn:crown-rule-relu}
\end{equation}
where~\(\diag(\varvec{\alpha})\) is a diagonal matrix with the vector~\(\varvec{\alpha}\) on its diagonal and~\(\ulvarvec{\alpha}, \olvarvec{\alpha}, \olvarvec{\beta} \in \Reals^{n_k}\) have
\begin{align*}
    \olvarvec{\alpha}_i = \begin{cases}
        0 & \olvec{z}^{(k-1)}_i \leq 0 \\
        1 & \ulvec{z}^{(k-1)}_i \geq 0 \\
        \frac{\olvec{z}^{(k-1)}_i}{\olvec{z}^{(k-1)}_i - \ulvec{z}^{(k-1)}_i}
          & \text{otherwise},
    \end{cases} \qquad
    \olvarvec{\beta}_i = \begin{cases}
        0 & 0 \notin (\ulvec{z}^{(k-1)}_i, \olvec{z}^{(k-1)}_i) \\
        -\ulvarvec{z}^{(k-1)}_i \olvarvec{\alpha}_i
          & \text{otherwise},
    \end{cases} \\
    \ulvarvec{\alpha}_i = \begin{cases}
        0 & \olvec{z}^{(k-1)}_i \leq 0 \\
        1 & \ulvec{z}^{(k-1)}_i \geq 0 \\
        0 & \ulvec{z}^{(k-1)}_i < 0 < \olvec{z}^{(k-1)}_i
        \ \text{and}\ |\ulvec{z}^{(k-1)}_i| \geq |\olvec{z}^{(k-1)}_i| \\
        1 & \text{otherwise},
    \end{cases}
\end{align*}
for every~\(i \in [n_k]\).
Instead of the last two cases for~\(\ulvarvec{\alpha}_i\), we can also optimise each~\(\ulvarvec{\alpha}_i\) using gradient descent to improve the linear bounds on the network~\cite{XuZhangWangEtAl2021}, as long as we maintain~\(\ulvarvec{\alpha}_i \in [0, 1]\). 

\textbf{\CROWN{} is not Inclusion-Isotonic.}
We demonstrate that \CROWN{} is not inclusion-isotonic according to \cref{defn:incl-iso-interval-ext} using an example. 
Consider a single ReLU neuron~\(\ReLU{x}\) with input bounds~\(\HR{x} = [-3, 2]\).
The \CROWN{} bounds are~\(0 \leq \ReLU{x} \leq \frac{2}{5}x + \frac{6}{5}\) so that the concretised lower bound~\(\underline{y}_{\CROWN{}} = 0\).
However, the \CROWN{} bounds for~\(\HR[']{x} = [-1, 2] \subset \HR{x}\) are~\(x \leq \ReLU{x} \leq \frac{2}{3}x + \frac{2}{3}\) which yields the concretised lower bound~\(\underline{y}'_{\CROWN{}} = -1\).
Since~\(\underline{y}'_{\CROWN{}} < \underline{y}_{\CROWN{}}\) although~\(\HR[']{x} \subset \HR{x}\), \CROWN{} is not inclusion-isotonic.

\subsection{CROWN Interval Extension}
In this section, we introduce the \CIE*{}. 
\CIE{} is a theoretical device that we use to prove that the width of the concretised \CROWN{} bounds linearly converges to zero as the width of the input bounds converges to zero, analogously to \cref{thm:moore-et-al-six-one}. 
Concretely, \CIE{} is an inclusion-isotonic Lipschitz interval extension according to \cref{defn:incl-iso-interval-ext} and \cref{defn:lipschitz-interval-ext} that is guaranteed to contain the concretised \CROWN{} bounds. 
Since \cref{thm:moore-et-al-six-one} applies to \CIE{} and the concretised \CROWN{} bounds are always contained in the \CIE{} bounds, the convergence properties of \cref{thm:moore-et-al-six-one} also apply to \CROWN{}.
In the following, we first define \CIE{}, show that it is an inclusion-isotonic Lipschitz interval extension, show that the bounds computed by \IntervalArithmetic{} are contained in the bounds computed by \CIE{}, and finally prove that the concretised \CROWN{} bounds are contained in the bounds computed by \CIE{}. 

\subparagraph{\CIE{} Algorithm.}
To define \CIE{}, we define a bounding rule for ReLU.\@
We use the same bounding rule for affine functions as \IntervalArithmetic{}, as introduced in \cref{sec:interval-arithmetic-bounding-rules-extra}.
Similarly to \IntervalArithmetic{}, \CIE{} then computes bounds on a ReLU-activated feed-forward neural network~\(\NN = f^{(K)} \circ \cdots \circ f^{(1)}\) by computing~\((F^{(K)}_{\CIE{}} \circ \cdots \circ F^{(1)}_{\CIE{}})(\ulx, \olx)\), where~\(\HR{\inp} \subseteq \mathcal{X}\) and~\(F^{(k)}_{\CIE{}}: \Reals^{n_k} \times \Reals^{n_k} \to \Reals^{n_k} \times \Reals^{n_k}\) is the \CIE{} bounding rule for~\(f^{(k)}\). 
The \CIE{} bounding rule for ReLU layers~\(f^{(k)}(\vec{v}) = \ReLU{\vec{v}}\) is~\(F^{(k)}([\ulv, \olv]) = [\ulw, \olw]\), where
\begin{equation}
  \ulw_i = \begin{cases}
    0 & \olv_i \leq 0 \\
    \ulv_i & \text{otherwise},
  \end{cases} \qquad
  \olw_i = \begin{cases}
    0 & \olv_i \leq 0 \\
    \olv_i & \text{otherwise},
  \end{cases}\label{eqn:crown-int-ext-relu-rule}
\end{equation}
where~\(i \in [n_k]\).
Comparing this bounding rule to the \AlgorithmName{CROWNRule} for ReLU, we can see that the bounds computed by \CIE{} always contain the concretised \CROWN{} bounds for a ReLU neuron. 
The remainder of this section aims to show that this is also the case for a complete neural network.

\begin{proposition}\label{prop:crown-int-ext-lip-incl-iso-int-ext}
  Let~\(\NN = f^{(K)} \circ \cdots \circ f^{(1)}\) be a ReLU-activated fully-connected neural network. 
  The \CIE{} interval function~\(F^{(K)}_{\CIE{}} \circ \cdots \circ F^{(1)}_{\CIE{}}\) is a inclusion-isotonic Lipschitz interval extension of~\(\NN\).
\end{proposition}

\begin{proof}
  We first show that the \CIE{} bounding rule for ReLU is an inclusion-isotonic Lipschitz interval extension of ReLU.\@
  Let~\(F^{(k)}\) be the \CIE{} bounding rule for the ReLU layer~\(f^{(k)}: \Reals^{(n_k)} \to \Reals^{(n_k)}\) according to \cref{eqn:crown-int-ext-relu-rule}.
  \begin{itemize}
    \item \emph{Interval extension.} 
      Let~\(\vec{v} \in \Reals^{n_k}\) and~\(\HR{\vec{w}} = F^{(k)}([\vec{v}, \vec{v}])\). 
      Let~\(i \in [n_k]\). 
      If~\(\vec{v}_i \leq 0\), we have~\(\ulw_i = \olw = 0 = \ReLU{\vec{v}}_i\).
      Otherwise, if~\(\vec{v}_i > 0\), we have~\(\ulw_i = \olw_i = \vec{v}_i = \ReLU{\vec{v}}_i\).
    \item \emph{Inclusion-isotonic.}
      Let~\(\HR[']{\vec{v}} \subseteq \HR{\vec{v}} \subseteq \Reals^{n_k}\),~\(\HR[']{\vec{w}} = F^{(k)}(\HR[']{\vec{v}})\) and~\(\HR{\vec{w}} = F^{(k)}(\HR{\vec{v}})\).
      Let~\(i \in [n_k]\). 
      If~\(\olv_i \leq 0\), we have~\([\ulw_i, \olw_i] = [0, 0] = [\ulw_i', \olw_i']\) since~\(\olv_i' \leq \olv_i\).
      On the other hand, if~\(\olv_i > 0\), we have~\([\ulw_i, \olw_i] = [\ulv_i, \olv_i]\).
      If~\(\olv_i' \leq 0\),~\([\ulw_i', \olw_i'] = [0,0] \subseteq [\ulw_i, \olw_i]\) since~\(\ulv_i \leq \olv_i' < \olv\).
      Otherwise,~\([\ulw_i', \olw_i'] = [\ulv_i', \olv_i'] \subseteq [\ulw_i, \olw_i]\).
      Therefore,~\(F^{(k)}\) is inclusion-isotonic.
    \item \emph{Lipschitz.}
      Let~\(\HR{\vec{v}} \subseteq \Reals^{n_k}\) and~\(\HR{\vec{w}} = F^{(k)}(\HR{\vec{v}})\).
      For all~\(i \in [n_k]\) with~\(\olw_i \leq 0\) we have~\(w([\ulw_i, \olw_i]) = 0\) and, therefore, trivially~\(w([\ulw_i, \olw_i]) \leq w(\HR{\vec{v}})\).
      For~\(i \in [n_k]\) with~\(\olw_i > 0\), we have~\([\ulw_i, \olw_i] = [\ulv_i, \olv_i]\).
      Overall,~\(w(\HR{\vec{w}}) = \max_{i \in [n_k]} \olw_i - \ulw_i \leq \max_{i \in [n_k]} \olv_i - \ulv_i = w(\HR{\vec{v}})\).
    Therefore,~\(F^{(k)}\) is Lipschitz with Lipschitz constant~\(1\).
  \end{itemize}
  For affine functions, \CIE{} uses the same bounding rule as \IntervalArithmetic{} that is an inclusion-isotonic Lipschitz interval extension~\cite{MooreKearfottCloud2009}.
  Lemma 6.3 of \textcite{MooreKearfottCloud2009} now gives us that the composition~\(F^{(K)}_{\CIE{}} \circ \cdots \circ F^{(1)}_{\CIE{}}\) of inclusion-isotonic Lipschitz interval extensions is also inclusion inclusion-isotonic and Lipschitz. 
  Clearly, a composition of interval extensions is also an interval extension.
\end{proof}

\begin{proposition}\label{prop:crown-int-ext-more-loose-than-ia}
  Let~\(\NN = f^{(K)} \circ \cdots \circ f^{(1)}\) be a ReLU-activated fully-connected neural network.
  Let~\(\HR{\vec{y}}\) and~\(\HR{\vec{w}}\) be the bounds on~\(\NN(\inp)\) for~\(\inp \in \HR{\inp}\) computed by \IntervalArithmetic{} and \CIE{}, respectively.
  It holds that~\(\HR{\vec{y}} \subseteq \HR{\vec{w}}\).
\end{proposition}

\begin{proof}
  Let~\(\NN = f^{(K)} \circ \cdots \circ f^{(1)}\) and~\(\HR{\inp}\) be as in \cref{prop:crown-int-ext-more-loose-than-ia}.
  We prove \cref{prop:crown-int-ext-more-loose-than-ia} by finite induction over~\(k \in \{0, \ldots, K\}\).
  Let~\(\HR[^{(k)}]{\vec{y}}\) and~\(\HR[^{(k)}]{\vec{v}}\) be the bounds that \IntervalArithmetic{} and \CIE{} respectively compute on~\(f^{(:k)}(\inp)\) for~\(\inp \in \HR{\inp}\).

  \textit{Induction start.} For~\(k=0\), we consider the identity function~\(\NN(\inp) = \inp\), for which both \CIE{} and \IntervalArithmetic{} compute~\(\HR[^{(0)}]{\vec{y}} = \HR[^{(0)}]{\vec{v}} = \HR{\inp}\).

  \textit{Induction step.} Now, let~\(k \in [K]\) and assume~\(\HR[^{(k-1)}]{\vec{y}} \subseteq \HR[^{(k-1)}]{\vec{v}}\).
  We show~\(\HR[^{(k)}]{\vec{y}} \subseteq \HR[^{(k)}]{\vec{v}}\).
  We differentiate two cases:
  \begin{itemize}
    \item If~\(f^{(k)}\) is an affine function, both \IntervalArithmetic{} and \CIE{} use the \IntervalArithmetic{} bounding rule for affine functions from \cref{sec:interval-arithmetic-bounding-rules-extra} to compute~\(\HR[^{(k)}]{\vec{y}}\) and~\(\HR[^{(k)}]{\vec{v}}\).
      Since this bounding rule is inclusion-isotonic~\cite{MooreKearfottCloud2009}, we have~\(\HR[^{(k)}]{\vec{y}} \subseteq \HR[^{(k)}]{\vec{v}}\).
    \item Otherwise,~\(f^{(k)}\) is ReLU.\@
      The \IntervalArithmetic{} bounding rule for ReLU computes~\(\HR[^{(k)}]{\vec{y}}  = \left[\ReLU{\uly^{(k-1)}}, \ReLU{\oly^{(k-1)}}\right]\).
      Since the \CIE{} rule for ReLU is an inclusion-isotonic interval extension (\cref{prop:crown-int-ext-lip-incl-iso-int-ext}), it follows that~\(\left[\ReLU{\uly^{(k-1)}}, \ReLU{\oly^{(k-1)}}\right] \subseteq \HR[^{(k)}]{\vec{v}}\).
  \end{itemize}
  Overall, this shows that~\(\HR{\vec{y}} = \HR[^{(K)}]{\vec{y}} \subseteq \HR[^{(K)}]{\vec{v}} = \HR{\vec{v}}\).
\end{proof}

\begin{proposition}\label{prop:crown-contained-in-crown-int-ext}
  Let~\(\NN = f^{(K)} \circ \cdots \circ f^{(1)}\) be a ReLU-activated fully-connected neural network, let~\(\HR{\inp} \subseteq \Reals^n\) and let~\(\HR[^{(k)}]{\vec{v}}\) be the \CIE{} bounds on~\(f^{(:k)}(\inp)\) for~\(\inp \in \HR{\inp}\),~\(\forall k \in [K]\).
  Further, let~\(\HR[^{(k)}]{\vec{z}}\) also be bounds on~\(f^{(:k)}(\inp)\) for~\(\inp \in \HR{\inp}\) with~\(\HR[^{(k)}]{\vec{z}} \subseteq \HR[^{(k)}]{\vec{v}}\),~\(\forall k \in [K]\).
  It holds that
  \begin{equation*}
    \ulv^{(K)} \leq \ulA\inp + \uld \quad\text{and}\quad
    \olA\inp + \old \leq \olv^{(K)}, \qquad \forall \inp \in \HR{\inp},
  \end{equation*}
  where~\(\ulA\inp + \uld\) and~\(\olA\inp + \old\) are the bounds on~\(\NN(\inp)\) for~\(\inp \in \HR{\inp}\) computed by \CROWN{} (\cref{algo:crown-linear-bounds}) using the layer bounds~\(\HR[^{(1)}]{\vec{z}}, \ldots, \HR[^{(K-1)}]{\vec{z}}\).
\end{proposition}

\begin{proof}
  Let~\(\NN = f^{(K)} \circ \cdots \circ f^{(1)}\),~\(\HR{\inp}\),~\(\HR[^{(k)}]{\vec{z}}\), and~\(\HR[^{(k)}]{\vec{v}}\) for all~\(k \in [K]\) be as in \cref{prop:crown-contained-in-crown-int-ext}.
  Let~\(\ulA^{(k)}, \olA^{(k)}, \uld^{(k)}, \old^{(k)}\) for all~\(k \in \{0, \ldots, K\}\) be as in \cref{algo:crown-top-level} for the inputs~\(\NN\),~\(\HR{\inp}\) and~\(\HR[^{(1)}]{\vec{z}}, \ldots, \HR[^{(K)}]{\vec{z}}\).
  Further, let~\(\HR[^{(0)}]{\vec{z}} = \HR[^{(0)}]{\vec{v}} = \HR{\inp}\).
  In the following, we prove
  \begin{equation}
    \ulA^{(k)}\vec{z}^{(k)} + \uld^{(k)} \geq \ulv^{(K)} \quad\text{and}\quad
    \olA^{(k)}\vec{z}^{(k)} + \old^{(k)} \leq \olv^{(K)},\qquad 
    \forall \vec{z}^{(k)} \in \HR[^{(k)}]{\vec{v}},%
    \label{eqn:crown-contained-crown-int-ext-ia}
  \end{equation}
  for every~\(k \in \{0, \ldots, K\}\).
  Note that~\(\HR[^{(k)}]{\vec{z}} \subseteq \HR[^{(k)}]{\vec{v}}\) by assumption in \cref{prop:crown-contained-in-crown-int-ext}.
  Proving \cref{eqn:crown-contained-crown-int-ext-ia} also proves \cref{prop:crown-contained-in-crown-int-ext} when~\(k=0\).
  We proceed by finite induction over~\(k\) from~\(K\) to \(0\), following the backwards walk performed by \cref{algo:crown-top-level}. 

  \emph{Induction start.} 
  We consider~\(k = K\).
  In this case,~\(\ulA^{(K)} = \olA^{(K)} = \mat{I}\) and~\(\uld^{(K)} = \old^{(K)} = \vec{0}\), where~\(\mat{I} \in \Reals^{n_K \times n_K}\) is the identity matrix and~\(\vec{0} \in \Reals^{n_K}\) is the all-zero vector.
  Therefore, we have
  \begin{align*}
    \ulA^{(K)}\vec{z}^{(K)} + \uld^{(K)} = \mat{I}\vec{z}^{(K)} + \vec{0} \geq \ulv^{(K)} \\
    \olA^{(K)}\vec{z}^{(K)} + \old^{(K)} = \mat{I}\vec{z}^{(K)} + \vec{0} \leq \olv^{(K)},
  \end{align*}
  for~\(\vec{z}^{(K)} \in \HR[^{(K)}]{\vec{v}}\).
  This proves \cref{eqn:crown-contained-crown-int-ext-ia} for~\(k = K\).

  \emph{Induction step.}
Let~\(k \in [K]\) and assume \cref{eqn:crown-contained-crown-int-ext-ia} holds for~\(k\).
  We show that it also holds for~\(k-1\).
  We differentiate two cases, based on whether~\(f^{(k)}\) is a affine function or ReLU.\@
  \begin{itemize}
    \item \emph{Affine function.} 
      If~\(f^{(k)}(\vec{z}) = \mat{W}\vec{z} + \vec{b}\), \AlgorithmName{CROWNRule} computes the linear bounds on~\(f^{(k:)}\) as in \cref{eqn:crown-rule-affine}, so that we obtain
      \begin{equation*}
        \ulA^{(k-1)}\vec{z} + \uld^{(k-1)} = \ulA^{(k)}\mat{W}\vec{z} + \ulA^{(k)}\vec{b} + \uld^{(k)} = \ulA^{(k)}(\mat{W}\vec{z} + \vec{b}) + \uld^{(k)} \geq \ulv^{(K)},
      \end{equation*}
      where~\(\vec{z} \in \HR[^{(k-1)}]{\vec{v}}\).
      The final inequality is due to the induction assumption that \cref{eqn:crown-contained-crown-int-ext-ia} holds for~\(k\), which we can apply since~\(\mat{W}\vec{z} + \vec{b} \in \HR[^{(k)}]{\vec{v}}\), since \CIE{} computes~\(\HR[^{(k)}]{\vec{v}}\) from~\(\HR[^{(k-1)}]{\vec{v}}\) using \cref{eqn:ia-affine-rule}.
      Similarly, we obtain
      \begin{equation*}
        \olA^{(k-1)}\vec{z} + \old^{(k-1)} = \olA^{(k)}\mat{W}\vec{z} + \olA^{(k)}\vec{b} + \old^{(k)} = \olA^{(k)}(\mat{W}\vec{z} + \vec{b}) + \old^{(k)} \leq \olv^{(K)},
      \end{equation*}
      again by using the induction assumption.
    \item \emph{ReLU.}
      Let~\(\vec{z} \in \HR[^{(k-1)}]{\vec{v}}\) and~\(i \in [n_k]\).
      \CROWN{} computes the linear bounds on~\(f^{(k:)}\) according to \cref{eqn:crown-rule-relu}, which yields
      \begin{align*}
        \ulA^{(k-1)}_{\,:,i}\vec{z}_i + \uld^{(k-1)}_i 
        &= \left(\pospart{\ulA^{(k)}_{\,:,i}}\ulalpha_i + \negpart{\ulA^{(k)}_{\,:,i}}\olalpha_i\right)\vec{z}_i + \negpart{\ulA^{(k)}_{\,:,i}}\olbeta_i + \uld^{(k)}_i \\
        \olA^{(k-1)}_{\,:,i}\vec{z}_i + \old^{(k-1)}_i 
        &= \left(\pospart{\olA^{(k)}_{\,:,i}}\olalpha_i + \negpart{\olA^{(k)}_{\,:,i}}\ulalpha_i\right)\vec{z}_i + \pospart{\olA^{(k)}_{\,:,i}}\olbeta_i + \old^{(k)}_i,
      \end{align*}
      where~\(\mat{A}_{\,:,i}\) denotes the~\(i\)-th column of matrix~\(\mat{A}\).
      We now differentiate three cases based on the signs of~\(\ulz^{(k-1)}_i\) and~\(\olz^{(k-1)}_i\).
      \begin{enumerate}[(i)]
        \item Consider~\(\olz^{(k-1)}_i \leq 0\).
          We have~\(\ulalpha_i = \olalpha_i = \olbeta_i = 0\).
          Therefore,
          \begin{align*}
            \ulA^{(k-1)}_{\,:,i}\vec{z}_i + \uld^{(k-1)}_i &= \uld^{(k)}_i \geq \ulv^{(K)} \\
            \olA^{(k-1)}_{\,:,i}\vec{z}_i + \old^{(k-1)}_i &= \old^{(k)}_i \leq \olv^{(K)},
          \end{align*}
          where both inequalities are due to the induction assumption with~\(\vec{z}^{(k)}_i = 0\) in \cref{eqn:crown-contained-crown-int-ext-ia}.
          We can apply the induction assumption since~\(0 = \ReLU{\olz^{(k-1)}_i} \in \HR[^{(k)}]{\vec{z}}\) and, therefore,~\(0 \in \HR[^{(k)}]{\vec{v}}\) since~\(\HR[^{(k)}]{\vec{z}} \subseteq \HR[^{(k)}]{\vec{v}}\).
        \item Consider~\(\olz^{(k-1)}_i > 0\) and~\(\ulz^{(k-1)}_i \geq 0\).
          First, we find that~\(\HR[^{(k)}_i]{\vec{v}} = \HR[^{(k-1)}_i]{\vec{v}}\) according to \cref{eqn:crown-int-ext-relu-rule} since~\(\olv^{(k-1)}_i \geq \olz^{(k-1)}_i > 0\).
          Regarding \CROWN{}, we have~\(\ulalpha_i = \olalpha_i = 1\) and~\(\olbeta_i = 0\).
          Therefore,
          \begin{align*}
            \ulA^{(k-1)}_{\,:,i}\vec{z}_i + \uld^{(k-1)}_i 
            &= \left(\pospart{\ulA^{(k)}_{\,:,i}} + \negpart{\ulA^{(k)}_{\,:,i}}\right)\vec{z}_i + \uld^{(k)}_i
            = \ulA^{(k)}_{\,:,i}\vec{z}_i + \uld^{(k)}_i \geq \ulv^{(K)} \\
            \olA^{(k-1)}_{\,:,i}\vec{z}_i + \old^{(k-1)}_i 
            &= \left(\pospart{\olA^{(k)}_{\,:,i}} + \negpart{\olA^{(k)}_{\,:,i}}\right)\vec{z}_i + \old^{(k)}_i
            = \olA^{(k)}_{\,:,i}\vec{z}_i + \old^{(k)}_i \leq \olv^{(K)},
          \end{align*}
          where the inequalities are due to the induction assumption that we can apply since~\(\vec{z}_i \in \HR[^{(k-1)}_i]{\vec{v}} = \HR[^{(k)}_i]{\vec{v}}\).
        \item Finally, consider~\(\ulz^{(k-1)}_i < 0 < \olz^{(k-1)}_i\).
          With the same argument as in the previous case, we have~\(\HR[^{(k)}_i]{\vec{v}} = \HR[^{(k-1)}_i]{\vec{v}}\).
          We have
          \begin{equation*}
            \olalpha_i = \frac{\olz^{(k-1)}_i}{\olz^{(k-1)}_i - \ulz^{(k-1)}_i}, \quad
            \olbeta_i = -\ulz^{(k-1)}_i\olalpha_i 
            = \frac{-\ulz^{(k-1)}_i\olz^{(k-1)}_i}{\olz^{(k-1)}_i - \ulz^{(k-1)}_i}
          \end{equation*}
          and~\(\ulalpha_i \in [0, 1]\).
          Now,
          \begin{subequations}
            \begin{align}
              & \ulA^{(k-1)}_{\,:,i}\vec{z}_i + \uld^{(k-1)}_i \nonumber \\
              ={}& \left(\pospart{\ulA^{(k)}_{\,:,i}}\ulalpha_i + \negpart{\ulA^{(k)}_{\,:,i}}\olalpha_i\right)\vec{z}_i + \negpart{\ulA^{(k)}_{\;:,i}}\olbeta_i + \uld^{(k)}_i \label{eqn:crown-cont-cie-iii-before-minimiser-z}\\
              \geq{}&
              \pospart{\ulA^{(k)}_{\,:,i}}\ulalpha_i \ulv^{(k-1)}_i 
              + \negpart{\ulA^{(k)}_{\,:,i}}\olalpha_i \olv^{(k-1)}_i + \negpart{\ulA^{(k)}_{\;:,i}}\olbeta_i 
              + \uld^{(k)}_i \label{eqn:crown-cont-cie-iii-minimiser-z}\\
              \geq{}&
              \pospart{\ulA^{(k)}_{\,:,i}}\ulv^{(k-1)}_i 
              + \negpart{\ulA^{(k)}_{\,:,i}}\frac{\olz^{(k-1)}_i\olv^{(k-1)}_i - \ulz^{(k-1)}_i\olz^{(k-1)}_i}{\olz^{(k-1)}_i - \ulz^{(k-1)}_i} 
              + \uld^{(k)}_i \label{eqn:crown-cont-cie-iii-minimiser-alpha}\\
              \geq{}&
              \pospart{\ulA^{(k)}_{\,:,i}}\ulv^{(k-1)}_i 
              + \negpart{\ulA^{(k)}_{\,:,i}}\frac{\olz^{(k-1)}_i\olv^{(k-1)}_i - \ulz^{(k-1)}_i\olv^{(k-1)}_i}{\olz^{(k-1)}_i - \ulz^{(k-1)}_i} 
              + \uld^{(k)}_i \label{eqn:crown-cont-cie-iii-ub-z-to-ub-v}\\
              \geq{}&
              \pospart{\ulA^{(k)}_{\,:,i}}\ulv^{(k-1)}_i 
              + \negpart{\ulA^{(k)}_{\,:,i}}\olv^{(k-1)}_i + \uld^{(k)}_i \nonumber \\
              \geq{}&
              \ulA^{(k)}_{\,:,i}\left(\mathbf{1}_{\ulA^{(k)}_{\,:,i} \geq 0}\,\ulv^{(k-1)}_i 
              + \mathbf{1}_{\ulA^{(k)}_{\,:,i} < 0}\,\olv^{(k-1)}_i\right)
              + \uld^{(k)}_i \nonumber \\
              \geq{}& \ulv^{(K)},%
              \label{eqn:crown-cont-cie-iii-ia}
            \end{align}
          \end{subequations}
          where~\(\mathbf{1}_{\ulA^{(k)}_{\,:,i} \geq 0} \in {\{0, 1\}}^{n_k}\) with
          \begin{equation*}
            {\left(\mathbf{1}_{\ulA^{(k)}_{\,:,i} \geq 0}\right)}_j = \begin{cases}
              1 & \ulA^{(k)}_{j,i} \geq 0 \\
              0 & \text{otherwise}
            \end{cases}
          \end{equation*}
          for~\(j \in [n_k]\) and~\(\mathbf{1}_{\ulA^{(k)}_{\,:,i} < 0}\) is defined equivalently.
          Above, the inequality in \cref{eqn:crown-cont-cie-iii-minimiser-z} is by choosing~\(\vec{z}_i \in \HR[^{(k-1)}]{\vec{v}}\) as the minimiser of \cref{eqn:crown-cont-cie-iii-before-minimiser-z}.
          The inequality in \cref{eqn:crown-cont-cie-iii-minimiser-alpha} is by choosing~\(\ulalpha_i = 1\) which minimises \cref{eqn:crown-cont-cie-iii-minimiser-z} since~\(\ulv^{(k-1)}_i \leq \ulz^{(k-1)}_i < 0\).
          To justify the inequality in \cref{eqn:crown-cont-cie-iii-ub-z-to-ub-v} we note that~\(-\ulz^{(k-1)}_i\negpart{\ulA^{(k)}_{\,:, i}} \leq 0\) since~\(\ulz^{(k-1)}_i < 0\) and, further~\(\olv^{(k-1)}_i \geq \olz^{(k-1)}_i\).
          Finally, \cref{eqn:crown-cont-cie-iii-ia} follows from the induction assumption that we can apply since
          \begin{equation*}
            \left(\mathbf{1}_{\ulA^{(k)}_{\,:,i} \geq 0}\,\ulv^{(k-1)}_i + \mathbf{1}_{\ulA^{(k)}_{\,:,i} < 0}\,\olv^{(k-1)}_i\right) \in \HR[^{(k-1)}]{\vec{v}}.
          \end{equation*}
          For the upper bound, we apply similar steps to obtain
          \begin{align*}
              & \olA^{(k-1)}_{\,:,i}\vec{z}_i + \old^{(k-1)}_i
              = \left(
                \pospart{\olA^{(k)}_{\,:,i}}\olalpha_i 
                + \negpart{\olA^{(k)}_{\,:,i}}\ulalpha_i
              \right)\vec{z}_i 
              + \pospart{\olA^{(k)}_{\;:,i}}\olbeta_i 
              + \old^{(k)}_i \\
              \leq{}&
              \pospart{\olA^{(k)}_{\,:,i}}\olalpha_i \olv^{(k-1)}_i 
              + \negpart{\olA^{(k)}_{\,:,i}}\ulalpha_i \ulv^{(k-1)}_i 
              + \pospart{\olA^{(k)}_{\;:,i}}\olbeta_i 
              + \old^{(k)}_i \\
              \leq{}&
              \pospart{\olA^{(k)}_{\,:,i}}\frac{\olz^{(k-1)}_i\olv^{(k-1)}_i - \ulz^{(k-1)}_i\olz^{(k-1)}_i}{\olz^{(k-1)}_i - \ulz^{(k-1)}_i} 
              + \negpart{\olA^{(k)}_{\,:,i}}\ulv^{(k-1)}_i 
              + \old^{(k)}_i \\
              \leq{}&
              \pospart{\olA^{(k)}_{\,:,i}}\frac{\olz^{(k-1)}_i\olv^{(k-1)}_i - \ulz^{(k-1)}_i\olv^{(k-1)}_i}{\olz^{(k-1)}_i - \ulz^{(k-1)}_i} 
              + \negpart{\olA^{(k)}_{\,:,i}}\ulv^{(k-1)}_i 
              + \old^{(k)}_i \\
              \leq{}&
              \pospart{\olA^{(k)}_{\,:,i}}\olv^{(k-1)}_i 
              + \negpart{\olA^{(k)}_{\,:,i}}\ulv^{(k-1)}_i 
              + \old^{(k)}_i \\
              \leq{}&
              \olA^{(k)}_{\,:,i}\left(\mathbf{1}_{\olA^{(k)}_{\,:,i} \geq 0}\,\olv^{(k-1)}_i 
              + \mathbf{1}_{\olA^{(k)}_{\,:,i} < 0}\,\ulv^{(k-1)}_i\right)
              + \old^{(k)}_i \\
              \leq{}& \olv^{(K)}.
          \end{align*}
      \end{enumerate}
  \end{itemize}
  By this induction, we have shown \cref{eqn:crown-contained-crown-int-ext-ia}.
  \Cref{prop:crown-contained-in-crown-int-ext} now follows for~\(k = 0\), where~\(\ulA^{(0)} = \ulA\),~\(\olA^{(0)} = \olA\),~\(\uld^{(0)} = \uld\),~\(\old^{(0)} = \old\) and~\(\HR[^{(0)}]{\vec{v}} = \HR{\inp}\).
\end{proof}
\Cref{prop:crown-contained-in-crown-int-ext} requires layer bounds~\(\HR[^{(1)}]{\vec{z}}, \ldots, \HR[^{(K-1)}]{\vec{z}}\) which may not be looser than the \CIE{} bounds.
As we have shown in \cref{prop:crown-int-ext-more-loose-than-ia}, this applies for \IntervalArithmetic{}.
A consequence of \cref{prop:crown-contained-in-crown-int-ext} is that this also applies to \cref{algo:crown-bootstrap}.
\begin{corollary}\label{prop:crown-bootstrap-corollary}
  Let~\(\NN\),~\(\HR{\inp}\) and~\(\HR[^{(K)}]{\vec{v}}\) be as in \cref{prop:crown-contained-in-crown-int-ext}.
  It holds that
  \begin{equation*}
    \ulv^{(K)} \leq \ulA\inp + \uld \quad\text{and}\quad
    \olA\inp + \old \leq \olv^{(K)}, \qquad \forall \inp \in \HR{\inp},
  \end{equation*}
  where~\(\ulA\inp + \uld\) and~\(\olA\inp + \old\) are the bounds on~\(\NN(\inp)\) for~\(\inp \in \HR{\inp}\) computed by \cref{algo:crown-bootstrap}.
\end{corollary}

\begin{proof}
  \Cref{prop:crown-bootstrap-corollary} follows from \cref{prop:crown-contained-in-crown-int-ext} by an inductive argument over~\(k \in [K]\).
  Let~\(\NN = f^{(K)} \circ \cdots \circ f^{(1)}\),~\(\HR{\inp}\) and~\(\HR[^{(1)}]{\vec{v}}, \ldots, \HR[^{(K)}]{\vec{v}}\) be as in \cref{prop:crown-contained-in-crown-int-ext}.

  \emph{Induction start.} Applying \cref{algo:crown-linear-bounds} to~\(f^{(:1)} = f^{(1)}\) does not require layer bounds, so that \cref{prop:crown-contained-in-crown-int-ext} holds for the (concretised) \CROWN{} bounds on~\(f^{(:1)}\).

  \emph{Induction step.} Let~\(k \in \{2, \ldots, K\}\). 
  Assume the concretised \CROWN{} bounds~\(\HR[^{(1)}]{\vec{z}}, \ldots, \HR[^{(k-1)}]{\vec{z}}\) on~\(f^{(:1)}, \ldots, f^{(:k-1)}\) satisfy \cref{prop:crown-contained-in-crown-int-ext}, that is,~\(\HR[^{(k)}]{\vec{z}} \subseteq \HR[^{(k)}]{\vec{v}}\) for every~\(k \in [k-1]\).
  Since we only require layer bounds on~\(f^{(:1)}, \ldots, f^{(:k-1)}\) to apply \CROWN{} to~\(f^{(:k)}\), \cref{prop:crown-contained-in-crown-int-ext} applies to the (concretised) \CROWN{} bounds we obtain for~\(f^{(:k)}\).
\end{proof}

We are now able to prove that \CROWN{} satisfies \cref{defn:convergent-bounds} as a corollary of \cref{prop:lipschitz-incl-iso-int-ext-is-convergent} and \cref{prop:crown-contained-in-crown-int-ext}.
\begin{corollary}\label{prop:crown-is-convergent}
  \CROWN{} satisfies \cref{defn:convergent-bounds}.
\end{corollary}
\begin{proof}
  Since \CIE{} is a Lipschitz interval extension according to \cref{prop:crown-int-ext-lip-incl-iso-int-ext}, \cref{prop:lipschitz-incl-iso-int-ext-is-convergent} applies.
  Therefore, \CIE{} satisfies \cref{defn:convergent-bounds}.
  Now, due to \cref{prop:crown-contained-in-crown-int-ext}, \CROWN{} also satisfies \cref{defn:convergent-bounds}.
\end{proof}

\section{Experiments}\label{sec:experiments-additional}
This section contains additional details on the experiments from \cref{sec:experiments} and additional experimental results.
A reproducibility package containing all datasets, networks, and raw data from our experiments is available at \url{https://doi.org/10.5281/zenodo.15521583}.
Our code is also available at \url{https://github.com/sen-uni-kn/probspecs}.

\subsection{Hardware}\label{sec:details-hardware}
We run all our experiments on a workstation running Ubuntu 22.04 with an Intel i7--4820K CPU, 32 GB of memory and no GPU (HW1). 
This CPU model is ten years old (introduced in late 2013) and has four cores and eight threads.

In comparison,~\textcite{ConverseFilieriGopinathEtAl2020} use an AMD EPYC 7401P CPU for their ACAS Xu robustness experiments, limiting their tool to use 46 threads and at most 4GB of memory. 
AMD EPYC 7401P was introduced in mid-2017, targeting servers.
\Textcite{MarzariRoncolatoFarinelli2023} use a Ubuntu 22.04 workstation with an Intel i5--13600KF CPU and an Nvidia GeForce RTX 4070 Ti GPU.\@
This CPU was introduced in end-2022 and has 14 cores with 20 threads.

\Cref{fig:hardware-compare-versus} contains the CPU comparison from \url{versus.com}.
As the figure shows, our~HW1 is less performant when considering both the theoretical performance according to the hardware specification and the actual performance on a series of computational benchmarks.
Therefore, HW1 is inferior in computation power to the hardware used by~\textcite{ConverseFilieriGopinathEtAl2020} and~\textcite{MarzariRoncolatoFarinelli2023}.

\begin{figure}
  \centering
  \hfill
  \begin{subfigure}{.4\textwidth}
    \centering
    \includegraphics[width=\textwidth]{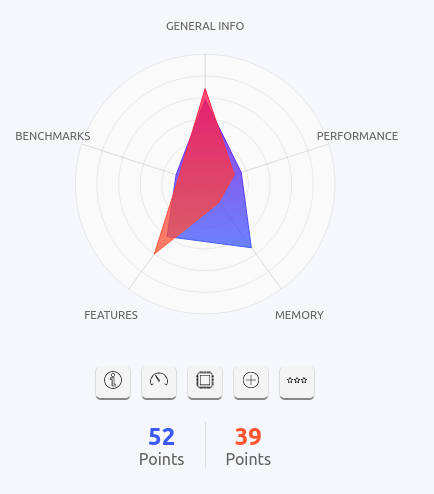}
    \caption{HW1~\legendcolorbox{VersusComOurs} vs. \Textcite{ConverseFilieriGopinathEtAl2020}~\legendcolorbox{VersusComTheirs}} 
  \end{subfigure}
  \hfill
  \begin{subfigure}{.4\textwidth}
    \centering
    \includegraphics[width=\textwidth]{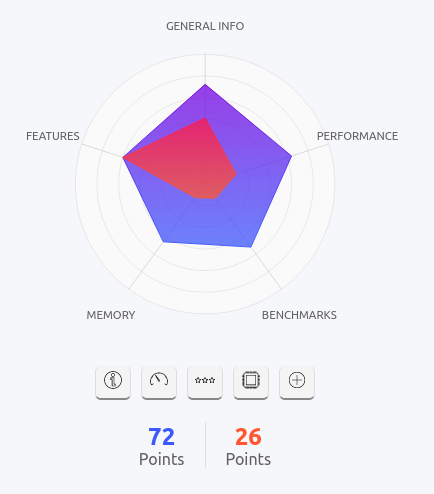}
    \caption{HW1~\legendcolorbox{VersusComOurs} vs. \Textcite{MarzariRoncolatoFarinelli2023}~\legendcolorbox{VersusComTheirs}}
  \end{subfigure}
  \hfill
  \tikzexternalenable%
  \tikzsetexternalprefix{hardware-legendcolorbox}
  \caption{%
    Hardware comparison from \url{versus.com}. 
    The figures are taken from \url{https://versus.com/en/amd-epyc-7401p-vs-intel-core-i7-6700}
    and \url{https://versus.com/en/intel-core-i5-13600kf-vs-intel-core-i7-4820k}, respectively.
    Accessed on the 17th of May 2024.
  }\label{fig:hardware-compare-versus}
  \tikzexternaldisable%
\end{figure}

\subsection{FairSquare Benchmark}\label{sec:fairsquare-additional}
We provide additional details on the FairSquare benchmark and present additional results.

\subsubsection{Extended Description of the Benchmark}
The input space in the FairSquare benchmark consists of three unbounded continuous variables for the age, the years of education (\enquote{edu}), and the yearly gain in capital of a person (\enquote{capital gain}), respectively, as well as an additional discrete protected variable indicating a person's (assumed binary) sex.
The neural networks use \enquote{age} and \enquote{edu} or \enquote{age}, \enquote{edu} and \enquote{capital gain}, depending on the network architecture, to predict whether a person has a high salary (higher than \$50\,000).
The three input distributions used by the FairSquare benchmark are
\begin{enumerate}
  \item the combination of three independent normal distributions for the continuous variables and a Bernoulli distribution for the person's sex, 
  \item a Bayesian Network with the structure in \cref{fig:bayes-net-fairsquare} introducing correlations between the variables that are similarly distributed as for the independent input distribution (\enquote{Bayes Net 1}), and 
  \item the same Bayesian Network augmented with a constraint that the years of education may not exceed a person's age (\enquote{Bayes Net 2}).
\end{enumerate}
The integrity constraint is implemented by~\textcite{AlbarghouthiDAntoniDrewsEtAl2017} as a post-processing of the samples from the Bayesian Network\@.
In particular, a person's years of education are set to the person's age if the sampled years of education are larger than the sampled age.
We introduce a pre-processing layer before the neural network we want to verify to implement this integrity constraint in our setting.
This pre-processing layer computes~\(\mathrm{edu}' = \min(\mathrm{edu}, \mathrm{age})\) and leaves the remaining inputs unchanged.

Besides experiments on the demographic parity fairness notion, the extended FairSquare benchmarks also include experiments on the parity of qualified persons fairness notion as defined in \cref{sec:specs-parity-of-qualified}.
For both fairness notions, the FairSquare benchmark uses a fairness threshold of~\(\gamma = 0.85\).

\begin{figure}
  \centering
  \begin{tikzpicture}[minimum size=1cm]
    \node[draw=black, circle] (S) at (0,0) {sex};
    \node[draw=black, circle, below=of S] (C) {cg};
    \node[draw=black, circle, below left=of C] (A) {age};
    \node[draw=black, circle, below right=of C] (E) {edu};

    \draw[black,-Stealth] (S) -- (C);
    \draw[black,-Stealth] (C) -- (A);
    \draw[black,-Stealth] (C) -- (E);
  \end{tikzpicture}
  \caption{%
    FairSquare Bayes Net 1.
    The network structure of the Bayesian Network from the FairSquare benchmark.
    In this figure, \enquote{cg} denotes the capital gain variable and \enquote{edu} denotes the number of years of education.
  }\label{fig:bayes-net-fairsquare}
\end{figure}

\subsubsection{Extended Results}
\Cref{tab:fairsquare-non-clipped} contains the comparison of \ToolName{} and \FairSquare{} on the extended set of FairSquare benchmarks from \textcite{AlbarghouthiDAntoniDrewsEtAl2017}.
The runtime from this table is visualised in \cref{fig:fairsquare}.

\begin{table}
  \centering
  \caption{%
    FairSquare benchmark results.
    The first two columns indicate the neural network~\(\NN\) that is verified and the input probability distribution~\(\prob[\inp]\), respectively.
    We verify two fairness notions: demographic parity and parity of qualified persons.
    For each fairness notion, the first two columns contain the runtime of \ToolName{} and \FairSquare{} in seconds.
    Here, \enquote{TO} indices timeout (900s).
    The last column contains the result of the verification with \ToolName{}.
   }\label{tab:fairsquare-non-clipped}
  \vspace*{0.1in}
  \begin{tabular}{cc@{\hspace{2em}}rr@{\hspace{1em}}c@{\hspace{2em}}rr@{\hspace{1em}}c}
    \toprule
    && && &
    \multicolumn{3}{c}{\textbf{Parity of}} \\
    && 
    \multicolumn{3}{c}{\textbf{Demographic Parity}} &
    \multicolumn{3}{c}{\textbf{Qualified Persons}}
    \\ \cmidrule(lr){3-5}\cmidrule(lr){6-8}
    \multicolumn{2}{c}{\textbf{Benchmark Instance}} &
    \multicolumn{2}{c}{\textbf{Runtime (s)}} & &
    \multicolumn{2}{c}{\textbf{Runtime (s)}} \\
    $\NN$ & $\prob[\inp]$ &
    \textbf{\ToolName{}} & \textbf{FairSquare} & \textbf{Fair?} &
    \textbf{\ToolName{}} & \textbf{FairSquare} & \textbf{Fair?} \\ \midrule
    NN\textsubscript{2,1} & independent &          2 & 2   & \exsuccess{} & \textbf{2}    & 3 & \exsuccess{} \\
    NN\textsubscript{2,1} & Bayes Net 1 & \textbf{4} & 76  & \exsuccess{} & \textbf{8}    & 38 & \exsuccess{} \\
    NN\textsubscript{2,1} & Bayes Net 2 & \textbf{5} & 51  & \exsuccess{} & \textbf{12}   & 26 & \exsuccess{} \\[.25em]
    NN\textsubscript{2,2} & independent & \textbf{2} & 4   & \exsuccess{} & \textbf{2}    & 7 & \exsuccess{} \\
    NN\textsubscript{2,2} & Bayes Net 1 & \textbf{4} & 33  & \exsuccess{} & \textbf{11}   & 60 & \exsuccess{} \\
    NN\textsubscript{2,2} & Bayes Net 2 & \textbf{6} & 59  & \exsuccess{} & \textbf{17}   & 61 & \exsuccess{} \\[.25em]
    NN\textsubscript{3,2} & independent & \textbf{2} & 451 & \exsuccess{} & \textbf{1}    & 657 & \exsuccess{} \\
    NN\textsubscript{3,2} & Bayes Net 1 & \textbf{4} & TO  & \exsuccess{}{} & \textbf{40} & TO  & \exsuccess{} \\
    NN\textsubscript{3,2} & Bayes Net 2 & \textbf{5} & TO  & \exsuccess{}{} & \textbf{57} & TO & \exsuccess{} \\ 
    \bottomrule
  \end{tabular}
\end{table}

\subsection{ACAS Xu Safety}\label{sec:acasxu-safety-additional}
We provide a comparison of \ProbBounds{} with \eProVe{}~\cite{MarzariCorsiMarchesiniEtAl2024} on all 36 ACAS~Xu networks to which property~\(\phi_2\) of \textcite{KatzBarrettDillEtAl2017} applies.
These results are contained in \cref{tab:acasxu-safety-eprove-full}, revealing that the networks in \cref{tab:cx-counting-compare} are outliers.
However, \ProbBounds{} still computes tighter sound bounds faster than \eProVe{} computes probably sound bounds for 12 of the 36 networks.

\Cref{tab:acasxu-safety-long} contains additional results for running \ProbBounds{} with a finer grid of time budgets on the networks from \cref{tab:cx-counting-compare} and two additional challenging instances from \textcite{KatzBarrettDillEtAl2017}.

\begin{table}
\centering
  \caption[Comparison of ProbBounds and eProVe for ACAS~Xu Safety]{%
    Comparison of \ProbBounds{} and \eProVe{} for ACAS~Xu safety.
    A \ProbBounds{} column is marked with an arrow (\(\ms\)) if \ProbBounds{} computes a tighter upper bound than \eProVe{} within a certain time budget. 
    If this is not the case for any time budget, the \eProVe{} entry is marked with an arrow (\(\ms\)).
    In total, \ProbBounds{} computes tighter upper bounds within 60s in 16 cases.
    In 12 cases, it even computes a tighter bound faster than \eProVe{}.
  }\label{tab:acasxu-safety-eprove-full}
  \vspace*{0.1in}
\begin{tabular}{llc@{}lc@{}lc@{}lc@{}lr}
 \toprule
 & & \multicolumn{6}{c}{\bfseries \ProbBounds{}} & \multicolumn{3}{c}{\bfseries \eProVe{}} \\ \cmidrule(lr){3-8}\cmidrule(lr){9-11}
 & & \multicolumn{2}{c}{\bfseries 10s} & \multicolumn{2}{c}{\bfseries 30s} & \multicolumn{2}{c}{\bfseries 60s} & \multicolumn{3}{c}{\bfseries 99.9\% confid.}
\\
 $\phi$ & $\NN$ & $\ell,u$ & & $\ell,u$ & & $\ell,u$ & & $u$ & & \multicolumn{1}{c}{Rt} \\
\midrule
\multirow[t]{36}{*}{$\varphi_{2}$} 
 & $N_{2,1}$ & $  0.03\%,   3.19\%$ &     & $  0.09\%,   2.50\%$ &$\ms$& $  0.16\%,   2.09\%$ &     & $  2.58\%$ &    &  64s  \\
 & $N_{2,2}$ & $  0.00\%,   3.86\%$ &     & $  0.08\%,   2.93\%$ &     & $  0.24\%,   2.64\%$ &     & $  2.49\%$ &$\ms$&  44s \\
 & $N_{2,3}$ & $  0.51\%,   3.46\%$ &$\ms$& $  0.85\%,   2.98\%$ &     & $  1.00\%,   2.68\%$ &     & $  3.58\%$ &    &  55s  \\
 & $N_{2,4}$ & $  0.00\%,   2.72\%$ &$\ms$& $  0.03\%,   2.13\%$ &     & $  0.04\%,   2.06\%$ &     & $  2.78\%$ &    &  66s  \\
 & $N_{2,5}$ & $  0.03\%,   4.23\%$ &     & $  0.12\%,   3.69\%$ &     & $  0.34\%,   3.14\%$ &     & $  3.06\%$ &$\ms$&  47s \\
 & $N_{2,6}$ & $  0.01\%,   3.67\%$ &     & $  0.16\%,   2.89\%$ &     & $  0.30\%,   2.50\%$ &     & $  2.32\%$ &$\ms$&  45s \\
 & $N_{2,7}$ & $  0.28\%,   5.56\%$ &     & $  0.84\%,   4.87\%$ &     & $  1.26\%,   4.32\%$ &     & $  4.04\%$ &$\ms$&  47s \\
 & $N_{2,8}$ & $  0.96\%,   3.75\%$ &     & $  1.23\%,   3.00\%$ &$\ms$& $  1.37\%,   2.79\%$ &     & $  3.28\%$ &    &  53s  \\
 & $N_{2,9}$ & $  0.01\%,   2.85\%$ &     & $  0.08\%,   1.65\%$ &     & $  0.10\%,   1.39\%$ &     & $  0.58\%$ &$\ms$&  11s \\
 & $N_{3,1}$ & $  0.23\%,   4.25\%$ &     & $  0.46\%,   3.41\%$ &     & $  0.88\%,   2.74\%$ &$\ms$& $  2.84\%$ &    &  40s  \\
 & $N_{3,2}$ & $  0.00\%,   3.57\%$ &     & $  0.00\%,   1.49\%$ &     & $  0.00\%,   1.15\%$ &     & $  0.00\%$ &$\ms$&   1s \\
 & $N_{3,3}$ & $  0.00\%,   2.94\%$ &     & $  0.00\%,   1.91\%$ &     & $  0.00\%,   0.87\%$ &     & $  0.00\%$ &$\ms$&   1s \\
 & $N_{3,4}$ & $  0.00\%,   3.27\%$ &     & $  0.09\%,   2.07\%$ &     & $  0.13\%,   1.53\%$ &     & $  1.36\%$ &$\ms$&  31s \\
 & $N_{3,5}$ & $  0.31\%,   2.90\%$ &     & $  0.53\%,   2.29\%$ &$\ms$& $  0.63\%,   2.02\%$ &     & $  2.67\%$ &    &  42s  \\
 & $N_{3,6}$ & $  0.00\%,   6.23\%$ &     & $  0.00\%,   5.42\%$ &     & $  0.01\%,   5.19\%$ &     & $  2.69\%$ &$\ms$&  32s \\
 & $N_{3,7}$ & $  0.00\%,   5.76\%$ &     & $  0.00\%,   4.59\%$ &     & $  0.00\%,   3.36\%$ &     & $  0.91\%$ &$\ms$&  30s \\
 & $N_{3,8}$ & $  0.15\%,   5.45\%$ &     & $  0.21\%,   4.43\%$ &     & $  0.32\%,   3.48\%$ &     & $  2.79\%$ &$\ms$&  61s \\
 & $N_{3,9}$ & $  0.28\%,   4.83\%$ &     & $  1.06\%,   4.00\%$ &     & $  1.39\%,   3.72\%$ &     & $  3.52\%$ &$\ms$&  32s \\
 & $N_{4,1}$ & $  0.00\%,   3.02\%$ &     & $  0.01\%,   1.96\%$ &     & $  0.04\%,   1.57\%$ &     & $  1.16\%$ &$\ms$&  26s \\
 & $N_{4,2}$ & $  0.00\%,   2.62\%$ &     & $  0.00\%,   1.69\%$ &     & $  0.00\%,   1.36\%$ &     & $  0.00\%$ &$\ms$&   1s \\
 & $N_{4,3}$ & $  0.17\%,   2.92\%$ &     & $  0.34\%,   2.65\%$ &     & $  0.61\%,   2.27\%$ &$\ms$& $  2.43\%$ &    &  41s  \\
 & $N_{4,4}$ & $  0.05\%,   2.47\%$ &     & $  0.17\%,   2.22\%$ &     & $  0.27\%,   1.98\%$ &     & $  1.94\%$ &$\ms$&  40s \\
 & $N_{4,5}$ & $  0.00\%,   4.52\%$ &     & $  0.14\%,   3.50\%$ &     & $  0.46\%,   3.04\%$ &     & $  2.85\%$ &$\ms$&  41s \\
 & $N_{4,6}$ & $  0.32\%,   4.54\%$ &     & $  0.89\%,   3.54\%$ &     & $  1.16\%,   3.27\%$ &     & $  3.22\%$ &$\ms$&  39s \\
 & $N_{4,7}$ & $  0.09\%,   4.17\%$ &     & $  0.31\%,   3.32\%$ &     & $  0.55\%,   2.92\%$ &     & $  2.58\%$ &$\ms$&  30s \\
 & $N_{4,8}$ & $  0.15\%,   3.91\%$ &     & $  0.56\%,   3.20\%$ &$\ms$& $  0.85\%,   2.84\%$ &     & $  3.51\%$ &    &  52s  \\
 & $N_{4,9}$ & $  0.00\%,   3.36\%$ &     & $  0.00\%,   1.96\%$ &     & $  0.00\%,   1.55\%$ &     & $  0.39\%$ &$\ms$& 11s  \\
 & $N_{5,1}$ & $  0.02\%,   2.60\%$ &     & $  0.30\%,   2.13\%$ &$\ms$& $  0.42\%,   1.99\%$ &     & $  2.14\%$ &    & 35s   \\
 & $N_{5,2}$ & $  0.17\%,   2.66\%$ &$\ms$& $  0.47\%,   2.03\%$ &     & $  0.52\%,   1.92\%$ &     & $  3.22\%$ &    & 67s   \\
 & $N_{5,3}$ & $  0.00\%,   1.72\%$ &     & $  0.00\%,   0.73\%$ &     & $  0.00\%,   0.56\%$ &     & $  0.00\%$ &$\ms$&  1s  \\
 & $N_{5,4}$ & $  0.01\%,   2.94\%$ &     & $  0.14\%,   2.34\%$ &     & $  0.22\%,   2.08\%$ &$\ms$& $  2.31\%$ &    & 54s   \\
 & $N_{5,5}$ & $  0.74\%,   3.29\%$ &     & $  1.32\%,   2.68\%$ &$\ms$& $  1.37\%,   2.62\%$ &     & $  2.77\%$ &    & 33s   \\
 & $N_{5,6}$ & $  0.18\%,   4.46\%$ &$\ms$& $  0.78\%,   3.19\%$ &     & $  0.98\%,   3.02\%$ &     & $  4.82\%$ &    & 74s   \\
 & $N_{5,7}$ & $  1.19\%,   4.63\%$ &     & $  1.74\%,   4.02\%$ &$\ms$& $  1.99\%,   3.75\%$ &     & $  4.27\%$ &    & 45s   \\
 & $N_{5,8}$ & $  0.89\%,   4.16\%$ &     & $  1.31\%,   3.39\%$ &$   $& $  1.55\%,   3.10\%$ &$\ms$& $  3.38\%$ &    & 42s   \\
 & $N_{5,9}$ & $  0.62\%,   3.75\%$ &     & $  1.24\%,   3.06\%$ &$\ms$& $  1.41\%,   2.89\%$ &     & $  3.06\%$ &    & 36s   \\
  \midrule
  \multicolumn{2}{r}{$\#\!\ms$} & \multicolumn{2}{r}{4} & \multicolumn{2}{r}{8} & \multicolumn{2}{r}{5} & \multicolumn{2}{r}{20} \\
  \bottomrule
  \end{tabular}
\end{table}

\begin{table}
  \centering
  \caption{%
    Extended \ProbBounds{} results for ACAS~Xu safety.
  }\label{tab:acasxu-safety-long}
  \vspace*{0.1in}
\begin{tabular}{llcrcrcr}
  \toprule
 & & \multicolumn{6}{c}{\bfseries Timeout} \\ \cmidrule(lr){3-8}
 & & \multicolumn{2}{c}{\bfseries 10s} & \multicolumn{2}{c}{\bfseries 30s} & \multicolumn{2}{c}{\bfseries 1m} \\ \cmidrule(lr){3-4} \cmidrule(lr){5-6} \cmidrule(lr){7-8}
 $\phi$ & $\NN$ & $\ell,u$ & $u-\ell$ & $\ell,u$ & $u-\ell$ & $\ell,u$ & $u-\ell$ \\
\midrule
\multirow[t]{3}{*}{$\phi_{2}$} 
 & $N_{4,3}$ & $  0.17\%,   2.92\%$ & $  2.74\%$ & $  0.34\%,   2.62\%$ & $  2.30\%$ & $  0.61\%,   2.27\%$ & $  1.66\%$  \\
 & $N_{4,9}$ & $  0.00\%,   3.36\%$ & $  3.36\%$ & $  0.00\%,   1.96\%$ & $  1.96\%$ & $  0.00\%,   1.55\%$ & $  1.55\%$  \\
 & $N_{5,8}$ & $  0.89\%,   4.16\%$ & $  3.26\%$ & $  1.31\%,   3.39\%$ & $  2.08\%$ & $  1.55\%,   3.10\%$ & $  1.55\%$  \\[.25em]
$\phi_{7}$ 
 & $N_{1,9}$ & $  0.00\%,  98.71\%$ & $ 98.71\%$ & $  0.00\%,  94.18\%$ & $ 94.18\%$ & $  0.00\%,  87.62\%$ & $ 87.62\%$  \\
$\phi_{8}$ 
 & $N_{2,9}$ & $  0.00\%,  76.39\%$ & $ 76.39\%$ & $  0.00\%,  65.83\%$ & $ 65.83\%$ & $  0.00\%,  58.11\%$ & $ 58.11\%$ \\[.75em]
 & &  \multicolumn{2}{c}{\bfseries 10m} & \multicolumn{2}{c}{\bfseries 1h} \\\cmidrule(lr){3-4}\cmidrule(lr){5-6}
 $\phi$ & $\NN$ & $\ell,u$ & $u-\ell$ & $\ell,u$ & $u-\ell$  \\
\midrule
\multirow[t]{3}{*}{$\phi_{2}$}  
 & $N_{4,3}$ & $  0.95\%,   1.93\%$ & $  0.98\%$ & $  1.12\%,   1.75\%$ & $  0.63\%$ \\
 & $N_{4,9}$ & $  0.03\%,   0.52\%$ & $  0.48\%$ & $  0.08\%,   0.29\%$ & $  0.21\%$ \\
 & $N_{5,8}$ & $  1.90\%,   2.66\%$ & $  0.76\%$ & $  1.97\%,   2.57\%$ & $  0.60\%$ \\[.25em]
$\phi_{7}$
 & $N_{1,9}$ & $  0.00\%,  51.93\%$ & $ 51.93\%$ & $  0.00\%,  30.71\%$ & $ 30.71\%$ \\
$\phi_{8}$ 
 & $N_{2,9}$ & $  0.00\%,  34.60\%$ & $ 34.60\%$ & $  0.00\%,  15.33\%$ & $ 15.33\%$ \\
\bottomrule
\end{tabular}  
\end{table}

\subsection{ACAS Xu Robustness}\label{sec:acasxu-robustness-additional}
We first provide a more detailed description of the ACAS~Xu robustness benchmark.
As \textcite{ConverseFilieriGopinathEtAl2020}, we consider 25 reference inputs~---~five for each class~---~and allow these reference inputs to be perturbed in the first two dimensions by at most~\(5\%\) of the diameter of the input space in the respective dimension.
To compute bounds on the output distribution, we bound the probability of each of the five ACAS~Xu classes for each of the 25 inputs.
Since the ACAS~Xu training data is not publicly available, we sample the reference inputs randomly.

\Cref{tab:acasxu-robust-full} contains the bounds computed by \ProbBounds{} for each reference input and each output class: COC (Clear-of-Conflict), WL (steer Weak Left), WR (steer Weak Right), SL (steer Strong Left), and SR (steer Strong Right).
The table reveals that the ACAS~Xu network~\(N_{1,1}\) could tend to classify inputs as Clear-of-Conflict (COC), regardless of the class assigned to the reference input.
This insight does not agree with the insight drawn by~\textcite{ConverseFilieriGopinathEtAl2020}. 
However, both results are based on a tiny sample of the input space of only 25 points. 
Obtaining valid results requires considering a significantly larger sample of the input space or quantifying global robustness~\cite{KatzBarrettDillEtAl2017a}.

\begin{sidewaystable}
  \centering
  \footnotesize
  \caption{%
    ACAS~Xu robustness results.
    We report the lower and upper bounds (\(\ell, u\)) \ProbBounds{} computes for the probability that a perturbed input is classified as the target label and the runtime in seconds (Rt) of \ProbBounds{} for computing these bounds.
    For each combination of unperturbed label, input, and target label, we run \ProbBounds{} until the difference between the lower and the upper bound is at most~\(0.1\%\).
   }\label{tab:acasxu-robust-full}
  \vspace*{0.1in}
  \begin{tabular}{cc@{\hspace{2em}}crcrcrcrcr}
   \toprule
    & & \multicolumn{10}{c}{\textbf{Target Label}} \\ \cmidrule(lr){3-12}
    \multicolumn{2}{c}{\textbf{Reference Input}} & \multicolumn{2}{c}{\textbf{COC}} & \multicolumn{2}{c}{\textbf{WL}} & \multicolumn{2}{c}{\textbf{WR}} & \multicolumn{2}{c}{\textbf{SL}} & \multicolumn{2}{c}{\textbf{SR}} \\\cmidrule{1-2}\cmidrule(lr){3-4}\cmidrule(lr){5-6}\cmidrule(lr){7-8}\cmidrule(lr){9-10}\cmidrule(lr){11-12}
    \textbf{Label} & \textbf{Input \#} 
    & $\ell, u$ & \multicolumn{1}{c}{Rt} & $\ell, u$ & \multicolumn{1}{c}{Rt} & $\ell, u$ & \multicolumn{1}{c}{Rt} & $\ell, u$ & \multicolumn{1}{c}{Rt} & $\ell, u$ & \multicolumn{1}{c}{Rt} \\\midrule
COC & 1 & $ 100\%,  100\%$ &    0s & $0.0\%, 0.0\%$ &    0s & $0.0\%, 0.0\%$ &    0s & $0.0\%, 0.0\%$ &    0s & $0.0\%, 0.0\%$ &    0s \\
COC & 2 & $99.9\%,  100\%$ &    1s & $0.0\%, 0.0\%$ &    1s & $0.0\%, 0.1\%$ &    1s & $0.0\%, 0.0\%$ &    1s & $0.0\%, 0.0\%$ &    1s \\
COC & 3 & $91.7\%, 91.8\%$ &    3s & $1.3\%, 1.4\%$ &    6s & $1.9\%, 2.0\%$ &    4s & $2.1\%, 2.2\%$ &   12s & $2.8\%, 2.8\%$ &    4s \\
COC & 4 & $ 100\%,  100\%$ &    0s & $0.0\%, 0.0\%$ &    0s & $0.0\%, 0.0\%$ &    0s & $0.0\%, 0.0\%$ &    0s & $0.0\%, 0.0\%$ &    0s \\
COC & 5 & $ 100\%,  100\%$ &    0s & $0.0\%, 0.0\%$ &    0s & $0.0\%, 0.0\%$ &    0s & $0.0\%, 0.0\%$ &    0s & $0.0\%, 0.0\%$ &    0s \\
WL  & 1 & $89.9\%, 90.0\%$ &    2s & $1.6\%, 1.7\%$ &   38s & $4.7\%, 4.8\%$ &    2s & $3.6\%, 3.7\%$ &   39s & $0.0\%, 0.0\%$ &    1s \\
WL  & 2 & $92.9\%, 93.0\%$ &    3s & $2.3\%, 2.4\%$ &    2s & $3.0\%, 3.1\%$ &    3s & $1.4\%, 1.5\%$ &    3s & $0.2\%, 0.3\%$ &    1s \\
WL  & 3 & $90.4\%, 90.5\%$ &    3s & $1.1\%, 1.2\%$ &    8s & $3.6\%, 3.7\%$ &    4s & $3.8\%, 3.9\%$ &   11s & $0.8\%, 0.9\%$ &    2s \\
WL  & 4 & $83.6\%, 83.7\%$ &   18s & $5.4\%, 5.5\%$ &   52s & $2.6\%, 2.7\%$ &   52s & $4.6\%, 4.7\%$ &   60s & $3.5\%, 3.6\%$ &   65s \\
WL  & 5 & $96.8\%, 96.9\%$ &    2s & $2.9\%, 3.0\%$ &    1s & $0.0\%, 0.1\%$ &    1s & $0.0\%, 0.0\%$ &    1s & $0.1\%, 0.2\%$ &    1s \\
WR  & 1 & $62.2\%, 62.3\%$ &   18s & $0.0\%, 0.0\%$ &    1s & $3.4\%, 3.5\%$ &   19s & $15.8\%, 15.9\%$ &    9s & $18.5\%, 18.6\%$ &   12s \\
WR  & 2 & $96.6\%, 96.7\%$ &    1s & $1.5\%, 1.6\%$ &    1s & $1.7\%, 1.8\%$ &    2s & $0.0\%, 0.1\%$ &    1s & $0.0\%, 0.1\%$ &    1s \\
WR  & 3 & $94.8\%, 94.9\%$ &    3s & $0.0\%, 0.1\%$ &   28s & $1.9\%, 2.0\%$ &  213s & $0.0\%, 0.1\%$ &    5s & $3.2\%, 3.3\%$ &  123s \\
WR  & 4 & $94.3\%, 94.4\%$ &    2s & $1.8\%, 1.9\%$ &    2s & $3.4\%, 3.5\%$ &    2s & $0.3\%, 0.4\%$ &    2s & $0.0\%, 0.1\%$ &    1s \\
WR  & 5 & $91.1\%, 91.1\%$ &    4s & $0.8\%, 0.9\%$ &    7s & $4.2\%, 4.3\%$ &    9s & $3.0\%, 3.1\%$ &   12s & $0.7\%, 0.8\%$ &    4s \\
SL  & 1 & $81.1\%, 81.2\%$ &   22s & $2.3\%, 2.4\%$ &   77s & $4.8\%, 4.9\%$ &  155s & $4.8\%, 4.9\%$ &   64s & $6.7\%, 6.8\%$ &  138s \\
SL  & 2 & $93.8\%, 93.9\%$ &    3s & $1.1\%, 1.2\%$ &   26s & $3.7\%, 3.8\%$ &   44s & $0.7\%, 0.8\%$ &   11s & $0.5\%, 0.6\%$ &   16s \\
SL  & 3 & $83.3\%, 83.4\%$ &   10s & $2.8\%, 2.9\%$ &   15s & $6.5\%, 6.6\%$ &   41s & $2.1\%, 2.2\%$ &   12s & $5.0\%, 5.1\%$ &   28s \\
SL  & 4 & $81.8\%, 81.9\%$ &   18s & $3.3\%, 3.4\%$ &   94s & $5.2\%, 5.3\%$ &  135s & $5.6\%, 5.7\%$ &   85s & $4.0\%, 4.1\%$ &   58s \\
SL  & 5 & $84.9\%, 85.0\%$ &   13s & $1.7\%, 1.8\%$ &   25s & $5.4\%, 5.5\%$ &   40s & $3.0\%, 3.1\%$ &   26s & $4.8\%, 4.9\%$ &   39s \\
SR  & 1 & $91.0\%, 91.0\%$ &    5s & $0.0\%, 0.1\%$ &    3s & $5.2\%, 5.3\%$ &   65s & $0.0\%, 0.1\%$ &    2s & $3.6\%, 3.7\%$ &   34s \\
SR  & 2 & $87.0\%, 87.1\%$ &    7s & $0.9\%, 1.0\%$ &    5s & $5.7\%, 5.8\%$ &   71s & $1.2\%, 1.3\%$ &    6s & $5.0\%, 5.1\%$ &   88s \\
SR  & 3 & $88.4\%, 88.5\%$ &    3s & $2.5\%, 2.6\%$ &    9s & $4.7\%, 4.8\%$ &    9s & $2.6\%, 2.7\%$ &   15s & $1.5\%, 1.6\%$ &    9s \\
SR  & 4 & $93.0\%, 93.1\%$ &    6s & $0.7\%, 0.8\%$ &   62s & $3.7\%, 3.8\%$ &  119s & $1.0\%, 1.1\%$ &   30s & $1.4\%, 1.5\%$ &   41s \\
SR  & 5 & $79.3\%, 79.4\%$ &    6s & $0.2\%, 0.3\%$ &    2s & $9.1\%, 9.2\%$ &   10s & $3.7\%, 3.8\%$ &    5s & $7.4\%, 7.5\%$ &    6s \\
\bottomrule
  \end{tabular}
\end{sidewaystable}

\subsection{MiniACSIncome}\label{sec:miniacsincome-extra}
We provide additional details on the MiniACSIncome benchmark, including how we construct the dataset and train the networks.

\subsubsection{Benchmark}
To create probabilistic verification problems of increasing difficulty, we consider an increasing number of input variables from ACSIncome.
The smallest instance, MiniACSIncome-1, only contains the binary \enquote{SEX} variable. 
In contrast, the largest instance, MiniACSIncome-8, contains \enquote{SEX} and seven more variables from ACSIncome, including age, education, and working hours per week.
We train a neural network with a single layer of ten neurons for each MiniACSIncome-$i$,~\(i \in [8]\). 
For MiniACSIncome-4, we additionally train deeper and wider networks to investigate the scalability of \ToolName{} with respect to the network size.
We fit a Bayesian Network to the MiniACSIncome-8 dataset to obtain an input distribution.
The process is described in detail in \cref{sec:miniacsincome-bayes-net}.
We use this distribution for all MiniACSIncome-$i$ instances by taking the variables not contained in MiniACSIncome-$i$ as latent variables.
The verification problem is then to verify the demographic parity of a neural network with respect to \enquote{SEX} under this input distribution.

\subsubsection{Dataset}
The MiniACSIncome benchmarks are built by sampling about 100\,000 entries from the ACS PUMPS 1-Year horizon data for all states of the USA for the year 2018 using the \texttt{folktables} Python package~\cite{DingHardtMillerEtAl2021}.
In line with ACSIncome~\cite{DingHardtMillerEtAl2021}, we only sample individuals older than 16 years, with a yearly income of at least \$100, reported working hours per week of at least 1, and a \enquote{PWGTP} (more details in~\textcite{DingHardtMillerEtAl2021}) of at least 1.
In total, our dataset contains 102\,621 samples.

\subparagraph{Variable Order.}
For obtaining the benchmark MiniACSIncome-$i$,~\(i \in [8]\), we select~\(i\) input variables in the following order: \enquote{SEX}, \enquote{COW}, \enquote{SCHL}, \enquote{WKHP}, \enquote{MAR}, \enquote{RAC1P}, \enquote{RELP}, \enquote{AGEP}.
We choose \enquote{SEX} as the first variable so that we can verify the fairness with respect to \enquote{SEX} on every benchmark instance.
The order of the remaining variables is chosen based on each variable's expected predictive value and the number of discrete values.
In particular, we select \enquote{COW} (class of work), \enquote{SCHL} (level of education), and \enquote{WKHP} (work hours per week) first, as we consider these variables to be more predictive than \enquote{MAR} (marital status), \enquote{RAC1P} (races of a person), \enquote{RELP} (relationship), and \enquote{AGEP} (age of a person).
The variables are ordered by their number of discrete values within these groups of expected predictive value.
For example, \enquote{COW} has nine categories, while \enquote{WKHP} has 99 possible integer values.

\Cref{tab:MiniACSIncome-results} contains the number of input dimensions and the total number of discrete values in each MiniACSIncome-$i$ input space.
The input space contains more dimensions than input variables due to the one-hot encoding of all categorical variables.

\begin{table}
  \centering
  \caption{MiniACSIncome details and \ToolName{} verification results.}\label{tab:MiniACSIncome-results}
  \vspace*{0.1in}
  \begin{tabular}{rcccccccc}
     \toprule
     & \multicolumn{8}{c}{\textbf{\#Input Variables}} \\
     & 1 & 2 & 3 & 4 & 5 & 6 & 7 & 8 \\ \midrule
     \textbf{\#Input Dimensions} & 2 & 10 & 34 & 35 & 40 & 49 & 67 & 68 \\
     \textbf{\#Discrete Values} &
     2 & 16 & 382 & 38K & 190K & 1.7M & 31M & 2B \\
     \textbf{\ToolName{} Runtime} & 
     16s & 44s & 127s & 231s & 374s & 699s & 1383s & TO
     \\
     \textbf{10-Neuron Network Fair?} &
     \textcolor{failure-color}{\exfailure} & \textcolor{failure-color}{\exfailure} & \textcolor{failure-color}{\exfailure} & \textcolor{failure-color}{\exfailure} & \textcolor{failure-color}{\exfailure} & \textcolor{failure-color}{\exfailure} & \textcolor{failure-color}{\exfailure} & \exunknown{} \\
     \bottomrule
  \end{tabular}
\end{table}

\subsubsection{Input Distribution}\label{sec:miniacsincome-bayes-net}
The Bayesian Network input distribution of MiniACSIncome has the network structure depicted in \cref{fig:bayes-net-miniacsincome}.
For using \enquote{AGEP} as a parent node, we summarised the age groups 17--34, 35--59, and 60--95. 
This means that the conditional probability table of \enquote{MAR} does not have 78 entries for \enquote{AGEP}, but three, corresponding to the ranges 17--34, 35--59, and 60--95. 

\begin{figure}
  \centering
  \begin{tikzpicture}[minimum size=1.5cm, node distance=3cm,on grid]
    \node[draw=black, circle] (RAC) at (0,0) {RAC1P};
    \node[draw=black, circle] (SEX) at (-3, 0) {SEX};
    \node[draw=black, circle] (AGEP) at (1.5, -2) {AGEP};

    \node[draw=black, circle] (SCHL) at (-1.5,-2)  {SCHL};
    \node[draw=black, circle] (MAR) at (0,-4) {MAR};
    \node[draw=black, circle] (RELP) at (1.5,-6) {RELP};
    \node[draw=black, circle] (COW) at (-1,-6) {COW};
    \node[draw=black, circle] (WKHP) at (-3.5,-6) {WKHP};

    \draw[black,-Stealth] (SEX) -- (SCHL);
    \draw[black,-Stealth] (RAC) -- (SCHL);
    \draw[black,-Stealth] (SCHL) -- (MAR);
    \draw[black,-Stealth] (AGEP) -- (MAR);
    \draw[black,-Stealth] (MAR) -- (RELP);
    \draw[black,-Stealth] (SCHL) -- (COW);
    \draw[black,-Stealth] (MAR) -- (COW);
    \draw[black,-Stealth] (SCHL) -- (WKHP);
    \draw[black,-Stealth] (MAR) -- (WKHP);
  \end{tikzpicture}
  \caption{%
    MiniACSIncome Bayesian network structure.
  }\label{fig:bayes-net-miniacsincome}
\end{figure}

To fit the Bayesian Network, we walk the network from sources to sinks and fit each conditional distribution to match the empirical distribution of the data subset that matches the current condition. 
For example, for fitting the conditional distribution of \enquote{SCHL} given \enquote{SEX=1}, \enquote{RAC1P=1}, we select the samples in the dataset having \enquote{SEX=1} and \enquote{RAC1P=1} and fit the conditional distribution of \enquote{SCHL} to match the empirical distribution of these samples.
We use categorical distributions for all categorical variables and \enquote{WKHP}.
For \enquote{AGEP}, we fit a mixture model of four truncated normal distributions that we discretise to integer values.

\Cref{fig:miniacsincome-pop-model-marginal} depict the marginal distributions of 10\,000 samples from the fitted Bayesian Network and the empirical marginal distributions of the MiniACSIncome-8 dataset.
\Cref{fig:miniacsincome-pop-model-corr} contains the correlation matrix of the same sample compared to the correlation matrix of MiniACSIncome-8.
The Bayesian Network approximates the empirical marginal distribution and the correlation structure of MiniACSIncome-8 reasonably well.

Note that for real-world fairness verification, the input distribution should not be fitted to the same dataset on which the neural network to verify is trained on.
Instead, the input distribution needs to be carefully constructed by domain experts.
For fairness audits, the input distribution could also be designed adversarially by a fairness auditing entity.

\begin{figure}
  \centering
  \includegraphics[width=.9\textwidth]{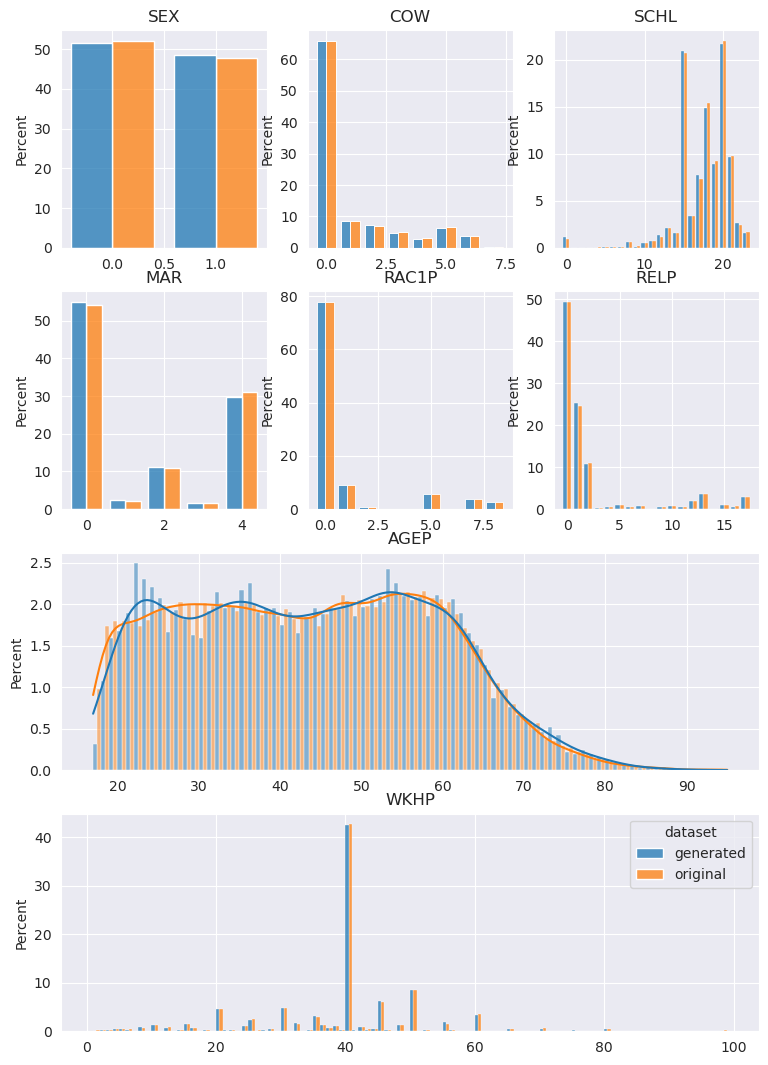}
  \caption{%
    MiniACSIncome Bayesian network~---~marginal distributions.
    The \enquote{generated} data is sampled from the Bayesian Network, while the \enquote{original} data is the MiniACSIncome-8 dataset.
  }\label{fig:miniacsincome-pop-model-marginal}
\end{figure}

\begin{figure}
  \centering
  \includegraphics[width=\textwidth]{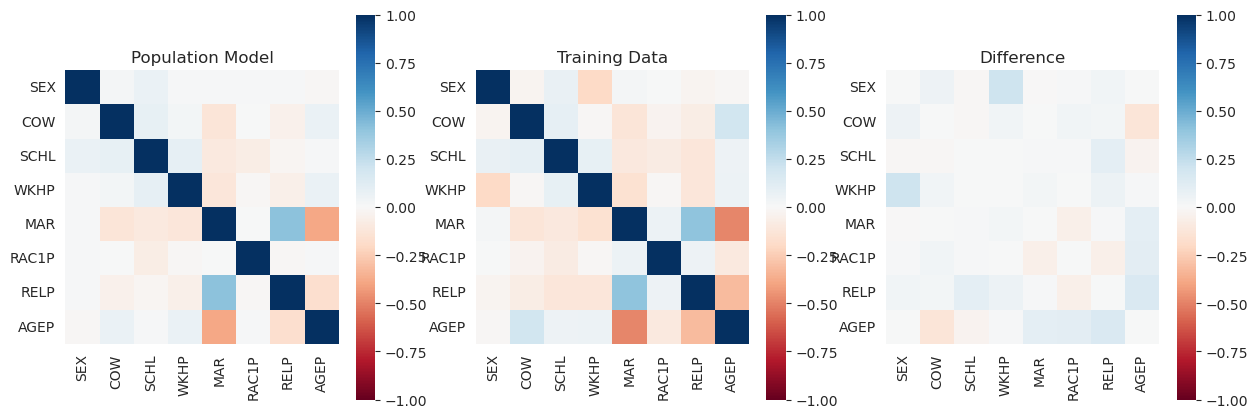}
  \caption{%
    MiniACSIncome Bayesian network~---~correlation matrix.
    \enquote{Population Model} denotes the fitted Bayesian Network, while \enquote{Training Data} stands for the full MiniACSIncome-8 dataset.
  }\label{fig:miniacsincome-pop-model-corr}
\end{figure}

\subsubsection{Training}
All MiniACSIncome neural networks are trained on a 56\%/14\%/30\% split of the MiniACSIncome-$i$ dataset into training, validation, and testing data.
This split is identical for all MiniACSIncome-$i$ datasets.
All networks are trained using: the Adam optimiser~\cite{KingmaBa2015}, cross entropy as loss function, no~\(L_2\) regularisation (weight decay),~\(\beta_1 = 0.9\),~\(\beta_2 = 0.999\),~\(\varepsilon = {10}^{-8}\) as suggested by~\textcite{KingmaBa2015}, and a learning rate decay by $0.1$ after 2000 and 4000 iterations.
The learning rate and the number of training epochs are contained in \cref{tab:miniacsincome-networks}.
We perform five random restarts for each network and select the network with the lowest cross-entropy on the validation data.

\Cref{tab:miniacsincome-perf} contains the accuracy, precision, and recall for the overall test set, the persons with female sex in the test set, and the persons with male sex in the test set.
Additionally, the table contains whether a network satisfies the demographic parity fairness notion according to \ToolName{}.

We used \AlgorithmName{Optuna}~\cite{AkibaSanoYanaseEtAl2019} for an initial exploration of the hyperparameter space but did not apply automatic hyperparameter optimisation to obtain the final training hyperparameters.

\begin{table}
  \centering
  \caption{%
    MiniACSIncome training hyperparameters.
  }\label{tab:miniacsincome-networks}
  \vspace*{0.1in}
  \begin{tabular}{lrrr}
    \toprule
    \textbf{Dataset} & \textbf{Architecture} & \textbf{Learning Rate} & \textbf{\# Epochs} \\ \midrule
    MiniACSIncome-1 & 1$\times$10 & 0.0001 & 1       \\ 
    MiniACSIncome-2 & 1$\times$10 & 0.001  & 2      \\ 
    MiniACSIncome-3 & 1$\times$10 & 0.001  & 3      \\ 
    MiniACSIncome-4 & 1$\times$10 & 0.001  & 4      \\ 
    MiniACSIncome-5 & 1$\times$10 & 0.001  & 5      \\ 
    MiniACSIncome-6 & 1$\times$10 & 0.001  & 3      \\ 
    MiniACSIncome-7 & 1$\times$10 & 0.001  & 3      \\ 
    MiniACSIncome-8 & 1$\times$10 & 0.001  & 3      \\[.25em]
    MiniACSIncome-4 & 1$\times$1000 & 0.001  & 2    \\ 
    MiniACSIncome-4 & 1$\times$2000 & 0.001  & 2    \\ 
    MiniACSIncome-4 & 1$\times$3000 & 0.001  & 2    \\ 
    MiniACSIncome-4 & 1$\times$4000 & 0.001  & 2    \\ 
    MiniACSIncome-4 & 1$\times$5000 & 0.001  & 2    \\ 
    MiniACSIncome-4 & 1$\times$6000 & 0.001  & 2    \\ 
    MiniACSIncome-4 & 1$\times$7000 & 0.001  & 2    \\ 
    MiniACSIncome-4 & 1$\times$8000 & 0.001  & 2    \\ 
    MiniACSIncome-4 & 1$\times$9000 & 0.001  & 2    \\ 
    MiniACSIncome-4 & 1$\times$10\,000 & 0.001  & 2 \\[.25em]
    MiniACSIncome-4 & 2$\times$10 & 0.001  & 2      \\
    MiniACSIncome-4 & 3$\times$10 & 0.001  & 2      \\
    MiniACSIncome-4 & 4$\times$10 & 0.001  & 2      \\
    MiniACSIncome-4 & 5$\times$10 & 0.001  & 2      \\
    MiniACSIncome-4 & 6$\times$10 & 0.001  & 2      \\
    MiniACSIncome-4 & 7$\times$10 & 0.001  & 2      \\
    MiniACSIncome-4 & 8$\times$10 & 0.001  & 2      \\
    MiniACSIncome-4 & 9$\times$10 & 0.001  & 2      \\
    MiniACSIncome-4 & 10$\times$10 & 0.001  & 2     \\
    \bottomrule
  \end{tabular}
\end{table}

\begin{table}
  \centering
  \caption{%
    MiniACSIncome neural networks. 
    The abbreviation \enquote{mACSI-$i$} stands for MiniACSIncome-$i$.
    We report the accuracy (A), precision (P), and recall (R) of each trained network (Net) for the whole test dataset (Overall), the persons with \enquote{SEX=2} in the test set (Female), and the persons with \enquote{SEX=1} in the test set (Male).
    Additionally, we report whether the network satisfies the demographic parity fairness notion according to \ToolName{} (Fair?).
  }\label{tab:miniacsincome-perf}
  \vspace*{0.1in}
  \begin{tabular}{llrrrrrrrrrc}
    \toprule
    & & \multicolumn{3}{c}{\bfseries Overall} & \multicolumn{3}{c}{\bfseries Female} & \multicolumn{3}{c}{\bfseries Male} \\ \cmidrule(lr){3-5}\cmidrule(lr){6-8}\cmidrule(lr){9-11}
    \textbf{Dataset} & \textbf{Net} & \textbf{A} & \textbf{P} & \textbf{R} & \textbf{A} & \textbf{P} & \textbf{R} & \textbf{A} & \textbf{P} & \textbf{R} & \textbf{Fair?} \\ \midrule
    mACSI-1 & 1$\times$10       & $ 57\% $ & $ 44\% $ & $ 63\% $ & $ 72\% $ &     --   & $  0\% $ & $ 44\% $ & $ 44\% $ & $100\% $ & \(\exfailure\) \\ 
    mACSI-2 & 1$\times$10       & $ 65\% $ & $ 55\% $ & $ 14\% $ & $ 71\% $ &     --   & $  0\% $ & $ 58\% $ & $ 55\% $ & $ 23\% $ & \(\exfailure\) \\ 
    mACSI-3 & 1$\times$10       & $ 73\% $ & $ 67\% $ & $ 48\% $ & $ 76\% $ & $ 65\% $ & $ 35\% $ & $ 69\% $ & $ 68\% $ & $ 55\% $ & \(\exfailure\) \\ 
    mACSI-4 & 1$\times$10       & $ 75\% $ & $ 68\% $ & $ 60\% $ & $ 78\% $ & $ 66\% $ & $ 48\% $ & $ 72\% $ & $ 69\% $ & $ 66\% $ & \(\exfailure\) \\ 
    mACSI-5 & 1$\times$10       & $ 76\% $ & $ 69\% $ & $ 63\% $ & $ 79\% $ & $ 65\% $ & $ 53\% $ & $ 74\% $ & $ 71\% $ & $ 69\% $ & \(\exfailure\) \\ 
    mACSI-6 & 1$\times$10       & $ 77\% $ & $ 69\% $ & $ 63\% $ & $ 79\% $ & $ 65\% $ & $ 55\% $ & $ 74\% $ & $ 72\% $ & $ 68\% $ & \(\exfailure\) \\ 
    mACSI-7 & 1$\times$10       & $ 77\% $ & $ 71\% $ & $ 64\% $ & $ 79\% $ & $ 67\% $ & $ 50\% $ & $ 76\% $ & $ 72\% $ & $ 72\% $ & \(\exfailure\) \\ 
    mACSI-8 & 1$\times$10       & $ 78\% $ & $ 70\% $ & $ 68\% $ & $ 80\% $ & $ 67\% $ & $ 56\% $ & $ 76\% $ & $ 71\% $ & $ 75\% $ & \(\exunknown\) \\[.25em] 
    mACSI-4 & 1$\times$1000     & $ 75\% $ & $ 71\% $ & $ 54\% $ & $ 79\% $ & $ 70\% $ & $ 44\% $ & $ 72\% $ & $ 71\% $ & $ 61\% $ & \(\exfailure\) \\ 
    mACSI-4 & 1$\times$2000     & $ 75\% $ & $ 65\% $ & $ 67\% $ & $ 78\% $ & $ 61\% $ & $ 60\% $ & $ 72\% $ & $ 67\% $ & $ 71\% $ & \(\exsuccess\) \\ 
    mACSI-4 & 1$\times$3000     & $ 74\% $ & $ 63\% $ & $ 71\% $ & $ 77\% $ & $ 58\% $ & $ 67\% $ & $ 72\% $ & $ 66\% $ & $ 73\% $ & \(\exsuccess\) \\ 
    mACSI-4 & 1$\times$4000     & $ 75\% $ & $ 71\% $ & $ 55\% $ & $ 79\% $ & $ 69\% $ & $ 44\% $ & $ 72\% $ & $ 71\% $ & $ 61\% $ & \(\exfailure\) \\ 
    mACSI-4 & 1$\times$5000     & $ 75\% $ & $ 69\% $ & $ 58\% $ & $ 78\% $ & $ 71\% $ & $ 39\% $ & $ 73\% $ & $ 68\% $ & $ 69\% $ & \(\exfailure\) \\ 
    mACSI-4 & 1$\times$6000     & $ 75\% $ & $ 69\% $ & $ 59\% $ & $ 79\% $ & $ 69\% $ & $ 44\% $ & $ 72\% $ & $ 68\% $ & $ 67\% $ & \(\exfailure\) \\ 
    mACSI-4 & 1$\times$7000     & $ 75\% $ & $ 66\% $ & $ 64\% $ & $ 78\% $ & $ 62\% $ & $ 55\% $ & $ 72\% $ & $ 68\% $ & $ 68\% $ & \(\exfailure\) \\ 
    mACSI-4 & 1$\times$8000     & $ 75\% $ & $ 66\% $ & $ 64\% $ & $ 78\% $ & $ 63\% $ & $ 55\% $ & $ 72\% $ & $ 68\% $ & $ 69\% $ & \(\exfailure\) \\ 
    mACSI-4 & 1$\times$9000     & $ 75\% $ & $ 70\% $ & $ 53\% $ & $ 78\% $ & $ 66\% $ & $ 47\% $ & $ 71\% $ & $ 72\% $ & $ 56\% $ & \(\exfailure\) \\ 
    mACSI-4 & 1$\times$10\,000  & $ 75\% $ & $ 67\% $ & $ 63\% $ & $ 79\% $ & $ 69\% $ & $ 45\% $ & $ 72\% $ & $ 66\% $ & $ 73\% $ & \(\exfailure\) \\[.25em] 
    mACSI-4 & 2$\times$10       & $ 75\% $ & $ 67\% $ & $ 59\% $ & $ 78\% $ & $ 65\% $ & $ 48\% $ & $ 72\% $ & $ 69\% $ & $ 65\% $ & \(\exfailure\) \\
    mACSI-4 & 3$\times$10       & $ 75\% $ & $ 68\% $ & $ 58\% $ & $ 79\% $ & $ 67\% $ & $ 47\% $ & $ 72\% $ & $ 69\% $ & $ 65\% $ & \(\exfailure\) \\
    mACSI-4 & 4$\times$10       & $ 75\% $ & $ 68\% $ & $ 58\% $ & $ 78\% $ & $ 66\% $ & $ 46\% $ & $ 72\% $ & $ 69\% $ & $ 66\% $ & \(\exfailure\) \\
    mACSI-4 & 5$\times$10       & $ 75\% $ & $ 68\% $ & $ 59\% $ & $ 78\% $ & $ 67\% $ & $ 47\% $ & $ 72\% $ & $ 69\% $ & $ 66\% $ & \(\exfailure\) \\
    mACSI-4 & 6$\times$10       & $ 75\% $ & $ 67\% $ & $ 61\% $ & $ 78\% $ & $ 67\% $ & $ 46\% $ & $ 72\% $ & $ 67\% $ & $ 70\% $ & \(\exfailure\) \\
    mACSI-4 & 7$\times$10       & $ 75\% $ & $ 65\% $ & $ 66\% $ & $ 78\% $ & $ 63\% $ & $ 52\% $ & $ 72\% $ & $ 65\% $ & $ 74\% $ & \(\exfailure\) \\
    mACSI-4 & 7$\times$10       & $ 75\% $ & $ 67\% $ & $ 59\% $ & $ 78\% $ & $ 65\% $ & $ 46\% $ & $ 72\% $ & $ 68\% $ & $ 67\% $ & \(\exfailure\) \\
    mACSI-4 & 9$\times$10       & $ 75\% $ & $ 67\% $ & $ 59\% $ & $ 78\% $ & $ 65\% $ & $ 47\% $ & $ 72\% $ & $ 68\% $ & $ 67\% $ & \(\exfailure\) \\
    mACSI-4 & 19$\times$10      & $ 75\% $ & $ 67\% $ & $ 61\% $ & $ 78\% $ & $ 64\% $ & $ 49\% $ & $ 72\% $ & $ 68\% $ & $ 68\% $ & \(\exfailure\)  \\
    \bottomrule
  \end{tabular}
\end{table}


\subsubsection{Extended Results}\label{sec:miniacsincome-netsize}
We provide concrete results on the effect of the network size on the runtime of \ToolName{} on the MiniACSIncome benchmark.
\Cref{fig:MiniACSIncome-net-size} displays the runtime of \ToolName{} for MiniACSIncome-4 networks of various sizes.
The studied networks include wide single-layer networks of up to~10\,000 neurons and deep networks of up to~10 layers of~10 neurons.
As the figure shows, \ToolName{} is largely unaffected by the size of the MiniACSIncome-4 networks.
This is unexpected since the network size indirectly determines the performance of \ToolName{} through the complexity of the decision boundary.
However, larger networks need not necessarily have a more complex decision boundary, and large networks do not provide a performance benefit for MiniACSIncome-4, as apparent from \cref{tab:miniacsincome-perf}.
Thoroughly exploring the impacts of network size requires more intricate datasets for which larger networks actually provide a benefit.

\begin{figure}
  \centering
  \tikzexternalenable%
  \tikzsetnextfilename{mini-acs-netsize-figure}
  \begin{tikzpicture}
    \begin{semilogxaxis}[
      width=.6\textwidth,
      height=.29\textwidth,
      ymin=0, ymax=600,
      scaled ticks=false,
      ytick={60,300,600},
      yticklabels={1,5,10},
      xlabel={\# Parameters}, 
      ylabel={Runtime (min)},
      ylabel shift=-5pt,
      legend pos=north west,
      legend cell align=left,
    ]
      \addplot [Colors-A,very thick,mark=x] table[x=NumParameters,y=Runtime,col sep=comma] {data/MiniACSIncomeNumParameters.csv};
      \addlegendentry{\ToolName{} (Ours)}
    \end{semilogxaxis}
  \end{tikzpicture}
  \tikzexternaldisable%
  \caption{%
    MiniACSIncome network size results.
    The plot depicts the runtime of \ToolName{} for MiniACSIncome-4 networks of varying sizes.
   }\label{fig:MiniACSIncome-net-size}
\end{figure}

\section{Heuristics for \ProbBounds{}}\label{sec:heuristics-additional}
In this section, we experimentally compare the \LongestEdge{}, \BaBSB{}, and \BaBSBLongestEdge{k} heuristics, as well as \IntervalArithmetic{} and \CROWN{} to justify our decision to use \BaBSB{} and \CROWN{} in \cref{sec:experiments}.

\subsection{Experiments}\label{sec:heuristics-additional-experiments}
We experimentally compare the \LongestEdge{}, \BaBSB{} and \BaBSBLongestEdge{k} heuristics to justify our selection in \cref{sec:experiments}.
Concretely, we study \BaBSBLongestEdge{10}.
Additionally, we compare \IntervalArithmetic{} to \CROWN{} when used as \ComputeBounds{} procedure of \ProbBounds{}.
To perform the comparison, we run the different variants of \ToolName{} on the FairSquare benchmark, as described in \cref{sec:fairsquare-experiment}.

\Cref{fig:split-compare} compares the \LongestEdge{}, \BaBSB{} and \BaBSBLongestEdge{10} heuristics.
It contains the runtime of \ToolName{} when using \Prob{}, \CROWN{} and either \LongestEdge{}, \BaBSB{}, or \BaBSBLongestEdge{10} as \Split{} heuristic.
While using \LongestEdge{} is faster for four easy-to-solve benchmark instances, using it only allows solving eight instances from the FairSquare benchmark, while using \BaBSB{} allows solving all 18 instances.
Using \BaBSBLongestEdge{10} also allows us to solve all benchmark instances while requiring slightly more time per benchmark instance than using \BaBSB{}.

\Cref{fig:compute-bounds-compare} contains the runtime of \ToolName{} when using \Prob{}, \BaBSB{} and either \IntervalArithmetic{} or \CROWN{} as \ComputeBounds{} procedure in \ProbBounds{}.
As the figure reveals, using \CROWN{} allows \ToolName{} to terminate faster on all benchmark instances from the FairSquare benchmark.
Furthermore, using \IntervalArithmetic{} only allows \ToolName{} to solve~14 benchmark instances, while \CROWN{} enables \ToolName{} to solve all~18 benchmark instances.

\begin{figure}
  \centering
  \tikzexternalenable%
  \tikzsetnextfilename{ablation-figure-split}
  \begin{tikzpicture}
    \begin{semilogyaxis}[
      width=.75\textwidth,
      height=.25\textwidth,
      scale only axis,
      ymax=900,
      xmin=1, xmax=18,
      xlabel={\# Solved Instances}, 
      ylabel={Runtime (s)},
      ytick={1,10,100,900},
      yticklabels={$10^{0}$,$10^{1}$,$10^{2}$,Timeout},
      ylabel shift=-22pt,
      legend pos=north west,
      legend cell align=left,
    ]


      \addplot [Colors-C,very thick,mark=x] table[x=Nr,y=Runtime,col sep=comma] {data/FairSquareRuntimeProbLongestEdgeCROWN.csv};
      \addlegendentry{\LongestEdge{}}

      \addplot [Colors-D,very thick,mark=x] table[x=Nr,y=Runtime,col sep=comma] {data/FairSquareRuntimeProbBaBSBLongestEdge10CROWN.csv};
      \addlegendentry{\BaBSBLongestEdge{10}}



      \addplot [Colors-A,very thick,mark=x] table[x=Nr,y=Runtime,col sep=comma] {data/FairSquareRuntimeProbBaBSBCROWN.csv};
      \addlegendentry{\BaBSB{}}

    \end{semilogyaxis}
  \end{tikzpicture}
  \tikzexternaldisable%
  \caption{%
    \Split{} heuristic comparison on the FairSquare benchmark.
    The timeout for the FairSquare benchmark is 15min.
  }\label{fig:split-compare}
\end{figure}

\begin{figure}
  \centering
  \tikzexternalenable%
  \tikzsetnextfilename{ablation-figure-compute-bounds}
  \begin{tikzpicture}
    \begin{semilogyaxis}[
      width=.75\textwidth,
      height=.25\textwidth,
      scale only axis,
      ymax=900,
      xmin=1, xmax=18,
      xlabel={\# Solved Instances}, 
      ylabel={Runtime (s)},
      ytick={1,10,100,900},
      yticklabels={$10^{0}$,$10^{1}$,$10^{2}$,Timeout},
      ylabel shift=-22pt,
      legend pos=north west,
      legend cell align=left,
    ]


      \addplot [Colors-E,very thick,mark=x] table[x=Nr,y=Runtime,col sep=comma] {data/FairSquareRuntimeProbBaBSBIBP.csv};
      \addlegendentry{\IntervalArithmetic{}}

      \addplot [Colors-A,very thick,mark=x] table[x=Nr,y=Runtime,col sep=comma] {data/FairSquareRuntimeProbBaBSBCROWN.csv};
      \addlegendentry{\CROWN{}}

    \end{semilogyaxis}
  \end{tikzpicture}
  \tikzexternaldisable%
  \caption{%
    \ComputeBounds{} procedure comparison on the FairSquare benchmark.
    The timeout for the FairSquare benchmark is 15min.
  }\label{fig:compute-bounds-compare}
\end{figure}


\end{document}